\title{MAPTree: Beating ``Optimal'' Decision Trees with Bayesian Decision Trees}
\author {
    Colin Sullivan\equalcontrib \textsuperscript{\rm 1},
    Mo Tiwari\equalcontrib \textsuperscript{\rm 1},
    Sebastian Thrun\textsuperscript{\rm 1}
}
\begin{document}

\theoremstyle{plain}
\newtheorem{theorem}{Theorem}
\newtheorem{proposition}[theorem]{Proposition}
\newtheorem{lemma}[theorem]{Lemma}
\newtheorem{corollary}[theorem]{Corollary}
\theoremstyle{definition}
\newtheorem{definition}[theorem]{Definition}
\newtheorem{assumption}[theorem]{Assumption}
\theoremstyle{remark}
\newtheorem{remark}[theorem]{Remark}
\newcommand\maybegeq{\stackrel{?}{\geq}}

\newcommand{\algnamenospace}{MAPTree}
\newcommand{\algname}{MAPTree }

\maketitle

\begin{abstract}

Decision trees remain one of the most popular machine learning models today, largely due to their out-of-the-box performance and interpretability.
In this work, we present a Bayesian approach to decision tree induction via maximum a posteriori inference of a posterior distribution over trees.
We first demonstrate a connection between maximum a posteriori inference of decision trees and AND/OR search.
Using this connection, we propose an AND/OR search algorithm, dubbed \algnamenospace, which is able to recover the maximum a posteriori tree.
Lastly, we demonstrate the empirical performance of the maximum a posteriori tree both on synthetic data and in real world settings.
On 16 real world datasets, \algname either outperforms baselines or demonstrates comparable performance but with much smaller trees.
On a synthetic dataset, \algname also demonstrates greater robustness to noise and better generalization than existing approaches.
Finally, \algname recovers the maxiumum a posteriori tree faster than existing sampling approaches and, in contrast with those algorithms, is able to provide a certificate of optimality.
The code for our experiments is available at \texttt{https://github.com/ThrunGroup/maptree}.

\end{abstract}

\section{Introduction}
\label{sec:introduction}


Decision trees are amongst the most widely used machine learning models today due to their empirical performance, generality, and interpretability.
A decision tree is a binary tree in which each internal node corresponds to an \texttt{if/then/else} comparison on a feature value; a label for a datapoint is produced by determining the corresponding leaf node into which it falls.
The predicted label is usually the majority vote (respectively, mean) of the label of training datapoints at the leaf node in classification (respectively, regression).

Despite recent advances in neural networks, decision trees remain a popular choice amongst machine learning practitioners.
Decision trees form the backbone of more complex ensemble models such as Random Forest \cite{breimanRandomForests2001} and XGBoost \cite{chenXGBoostScalableTree2016}, which have been the leading models in many machine learning competitions and often outperform neural networks on tabular data \cite{grinsztajnWhyTreeBasedModels2022}.
Decision trees naturally work with complex data where the features can be of mixed data types, e.g., binary, categorical, or continuous.
Furthermore, decision trees are highly interpretable and the prediction-generating process can be inspected, which can be a necessity in domains such as law and healthcare.
Furthermore, inference in decision trees is highly efficient as it relies only on efficient feature value comparisons.
Given decision trees' popularity, an improvement upon existing decision tree approaches would have widespread impact.



\textbf{Contributions:} In this work, we:
\begin{itemize}
    \item Formalize a connection between maximum a posteriori inference of Bayesian Classification and Regression Trees (BCART) and AND/OR search problems,
    \item Propose an algorithm, dubbed \algnamenospace, for search on AND/OR graphs that recovers the maximum a posteriori tree of the BCART posterior over decision trees,
    \item Demonstrate that \algname is significantly faster than previous sampling-based approaches,
    \item Demonstrate that the tree recovered by \algname either a) outperforms current state-of-the-art algorithms in performance, or b) demonstrates comparable performance but with smaller trees, and
    \item Provide a heavily optimized C++ implementation that is also callable from Python for practitioners. 
\end{itemize}

\section{Related Work}
\label{sec:related-work}

In this work, we focus on the construction of individual decision trees.
We compare our proposed algorithm with four main classes of prior algorithms: greedy algorithms, ``Optimal'' Decision Trees (ODTs), ``Optimal'' Sparse Decision Trees (OSDTs), and sampling-based approaches.

The most popular method for constructing decision trees is a greedy approach that recursively splits nodes based on a heuristic such as Gini impurity or entropy (in classification) or mean-squared error (in regression) \cite{quinlanInductionDecisionTrees1986}.
However, individual decision trees constructed in this manner often overfit the training data; ensemble methods such as Random Forest and XGBoost attempt to ameliorate overfitting but are significantly more complex than a single decision tree \cite{breimanRandomForests2001, chenXGBoostScalableTree2016}.

So-called ``optimal'' decision trees reformulate the problem of decision tree induction as a global optimization problem, i.e., to find the tree that maximizes global objective function, such as training accuracy, of a given maximum depth \cite{bertsimasOptimalClassificationTrees2017, binoct, verhaegheLearningOptimalDecision2020, aglinLearningOptimalDecision2020}.
Though this problem is NP-Hard in general \cite{HR76}, existing approaches can find the global optimum of shallow trees (depth $\leq 5$) on medium-sized datasets with thousands of datapoints and tens of features.
The original ODT approaches were based on mixed integer programming or binary linear program formulations \cite{verhaegheLearningOptimalDecision2020, Nijssen2007MiningOD, bertsimasOptimalClassificationTrees2017, binoct}.
Other work attempts to improve upon these methods using caching branch-and-bound search \cite{aglinLearningOptimalDecision2020}, constraint programming with AND/OR search \cite{verhaegheLearningOptimalDecision2020}, or dynamic programming with bounds \cite{NEURIPS2022_fe248e22}.
ODTs have been shown to outperform their greedily constructed counterparts with smaller trees \cite{verhaegheLearningOptimalDecision2020, binoct} but still suffer from several drawbacks.
First, choosing the maximum depth hyperparameter is nontrivial, even with cross-validation, and the maximum depth cannot be set too large as the runtime of these algorithms scales exponentially with depth.
Furthermore, ODTs often suffer from overfitting, especially when the maximum depth is set too large.
Amongst ODT approaches, \citet{verhaegheLearningOptimalDecision2020} formulates the search for an optimal decision tree in terms of an AND/OR graph and is most similar to ours, but still suffers from the aforementioned drawbacks.
Additionally, many ODT algorithms exhibit poor anytime behavior \cite{10.1007/978-3-031-26419-1_27}.
Optimal sparse decision trees attempt to adapt ODT approaches to train smaller and sparser trees by incorporating a sparsity penalty in their objectives.
As a result, OSDTs are smaller and less prone to overfitting than ODTs \cite{huOptimalSparseDecision2019, linGeneralizedScalableOptimal2020}.
These approaches, however, often underfit the data \cite{huOptimalSparseDecision2019, linGeneralizedScalableOptimal2020}.

Another class of approaches, called Bayesian Classification and Regression Trees (BCART), introduce a posterior over tree structures given the data and sample trees from this posterior.
Initially, BCART methods were observed to generate better trees than greedy methods \cite{denisonBayesianCARTAlgorithm1998}.
Many variations to the BCART methodology were developed using sampling methods based on Markov-Chain Monte Carlo (MCMC), such as Metropolis-Hastings \cite{pratolaEfficientMetropolisHastings2016} and others \cite{geelsTaxicabSamplerMCMC2022, smc}.
These methods, however, often suffer from exponentially long mixing times in practice and become stuck in local minima \cite{kimMixingRatesBayesian2023}.
In one study, the posterior over trees was represented as a lattice over itemsets \cite{nijssenBayesOptimalClassification2008}.
This approach discovered the maximum a posteriori tree within the hypothesis space of decision trees.
However, this approach required enumerating and storing the entire space of decision trees and therefore placed stringent constraints on the search space of possible trees, based on leaf node support and maximum depth.
Our method utilises the same posterior over tree structures introduced by BCART.
In contrast with prior work, however, we are able to recover the provably maximum a posteriori tree from this posterior in the unconstrained setting.

\section{Preliminaries and Notation}
\label{sec:preliminaries}

In this paper, we focus on the binary classification task, though our techniques extend to multi-class classification and regression.
We also focus on binary datasets, as is common in the decision tree literature \cite{verhaegheLearningOptimalDecision2020, nijssenBayesOptimalClassification2008, Nijssen2007MiningOD} since many datasets can be binarized via bucketing, one-hot encoding, and other techniques.

\textbf{General notation:} We assume we are given a binary dataset $\mathcal{X} \in \{0, 1\}^{N \times F}$ with $N$ samples, $F$ features, and associated binary labels $\mathcal{Y} \in \{0, 1\}^{N}$. 
We let $[u] \coloneqq \{1, \ldots, u\}$, $\mathcal{I} \subseteq [N]$ the indices of a subsample of the dataset, and $(x_i, y_i)$ denote the $i$th sample and its label.
We define $\mathcal{X}|_\mathcal{I} \coloneqq \{x_i : i \in \mathcal{I}\} \subset \mathcal{X}$, $\mathcal{Y}|_\mathcal{I} \coloneqq \{y_i : i \in \mathcal{I}\} \subset \mathcal{Y}$, and $\mathcal{I}|_{f=k} \coloneqq \{i : i \in \mathcal{I}$ and $(x_i)_f = k\}$, for $k \in \{0, 1\}$.
Finally, we let $c^k(\mathcal{I})$ be the count of points in $\mathcal{I}$ with label $k \in \{0, 1\}$, i.e., $c^k(\mathcal{I}) = |\{i: i \in \mathcal{I} \text{ and } y_i = k\}|$ and $\mathcal{V}(\mathcal{I})$ be the set of nontrivial features splits of the samples in $\mathcal{I}$, i.e., the set of features such that neither $\mathcal{I}|_{f=0}$ nor $\mathcal{I}|_{f=1}$ is nonempty.

\textbf{Tree notation:} We let $T = \{n_1, n_2, \dots, n_{M+L}\}$ be a binary classification tree represented as a collection of its nodes and use $n$ to refer to a node in $T$, $m$ to refer to one of the $M$ internal nodes in $T$, and $l$ to refer to one of the $L$ leaf nodes in $T$.
Furthermore, we use $\mathcal{I}(n)$ to denote the indices of the samples in $\mathcal{X}$ that reach node $n$ in $T$, namely $\{i : x_i \in \text{space}(n)\}$, where space$(n)$ is the subset of feature space that reaches node $n$ in $T$.
We also use $c^{k}_l$ to denote the count of points assigned to leaf $l$ with label $k \in \{0, 1\}$ (i.e., $c^{k}_l = c^k(I(l))$), $T_\text{internal} = \{m_1, m_2, \dots, m_M\} \subset T$ to denote the set of internal nodes in tree $T$, and $T_\text{leaves} = \{l_1, l_2, \dots, l_L\} \subset T$ is the set of all leaf nodes in tree $T$.
Finally, we use $d(n)$ to denote the depth of node $n$ in $T$.

\subsection{AND/OR Graph Search}
We briefly recapitulate the concept of AND/OR graphs and a search algorithm for AND/OR graphs, AO*.
AND/OR graph search can be viewed as a generalization of the shortest path problem that allows nodes consisting of independent subproblems to be decomposed and solved separately.
Thus, a solution of an AND/OR graph is not a path but rather a subgraph $\mathcal{S}$ with cost, denoted \texttt{cost}$(\mathcal{S})$, equal to the sum across the costs of its edges.
AND/OR graphs contain two types of nodes: terminal nodes and nonterminal nodes. 
Nonterminal nodes can be further subdivided into AND nodes and OR nodes, with a special OR node designated as the \textit{root} or \textit{start} node $r$.
For a given AND/OR graph $\mathcal{G}$, a \textit{solution graph} $\mathcal{S}$ on an AND/OR graph is a connected subset of nodes of $\mathcal{G}$ in which:

\begin{enumerate}
    \item $r \in \mathcal{S}$,
    \item for every AND node $a \in \mathcal{S}$, \textit{all} the immediate children of $a$ are also in $\mathcal{S}$, and 
    \item for every non-terminal OR node $o \in \mathcal{S}$ \textit{exactly one} of $o$'s children is also in $\mathcal{S}$.
\end{enumerate}

Intuitively, the children of an AND node $a$ represent subtasks that must all be solved for $a$ to be satisfied (e.g., simultaneous prerequisites), and the children of an OR node $o$ represent mutually exclusive satisfying choices.

\begin{figure}
    \centering
    \includegraphics[width=\linewidth]{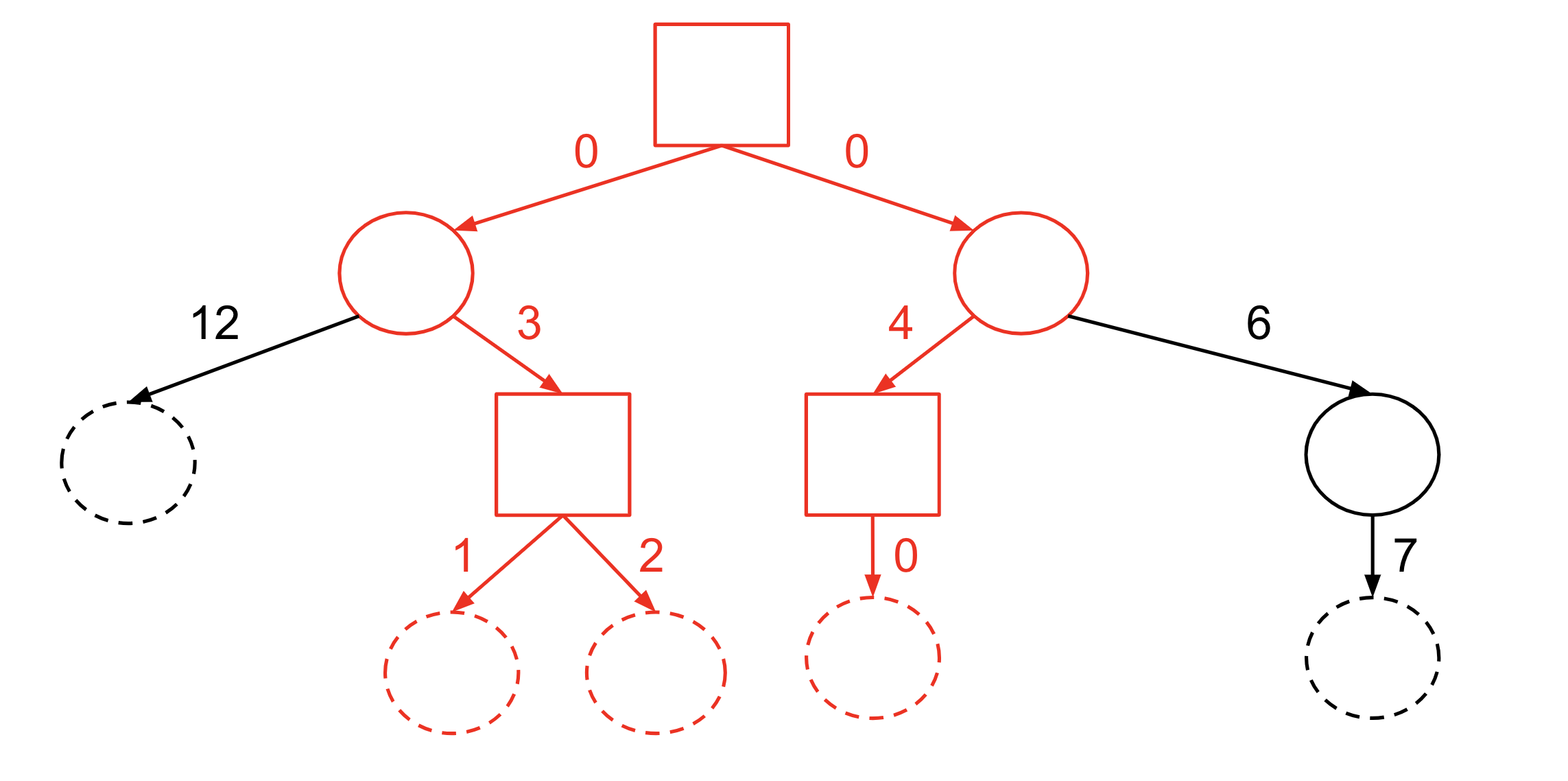}
    \caption{An example (general) AND/OR graph, with AND nodes drawn as squares, and OR nodes drawn as solid circles, and terminal nodes drawn as dashed circles. The minimal cost solution is highlighted in red and has cost $0 + 0 + 3 + 4 + 1 + 2 = 10$. This diagram demonstrates an AND/OR graph where the root node $r$ is an AND node; in \algnamenospace, the root node is an OR node.}
    \label{fig:andor}
\end{figure}

One of the most popular AND/OR graph search algorithms is AO* \cite{mahantiGraphHeuristicSearch1985, AdmissibleHeuristicSearch1983}.
The AO* algorithm explores potential paths in an AND/OR graph in a best-first fashion, guided by a heuristic.
When a new node is explored, its children are revealed and the cost for that node and all of its ancestors is updated; the search then continues.
This process is repeated until the the root node is marked as solved, indicating that no immediately accessible nodes could lead to an increase in heuristic value.
The AO* algorithm is guaranteed to find the minimal cost solution if the heuristic is \textit{admissible}, i.e., the heuristic estimate of cost is always less than or equal to the actual cost of a node.
For more details on the AO* algorithm, we refer the reader to \cite{mahantiGraphHeuristicSearch1985}.
An example AND/OR graph is given in Figure \ref{fig:andor} with its minimal cost solution shown in red.

\textbf{Additional AND/OR graph notation:} In addition to the notation defined above, we use $t$ to refer to a terminal node. When searching over an AND/OR graph, we use $\mathcal{G}$ to refer to the implicit (entire) AND/OR graph and $\mathcal{G}' \subset \mathcal{G}$ to explicit (explored) AND/OR graph, as in prior work.

\subsection{Bayesian Classification and Regression Trees (BCART)}
\label{subsec:bcart}

Bayesian Decision Trees are a family of statistical models of decision trees introduced in \citet{chipmanBayesianCARTModel1998} and \citet{denisonBayesianCARTAlgorithm1998}.
A Bayesian Decision Tree (BDT) is a pair $(T, \Theta)$ where $T$ is a tree and $\Theta = (\theta_{l_1}, \theta_{l_2}, \dots, \theta_{l_L})$ parameterizes the independent probability distributions over labels in the leaf nodes of tree $T$. 
We are interested in the binary classification setting, where each $\theta_l$ parameterizes a Bernoulli distribution $\text{Ber}(\theta_l)$ with $\theta_l \in [0, 1]$.
We denote by $\text{Beta}(\rho^1, \rho^0)$ the Beta distribution with parameters $\rho^1, \rho^0 \in \mathbb{R}^+$ and by $B(c^1, c^0)$ the Beta function.

We note that a BDT's tree $T$ partitions the data such that the sample subsets $\mathcal{I}(l_1)$, $\mathcal{I}(l_2), \mathcal{I}(l_L)$ fall into leaves $l_1, l_2, \dots, l_L$.
Furthermore, a BDT defines a probability distribution over the respective labels occurring in their leaves: each label in leaf $l$ is sampled from Ber$(\theta_l)$).
Every BDT therefore induces a likelihood function, given in Theorem \ref{thm:bdt_likelihood}.

\begin{theorem}
    \label{thm:bdt_likelihood}
    The likelihood of a BDT $(T, \Theta)$ generating labels $\mathcal{Y}$ given features $\mathcal{X}$ is
    \begin{align}
        P(\mathcal{Y} | \mathcal{X}, T, \Theta) &= \prod_{l \in T_\text{leaves}} \prod_{i \in I(l)} \theta_l^{y_i}\left(1 - \theta_l\right)^{1 - y_i} \\
        &= \prod_{l \in T_\text{leaves}} \theta_l^{c^1_l}\left(1 - \theta_l\right)^{c^0_l}
    \end{align}
\end{theorem}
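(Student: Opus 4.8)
The plan is to unfold the joint likelihood into a product over individual datapoints using the conditional independence built into the BDT model, then regroup that product by leaf and collapse the exponents via the label counts. Concretely, I would proceed in three short steps: establish that the tree deterministically routes each sample to a unique leaf; apply the BDT generative assumption to factor the likelihood over samples; and finally reindex and simplify.

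First I would note that, because $T$ is a binary decision tree, the sets $\text{space}(l_1), \dots, \text{space}(l_L)$ partition feature space, so the index sets $\mathcal{I}(l_1), \dots, \mathcal{I}(l_L)$ partition $[N]$; in particular every sample $x_i$ reaches exactly one leaf, which I will call $l(i)$, and given $\mathcal{X}$ and $T$ this assignment is deterministic. Next I would invoke the generative definition of a BDT from Section~\ref{subsec:bcart}: each label falling in leaf $l$ is drawn independently from $\text{Ber}(\theta_l)$. Hence the labels $y_1, \dots, y_N$ are conditionally independent given $(\mathcal{X}, T, \Theta)$ with $y_i \mid x_i, T, \Theta \sim \text{Ber}(\theta_{l(i)})$, which gives
\begin{align*}
P(\mathcal{Y} \mid \mathcal{X}, T, \Theta) = \prod_{i=1}^{N} P(y_i \mid x_i, T, \Theta) = \prod_{i=1}^{N} \theta_{l(i)}^{\, y_i}\bigl(1 - \theta_{l(i)}\bigr)^{1 - y_i}.
\end{align*}
Reindexing this product according to the leaf partition of $[N]$ produces exactly the first displayed equality in the statement. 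For the second equality, I would use that within a fixed leaf $l$ we have $\sum_{i \in \mathcal{I}(l)} y_i = c^1_l$ and $\sum_{i \in \mathcal{I}(l)} (1 - y_i) = c^0_l$ by the definition of the label counts, so the inner product over $i \in \mathcal{I}(l)$ collapses to $\theta_l^{c^1_l}(1 - \theta_l)^{c^0_l}$.

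There is no genuine obstacle here: the identity is essentially a restatement of the modelling assumptions (deterministic routing through $T$ together with conditional independence of the leaf labels), so the bulk of the "proof" consists in making those assumptions explicit and then doing the bookkeeping of regrouping a finite product and summing indicator exponents. The one place to be slightly careful is the claim that the leaf index sets genuinely partition $[N]$ — i.e., that routing is total and single-valued — which follows from $T$ being a binary tree whose internal nodes test a single feature value.
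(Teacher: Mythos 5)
Your proposal is correct and follows essentially the same route as the paper's proof: factor the joint likelihood over samples via the conditional independence of labels given their leaf assignment, regroup the product by leaf using the partition $\mathcal{I}(l_1), \dots, \mathcal{I}(l_L)$ of $[N]$, and collapse the exponents using the counts $c^1_l$ and $c^0_l$. Your extra care in noting that routing is total and single-valued is a minor elaboration the paper leaves implicit, but the argument is the same.
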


The specific formulation of BCART also assumes a prior distribution over $\Theta$, i.e., that $\theta \sim \text{Beta}(\rho^1, \rho^0)$ for each $\theta \in \Theta$. 
With this assumption, we can derive the likelihood function $P(\mathcal{Y} | \mathcal{X}, T)$; see Theorem \ref{thm:tree_likelihood}.

\begin{theorem}
    \label{thm:tree_likelihood}
    Assume that each $\theta \sim \text{Beta}(\rho^1, \rho^0)$ for each $\theta \in \Theta$. 
    Then the likelihood of a tree $T$ generating labels $\mathcal{Y}$ given features $\mathcal{X}$ is
    \begin{align}
        \label{eqn:tree_likelihood}
        P(\mathcal{Y} | \mathcal{X}, T) = \prod_{l \in T_\text{leaves}} \frac{B(c^1_l + \rho^1, c^0_l + \rho^0)}{B(\rho^1, \rho^0)}
    \end{align}
\end{theorem}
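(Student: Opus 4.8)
The plan is to obtain $P(\mathcal{Y} \mid \mathcal{X}, T)$ from the conditional likelihood of Theorem~\ref{thm:bdt_likelihood} by marginalizing out the leaf parameters $\Theta$ against their prior. Using the assumption $\theta \sim \text{Beta}(\rho^1,\rho^0)$ for each $\theta \in \Theta$, I would write the prior as a product of independent Beta densities, one per leaf, and start from
\begin{align}
P(\mathcal{Y} \mid \mathcal{X}, T) = \int_{[0,1]^L} P(\mathcal{Y} \mid \mathcal{X}, T, \Theta)\, \prod_{l \in T_\text{leaves}} \frac{\theta_l^{\rho^1 - 1}(1 - \theta_l)^{\rho^0 - 1}}{B(\rho^1, \rho^0)}\, d\Theta .
\end{align}

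Next I would substitute the factored likelihood $\prod_{l \in T_\text{leaves}} \theta_l^{c^1_l}(1 - \theta_l)^{c^0_l}$ from Theorem~\ref{thm:bdt_likelihood}. Because both the likelihood and the prior factor over leaves, the integral over $[0,1]^L$ separates (by Fubini, with all integrands nonnegative and bounded) into a product of one-dimensional integrals:
\begin{align}
P(\mathcal{Y} \mid \mathcal{X}, T) = \prod_{l \in T_\text{leaves}} \frac{1}{B(\rho^1, \rho^0)} \int_0^1 \theta_l^{c^1_l + \rho^1 - 1}(1 - \theta_l)^{c^0_l + \rho^0 - 1}\, d\theta_l .
\end{align}
Each remaining integral is, by the integral definition of the Beta function, exactly $B(c^1_l + \rho^1, c^0_l + \rho^0)$, and dividing by $B(\rho^1,\rho^0)$ in each factor yields the claimed formula. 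All of these integrals are finite since each $c^k_l$ is a nonnegative integer and each $\rho^k > 0$.

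The argument is essentially Beta--Bernoulli conjugacy applied leaf by leaf, so I do not expect a genuine obstacle. The one step that warrants care is the separation of the $L$-dimensional integral into per-leaf integrals: this relies on the modeling assumptions that the $\theta_l$ are a priori independent and that the partition of $[N]$ induced by $T$ makes the likelihood of Theorem~\ref{thm:bdt_likelihood} a product whose $i$th factor depends only on the parameter of the unique leaf that $x_i$ reaches. Once that factorization is in hand, the remainder is a routine evaluation of Beta integrals.
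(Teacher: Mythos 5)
Your proposal is correct and follows essentially the same route as the paper's own proof: marginalize the factored likelihood of Theorem~\ref{thm:bdt_likelihood} against the independent Beta priors, separate the integral leaf by leaf, and evaluate each one-dimensional integral via the definition of the Beta function. If anything, your write-up is slightly more careful than the paper's, since you use the correctly normalized Beta density $\theta^{\rho^1-1}(1-\theta)^{\rho^0-1}/B(\rho^1,\rho^0)$, whereas the paper's displayed prior omits the $-1$ in the exponents (a typo that, taken literally, would not produce the stated $B(c^1_l+\rho^1, c^0_l+\rho^0)$).
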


Theorems \ref{thm:bdt_likelihood} and \ref{thm:tree_likelihood} are proven in the appendices; we note they have been observed in different forms in prior work \cite{chipmanBayesianCARTModel1998}.

For notational convenience, we define a leaf count likelihood function $\ell_\text{leaf}(c^1, c^0)$ for integers $c^1$ and $c^0$:
    
\begin{align}
\label{def:leaf-lik}
    \ell_\text{leaf}(c^1, c^0) \coloneqq \frac{B(c^1 + \rho^1, c^0 + \rho^0)}{B(\rho^1, \rho^0)}
\end{align}

and we can rewrite Equation \ref{eqn:tree_likelihood} as 

\begin{align}
\label{eqn:likelihood}
    P(\mathcal{Y} | \mathcal{X}, T) &= \prod_{l \in T_\text{leaves}} \ell_\text{leaf}(c^1_l, c^0_l)
\end{align}

In this work, we utilize the original prior over trees from \cite{chipmanBayesianCARTModel1998}, given in Definition \ref{def:bcart_prior}.

\begin{definition}
    \label{def:bcart_prior}
    The original BCART prior distribution over trees is 
    \begin{align*}
        P(T | \mathcal{X}) &= \left(\prod_{l \in T_\text{leaves}} p_\text{leaf}(d(l), \mathcal{I}(l)) \right) \times \\
        & \hspace{3em} \left(\prod_{m \in T_\text{internal}} p_\text{inner}(d(m), \mathcal{I}(m)) \right) 
    \end{align*}

    where 
    \begin{align}
    \label{def:prior-leaf}
        p_\text{leaf}(d, \mathcal{I}) &= 
        \begin{cases}
            1, & \mathcal{V}(\mathcal{I}) = \emptyset \\
            1 - p_\text{split}(d),  & \mathcal{V}(\mathcal{I}) \neq \emptyset
        \end{cases}
    \end{align}

    \begin{align}
    \label{def:prior-inner}
        p_\text{inner}(d, \mathcal{I}) &= 
        \begin{cases}
            0, & \mathcal{V}(\mathcal{I}) = \emptyset \\
            p_\text{split}(d) / |\mathcal{V}(\mathcal{I})|,  & \mathcal{V}(\mathcal{I}) \neq \emptyset
        \end{cases}
    \end{align}

and 
    
    \begin{align}
    \label{def:prior-split}
        p_\text{split}(d) &= \alpha(1 + d)^{-\beta}
    \end{align}
    
\end{definition}

Intuitively, $p_\text{split}(d)$ is the prior probability of any node splitting and is allocating equally amongst valid splits.
This choice of prior, $P(T | \mathcal{X})$, combined with the likelihood function in Equation \ref{eqn:likelihood} induces the posterior distribution over trees $P(T | \mathcal{Y}, \mathcal{X})$:

\begin{equation}
    P(T | \mathcal{Y}, \mathcal{X}) \propto P(\mathcal{Y} | \mathcal{X}, T) P(T | \mathcal{X})
\end{equation}

Throughout our analysis, we treat the dataset $(\mathcal{X}, \mathcal{Y})$ as fixed.

\begin{figure}
    \centering
    \includegraphics[width=\linewidth]{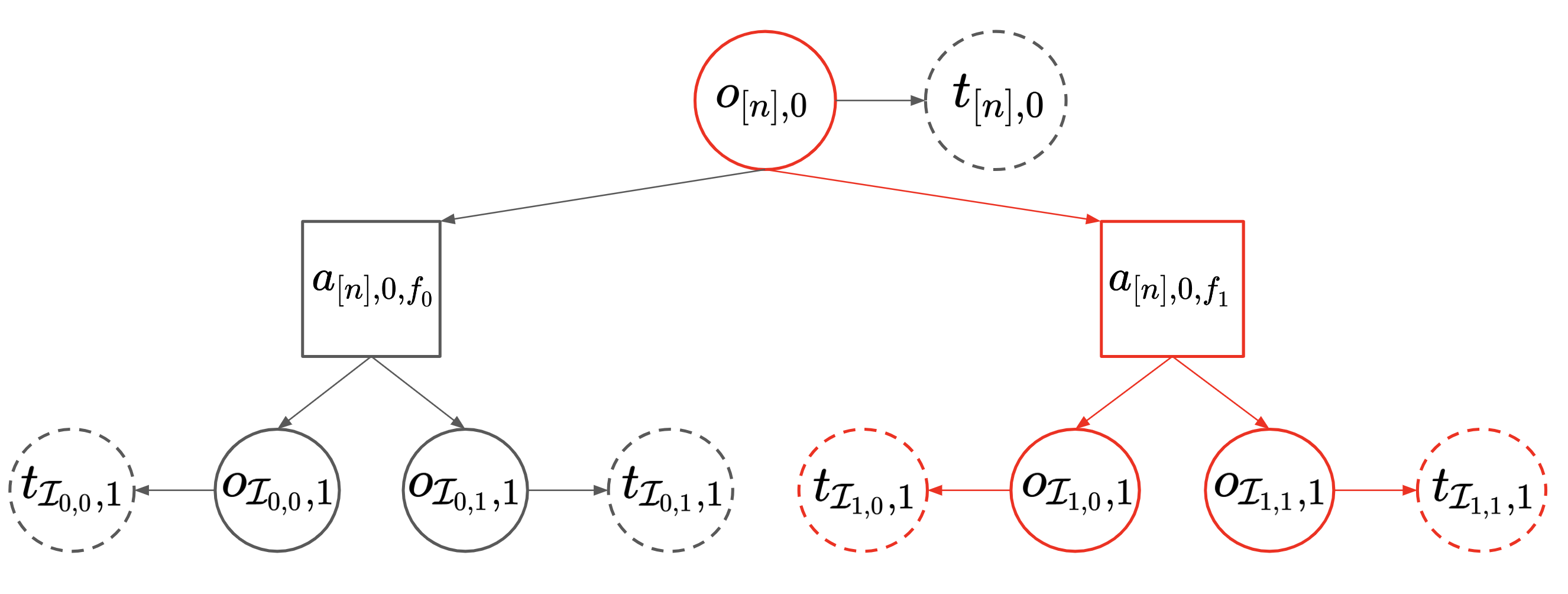}    
    \caption{Example of the defined BCART AND/OR graph $\mathcal{G}_{\mathcal{X}, \mathcal{Y}}$. OR nodes are represented as circles with solid borders, terminal nodes as circles with dashed borders, and AND nodes as squares. In this dataset, two feature splits are possible at the root node ($f_0$ and $f_1$) and no further splits are possible at deeper nodes. The best solution on this AND/OR graph is highlighted in red and corresponds with a stump which splits the root node, corresponding to the entire dataset, on feature $f_1$.}
    \label{fig:bcart_andor}
\end{figure}

\section{Connecting BCART with AND/OR Graphs}
\label{sec:bdts_and_ao_graphs}

Given a dataset $(\mathcal{X}, \mathcal{Y})$, we will now construct a special AND/OR graph $\mathcal{G}_{\mathcal{X}, \mathcal{Y}}$.
We will then show that a minimal cost solution graph on $\mathcal{G}_{\mathcal{X}, \mathcal{Y}}$ corresponds directly with the maximum a posteriori tree given our choice of prior distributions $P(T | \mathcal{X})$ and $P(\Theta)$.
Using this construction, the problem of finding the maximum a posteriori tree of our posterior is reduced to that of finding the minimum cost solution graph on $\mathcal{G}_{\mathcal{X}, \mathcal{Y}}$. 

\begin{definition}[BCART AND/OR graph $\mathcal{G}_{\mathcal{X}, \mathcal{Y}}$]
    \label{def:bcart-andor-graph}
    Given a dataset $(\mathcal{X}, \mathcal{Y})$, construct the AND/OR graph $\mathcal{G}_{\mathcal{X}, \mathcal{Y}}$ as follows:

    \begin{enumerate}
        \item For every possible subset $\mathcal{I} \subset [N]$ and depth $d \in \{0, \dots, F\}$, create an OR node $o_{\mathcal{I}, d}$.
        \item For every OR node $o_{\mathcal{I}, d}$ created in Step 1, create a terminal node $t_{\mathcal{I}, d}$ and draw an edge from $o_{\mathcal{I}, d}$ to $t_{\mathcal{I}, d}$ with cost $\texttt{cost}(o_{\mathcal{I}, d}, t_{\mathcal{I}, d}) = -\log p_\text{leaf}(d, \mathcal{I}) - \log \ell_\text{leaf}(c^1(\mathcal{I}), c^0(\mathcal{I}))$.
        \item For every OR node $o_{\mathcal{I}, d}$ created in Step 1, create $F$ AND nodes $a_{\mathcal{I}, d, 1}, \ldots, a_{\mathcal{I}, d, F}$ and drawn an edge from $o_{\mathcal{I}, d}$ to each $a_{\mathcal{I}, d, f}$ with cost $\texttt{cost}(o_{\mathcal{I}, d}, a_{\mathcal{I}, d, f}) = -\log p_\text{inner}(d)$.
        \item For every pair $a_{\mathcal{I}, d, f}$ and $o_{\mathcal{I}', d+1}$ where $\mathcal{I}|_{f=k} = \mathcal{I}'$ for some $f \in [F]$ and $k \in \{0, 1\}$, draw an edge from $a_{\mathcal{I}, d, f}$ to $o_{\mathcal{I'}, d+1}$ with cost $\texttt{cost}(a_{\mathcal{I}, d, f}, o_{\mathcal{I'}, d+1}) = 0$.
        \item Let $o_{[n], 0}$, the OR node representing all sample indices, be the unique root node $r$ of $\mathcal{G}_{\mathcal{X}, \mathcal{Y}}$.
        \item Remove all OR nodes representing empty subsets and their neighbors.
        \item Remove all nodes not connected to the root node $r$.
    \end{enumerate}
\end{definition}

We note that $\mathcal{G}_{\mathcal{X}, \mathcal{Y}}$ contains $F\times2^n$ OR Nodes, $F\times2^n$ terminal nodes (one for each OR Node), and  $F^2\times2^n$ AND nodes ($F$ for each OR Node) and so is finite.

Intuitively, each OR node $o_{\mathcal{I}, d}$ in $\mathcal{G}_{\mathcal{X}, \mathcal{Y}}$ corresponds with the subproblem of discovering a maximum a posteriori subtree starting from depth $d$ and over the subset of samples $\mathcal{I}$ from dataset $\mathcal{X}, \mathcal{Y}$.
Each AND node $a_{\mathcal{I}, d, f}$ then represents the same subproblem but given that a decision was already made to split on feature $f$ at the root node of this subtree.
A valid solution graph on $\mathcal{G}_{\mathcal{X}, \mathcal{Y}}$ corresponds with a binary classification tree $T$ on the dataset $(\mathcal{X}, \mathcal{Y})$ and the value of a solution is related to the posterior probability of $T$ given by $P(T | \mathcal{Y}, \mathcal{X})$.
We formalize these properties in Theorems \ref{thm:bijection} and \ref{thm:andor_propto_tree_ll}.

\begin{theorem}
\label{thm:bijection}
    Every solution graph on AND/OR graphs induces a unique binary decision tree.
    Furthermore, every decision tree can be represented as a unique solution graph under this correspondence.
    Thus, there is natural bijection between solution graphs on $\mathcal{G}_{\mathcal{X}, \mathcal{Y}}$ and binary decision trees.
\end{theorem}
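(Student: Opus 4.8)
The plan is to exhibit the bijection explicitly, as a pair of mutually inverse maps $\Psi$ (decision tree $\to$ solution graph) and $\Phi$ (solution graph $\to$ decision tree), and then to verify by structural induction that each lands in the intended set and that $\Phi\circ\Psi$ and $\Psi\circ\Phi$ are identities. Throughout I read ``binary decision tree'' as a tree on $(\mathcal{X},\mathcal{Y})$ in which every split is nontrivial on the samples reaching it, equivalently every node $n$ has $\mathcal{I}(n)\neq\emptyset$: this restriction is forced, since Steps 6--7 of Definition~\ref{def:bcart-andor-graph} delete exactly the OR nodes of empty subsets and their AND-node neighbors, so trees with redundant splits cannot arise from any solution graph.

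First I would isolate the structural fact on which the whole correspondence rests: for such a tree $T$, the assignment $n\mapsto(\mathcal{I}(n),d(n))$ is injective on the nodes of $T$, and $d(n)\le F$ for all $n$. The depth bound holds because along any root-to-node path no feature can be split on twice (a repeated split would be trivial). Injectivity holds because two distinct nodes at equal depth diverge at a common ancestor splitting on some feature $g$, which puts one node's region inside $\{g=0\}$ and the other's inside $\{g=1\}$; equal sample sets would then lie in $\{(x_i)_g=0\}\cap\{(x_i)_g=1\}=\emptyset$, contradicting non-triviality. I expect this lemma to be the main obstacle: without it, $\Psi$ could send two different tree nodes to the same OR node $o_{\mathcal{I},d}$ and the ``unique solution graph'' claim would fail, whereas with it the remainder is bookkeeping.

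Using the lemma, define $\Psi(T)$ to contain, for each node $n$ of $T$, the OR node $o_{\mathcal{I}(n),d(n)}$, together with $t_{\mathcal{I}(n),d(n)}$ if $n$ is a leaf, or $a_{\mathcal{I}(n),d(n),f}$ if $n$ is an internal node splitting on feature $f$. Each such node survives the pruning of Definition~\ref{def:bcart-andor-graph} (nonempty subset, depth $\le F$, connected to $r$ through the tree structure), and the three solution-graph axioms hold: $r=o_{[N],0}\in\Psi(T)$; each included AND node $a_{\mathcal{I}(n),d(n),f}$ has OR-children $o_{\mathcal{I}(n)|_{f=0},d(n)+1}$ and $o_{\mathcal{I}(n)|_{f=1},d(n)+1}$, i.e.\ the OR nodes attached to $n$'s two children, so both lie in $\Psi(T)$; and a non-terminal OR node $o_{\mathcal{I}(n),d(n)}$ has exactly one child in $\Psi(T)$, the terminal if $n$ is a leaf and the AND node $a_{\mathcal{I}(n),d(n),f}$ if $n$ splits on $f$. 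For the reverse direction, define $\Phi(\mathcal{S})$ by recursion from $r$, which is well-founded because depth strictly increases along every $o\!\to\!a\!\to\!o$ step and is bounded by $F$. At $o_{\mathcal{I},d}\in\mathcal{S}$ with unique chosen child $c$: if $c=t_{\mathcal{I},d}$, output a leaf holding $\mathcal{I}$; if $c=a_{\mathcal{I},d,f}$, output an internal node splitting on $f$ whose subtrees are $\Phi$ evaluated at $o_{\mathcal{I}|_{f=0},d+1}$ and $o_{\mathcal{I}|_{f=1},d+1}$ (both in $\mathcal{S}$ by the AND axiom). The output is a legitimate nontrivial-split tree, since $f$ must be a nontrivial split of $\mathcal{I}$ (otherwise $a_{\mathcal{I},d,f}$ would have been pruned together with its empty OR-child), and a routine induction shows $\mathcal{I}(n)$ in $\Phi(\mathcal{S})$ equals the subset index of the OR node that produced $n$.

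It then remains to check the two compositions. For $\Phi\circ\Psi=\mathrm{id}$, induct on $T$: at $o_{\mathcal{I}(n),d(n)}$ the chosen child in $\Psi(T)$ is the terminal exactly when $n$ is a leaf and is $a_{\mathcal{I}(n),d(n),f}$ exactly when $n$ splits on $f$, so $\Phi$ reproduces $n$ and its subtrees faithfully, the injectivity lemma guaranteeing $\Psi$ never collapsed $n$ with another node. For $\Psi\circ\Phi=\mathrm{id}$, observe that the recursion computing $\Phi(\mathcal{S})$ visits precisely one OR node together with its chosen child and, at AND choices, both OR-children; by the connectedness requirement in the definition of a solution graph, together with axioms (2)--(3) and the depth-increase well-ordering, this traversal reaches \emph{every} node of $\mathcal{S}$ and no others. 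Hence the node set of $\mathcal{S}$ coincides with the one $\Psi$ reconstructs from $T=\Phi(\mathcal{S})$, giving $\Psi(\Phi(\mathcal{S}))=\mathcal{S}$ and completing the bijection.
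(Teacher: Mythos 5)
Your proposal is correct and follows the same overall strategy as the paper's proof: explicitly construct the map from trees to solution graphs, verify the three solution-graph axioms, construct the inverse map by recursion from the root, and check that the two constructions compose to the identity. The genuine difference is that you isolate and prove a well-definedness lemma that the paper's proof leaves implicit: that $n \mapsto (\mathcal{I}(n), d(n))$ is injective on the nodes of a nontrivial-split tree (via the common-ancestor argument) and that $d(n) \le F$ (via no-repeated-feature along a path). These facts are genuinely needed --- without injectivity, two distinct tree nodes could collapse onto the same OR node $o_{\mathcal{I},d}$ and the ``unique solution graph'' claim would be unjustified, and without the depth bound the image of a deep tree node would fall outside the graph of Definition~\ref{def:bcart-andor-graph}, which only instantiates depths $0,\dots,F$. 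You are also more explicit that the correspondence only makes sense on trees with nontrivial splits (forced by the pruning in Steps 6--7) and about why the recursion defining $\Phi$ terminates and visits all of $\mathcal{S}$. So your argument buys rigor at the points where the paper's proof is informal, at the cost of some extra length; the paper's version is the same construction with these verifications taken as evident.
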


\begin{theorem}
\label{thm:andor_propto_tree_ll}
     Under the natural bijection described in Theorem \ref{thm:bijection}, given a solution graph $\mathcal{S}$ and its corresponding tree $T$, we have that $\texttt{cost}(\mathcal{S}) = -\log P(T, \mathcal{Y} | \mathcal{X})$. 
     Therefore the minimal cost solution over $\mathcal{G}_{\mathcal{X}, \mathcal{Y}}$ corresponds with a maximum a posteriori tree.
\end{theorem}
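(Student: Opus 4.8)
The plan is to evaluate $\texttt{cost}(\mathcal{S})$ by summing the costs of the edges of $\mathcal{S}$ and then to recognize the result as $-\log\big(P(\mathcal{Y}\mid\mathcal{X},T)\,P(T\mid\mathcal{X})\big)$ using Equation \ref{eqn:likelihood} and Definition \ref{def:bcart_prior}. First I would make the correspondence underlying Theorem \ref{thm:bijection} concrete at the level of nodes: the root OR node $o_{[N],0}$ lies in $\mathcal{S}$, and every OR node $o_{\mathcal{I},d}\in\mathcal{S}$ selects exactly one child, which is either (i) the terminal node $t_{\mathcal{I},d}$, in which case $o_{\mathcal{I},d}$ corresponds to a leaf $l\in T_\text{leaves}$ with $\mathcal{I}(l)=\mathcal{I}$ and $d(l)=d$; or (ii) an AND node $a_{\mathcal{I},d,f}$ for some $f\in\mathcal{V}(\mathcal{I})$, whose two OR children $o_{\mathcal{I}|_{f=0},\,d+1}$ and $o_{\mathcal{I}|_{f=1},\,d+1}$ both lie in $\mathcal{S}$, in which case $o_{\mathcal{I},d}$ corresponds to an internal node $m\in T_\text{internal}$ that splits on feature $f$ with $\mathcal{I}(m)=\mathcal{I}$ and $d(m)=d$. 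Propagating subsets and depths from the root downward, this shows the OR nodes of $\mathcal{S}$ biject with $T_\text{leaves}\cup T_\text{internal}$ with matching sample subsets and depths, the terminal nodes of $\mathcal{S}$ biject with $T_\text{leaves}$, and the AND nodes of $\mathcal{S}$ biject with $T_\text{internal}$.

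Next I would group the edges of $\mathcal{S}$ by endpoint type and read their costs off Definition \ref{def:bcart-andor-graph}: each OR-to-terminal edge contributes $-\log p_\text{leaf}(d(l),\mathcal{I}(l))-\log\ell_\text{leaf}(c^1_l,c^0_l)$ for the corresponding leaf $l$; each OR-to-AND edge contributes $-\log p_\text{inner}(d(m),\mathcal{I}(m))$ for the corresponding internal node $m$; and each AND-to-OR edge contributes $0$. Summing over all edges of $\mathcal{S}$ gives
\begin{align*}
\texttt{cost}(\mathcal{S}) =\ & -\sum_{l\in T_\text{leaves}}\log p_\text{leaf}\big(d(l),\mathcal{I}(l)\big) - \sum_{m\in T_\text{internal}}\log p_\text{inner}\big(d(m),\mathcal{I}(m)\big) \\
& -\sum_{l\in T_\text{leaves}}\log\ell_\text{leaf}(c^1_l,c^0_l).
\end{align*}
By Definition \ref{def:bcart_prior} the first two sums together equal $-\log P(T\mid\mathcal{X})$, and by Equation \ref{eqn:likelihood} the last sum equals $-\log P(\mathcal{Y}\mid\mathcal{X},T)$, so $\texttt{cost}(\mathcal{S}) = -\log\big(P(\mathcal{Y}\mid\mathcal{X},T)P(T\mid\mathcal{X})\big) = -\log P(T,\mathcal{Y}\mid\mathcal{X})$.

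For the final claim, since $-\log$ is strictly decreasing and $P(T\mid\mathcal{Y},\mathcal{X}) = P(T,\mathcal{Y}\mid\mathcal{X})/P(\mathcal{Y}\mid\mathcal{X})$ with $P(\mathcal{Y}\mid\mathcal{X})$ independent of $T$, minimizing $\texttt{cost}$ over solution graphs is equivalent, via the bijection of Theorem \ref{thm:bijection}, to maximizing $P(T,\mathcal{Y}\mid\mathcal{X})$ and hence $P(T\mid\mathcal{Y},\mathcal{X})$ over trees; so a minimal-cost solution graph corresponds to a maximum a posteriori tree. The step needing the most care is the first one: I must use the specific structural bijection above rather than an abstract one, in particular verifying that the subset and depth attached to each OR node of $\mathcal{S}$ agree with $\mathcal{I}(\cdot)$ and $d(\cdot)$ of the corresponding tree node, and I should dispose of the degenerate case $\mathcal{V}(\mathcal{I})=\emptyset$ by noting that $o_{\mathcal{I},d}$ then has no surviving AND children in $\mathcal{G}_{\mathcal{X},\mathcal{Y}}$, which forces case (i), while $p_\text{leaf}(d,\mathcal{I})=1$ makes that term vanish consistently on both sides. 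Beyond that bookkeeping, the argument is a direct term-by-term identification and I do not expect a genuine obstacle.
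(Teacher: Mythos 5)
Your proposal is correct and follows essentially the same route as the paper's proof: decompose $\texttt{cost}(\mathcal{S})$ edge by edge, group terms by whether the OR node corresponds to a leaf or an internal node of $T$, and identify the two groups with $-\log P(T\mid\mathcal{X})$ and $-\log P(\mathcal{Y}\mid\mathcal{X},T)$ via Definition \ref{def:bcart_prior} and Equation \ref{eqn:likelihood}. If anything, your version is more careful than the paper's, which elides the node-level correspondence and uses ad hoc notation ($p_\text{stop}$, $p_\text{node}$) for the same bookkeeping.
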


The bijection constructed in Theorems \ref{thm:bijection} and \ref{thm:andor_propto_tree_ll} is depicted in Figure \ref{fig:bcart_andor_solution_to_tree}.
Due to space constraints, we defer a formal description of this bijection to Appendix \ref{app:proofs}.

\begin{figure}
    \centering
    \includegraphics[width=\linewidth]{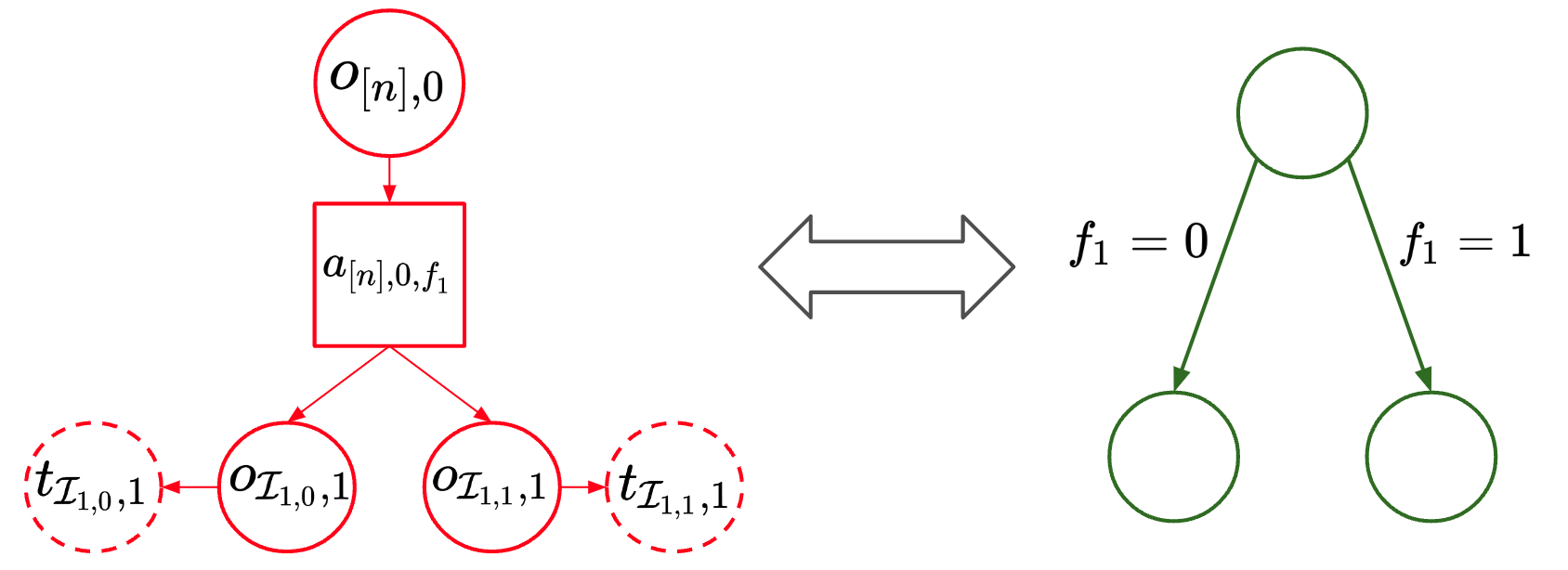}    
    \caption{Example map between an example solution of the AND/OR graph $\mathcal{G}_{\mathcal{X}, \mathcal{Y}}$ depicted in Figure \ref{fig:bcart_andor} and its corresponding binary classification tree. We see that the resulting tree is a stump which splits on feature $f_1$ at the root.}
    \label{fig:bcart_andor_solution_to_tree}
\end{figure}

\section{\algname}
\label{sec:methods}

\begin{algorithm}[tb]
\caption{\texttt{\algnamenospace}}
\label{alg:maptree}
\textbf{Input}: Root OR Node $r$, cost function $\texttt{cost}$, and heuristic function $h$ for AND/OR graph $\mathcal{G}$ \\
\textbf{Output}: Solution graph $\mathcal{S}$ 
\begin{algorithmic}[1] 
\STATE $\mathcal{G}' := \{r\}$
\STATE $\mathcal{E} := \emptyset$
\STATE $LB[r] := h(r)$
\STATE $UB[r] := \infty$
\WHILE{$LB[r] < UB[r]$ and \texttt{time remaining}}
    \STATE $o :=$ \texttt{findNodeToExpand}($r$, $\texttt{cost}$, $\mathcal{E}$, $LB$, $UB$)
    \STATE Let $t$ be the terminal node child of $o$
    \STATE $\mathcal{E} := \mathcal{E} \cup \{o\}$
    \STATE $\mathcal{G}' := \mathcal{G}' \cup \{t\}$
    \STATE Let $\{a_1, \dots, a_{F}\}$ be the AND node children of $o$
    \FORALL{$a_f \in \{a_1, \dots, a_{F}\}$}
        \STATE Let $\{o_{f=0}, o_{f=1}\}$ be the OR node children of $a_f$
        \STATE $LB[o_{f=0}] := h(o_{f=0})$
        \STATE $LB[o_{f=1}] := h(o_{f=1})$
        \STATE $v^{(lb)}_{f=0} = \texttt{cost}(a_f, o_{f=0}) + h(o_{f=0})$
        \STATE $v^{(lb)}_{f=1} = \texttt{cost}(a_f, o_{f=1}) + h(o_{f=1})$
        \STATE $LB[a_f] := v_{f=0} + v_{f=1}$
        \STATE $\mathcal{G}' := \mathcal{G}' \cup \{a, o_{f=0}, o_{f=1}\}$
    \ENDFOR
    \STATE \texttt{updateLowerBounds}($o$, $\texttt{cost}$, $LB$)
    \STATE \texttt{updateUpperBounds}($o$, $\texttt{cost}$, $UB$)
\ENDWHILE
\RETURN \texttt{getSolution}($r$, $\texttt{cost}$, $UB$)
\end{algorithmic}
\end{algorithm}

\begin{algorithm}[tb]
\caption{\texttt{getSolution}}
\label{alg:getSolution}
\textbf{Input}: ORNode $o$, cost function $\texttt{cost}$, and upper bounds $UB$ \\
\textbf{Output}: Solution graph $\mathcal{S}$
\begin{algorithmic}[1] 
\STATE Let $\{a_1, \dots, a_{F}\}$ be the AND node children of $o$
\STATE Let $t$ be the terminal node child of $o$
\STATE $a_{f^*} := \arg\min_{c \in \{a_1, \dots, a_{F}\}} \left(\texttt{cost}(o, c) + UB[c]\right)$
\STATE $\mathcal{S} := \{o\}$
\IF{$\texttt{cost}(o, t) + UB[t] \leq \texttt{cost}(o, a_{f^*}) + UB[a_{f^*}]$}
    \STATE $\mathcal{S} := \mathcal{S} \cup \{t\}$
\ELSE
    \STATE Let $o_{f^*=0}, o_{f^*=1}$ be the children of $a_{f^*}$
    \STATE $\mathcal{S} := \mathcal{S} \cup \{a_{f^*}\}$
    \STATE $\mathcal{S}_0 :=$ \texttt{getSolution}($o_{f^*=0}$)
    \STATE $\mathcal{S}_1 :=$ \texttt{getSolution}($o_{f^*=1}$)
    \STATE $\mathcal{S} := \mathcal{S} \cup \mathcal{S}_0 \cup \mathcal{S}_1$
\ENDIF
\RETURN $\mathcal{S}$
\end{algorithmic}
\end{algorithm}

\begin{algorithm}[tb]
\caption{\texttt{findNodeToExpand}}
\label{alg:findNodeToExpand}
\textbf{Input}: Root node $r$, cost function $\texttt{cost}$, set of expanded nodes $\mathcal{E}$, lower bounds $LB$ and upper bounds $UB$ \\
\textbf{Output}: ORNode $o$
\begin{algorithmic}[1] 
\STATE $o := r$
\WHILE{$o \in \mathcal{E}$}
    \STATE Let $\{a_1, \dots, a_{F}\}$ be the children of $o$
    \STATE $a^* := \arg\min_{c \in \{a_1, \dots, a_{F}\}} \left(\texttt{cost}(o, c) + LB[c]\right)$
    \STATE Let $o^*_0, o^*_1$ be the children of $a^*$
    \IF{$UB[o^*_0]$ - $LB[o^*_0] > UB[o^*_1]$ - $LB[o^*_1]$}
        \STATE $o := o^*_0$
    \ELSE
        \STATE $o := o^*_1$
    \ENDIF
\ENDWHILE
\RETURN $o$
\end{algorithmic}
\end{algorithm}

\begin{algorithm}[tb]
\caption{\texttt{updateLowerBounds}}
\label{alg:updateLowerBounds}
\textbf{Input}: ORNode $l$, cost function $\texttt{cost}$, lower bounds $LB$
\begin{algorithmic}[1] 
\STATE $\mathcal{V} = \{l\}$
\WHILE{$|V| > 0$}
    \STATE Remove a node $o$ from $\mathcal{V}$ with maximal depth
    \STATE Let $\{a_1, \dots, a_{F}\}$ be the AND node children of $o$
    \STATE Let $t$ be the terminal node child of $o$
    \STATE $v^{(lb)}_\text{split} = \min_{c \in \{ a_1, \dots, a_{F} \}} \left(\texttt{cost}(o, c) + LB[c]\right)$
    \STATE $v^{(lb)} = \min \{ v^{(lb)}_\text{split}, \texttt{cost}(o, t) \}$
    \IF {$v^{(lb)} > LB[o]$}
        \STATE $LB[o] := v^{(lb)}$
        \STATE Add all parents of $o$ to $\mathcal{V}$
    \ENDIF
\ENDWHILE
\end{algorithmic}
\end{algorithm}

\begin{algorithm}[tb]
\caption{\texttt{updateUpperBounds}}
\label{alg:updateUpperBounds}
\textbf{Input}: ORNode $l$, cost function $\texttt{cost}$, upper bounds $UB$
\begin{algorithmic}[1] 
\STATE $\mathcal{V} = \{l\}$
\WHILE{$|V| > 0$}
    \STATE Remove a node $o$ from $\mathcal{V}$ with maximal depth
    \STATE Let $\{a_1, \dots, a_{F}\}$ be the AND node children of $o$
    \STATE Let $t$ be the terminal node child of $o$
    \STATE $v^{(ub)}_\text{split} = \min_{c \in \{ a_1, \dots, a_{F} \}} \left(\texttt{cost}(o, c) + UB[c]\right)$
    \STATE $v^{(ub)} = \min \{ v^{(ub)}_\text{split}, \texttt{cost}(o, t) \}$
    \IF {$v^{(ub)} < UB[o]$}
        \STATE $UB[o] := v^{(ub)}$
        \STATE Add all parents of $o$ to $\mathcal{V}$
    \ENDIF
\ENDWHILE
\end{algorithmic}
\end{algorithm}

Theorems \ref{thm:bijection} and \ref{thm:andor_propto_tree_ll} imply that it is sufficient to find the minimum cost solution graph on $\mathcal{G}_{\mathcal{X}, \mathcal{Y}}$ to recover the MAP tree under the BCART posterior. 
In this section, we introduce \algnamenospace, an AND/OR search algorithm that finds a minimal cost solution on $\mathcal{G}_{\mathcal{X}, \mathcal{Y}}$. 
\algname is shown in Algorithm \ref{alg:maptree}.

A key component of \algname is the Perfect Split Heuristic $h$ that guides the search, presented in Definition \ref{def:heuristic}.
\begin{definition}[Perfect Split Heuristic]
\label{def:heuristic}
    For OR node $o_{\mathcal{I}, d}$ with terminal node child $t_{\mathcal{I}, d}$, let
    \begin{align}
        h(o_{\mathcal{I}, d}) &= -\max \{ \\
        &\log \ell_\text{leaf}(c^1(\mathcal{I}), c^0(\mathcal{I})), \\
        &\log p_\text{split}(d, \mathcal{I}) \\
        & + \log \ell_\text{leaf}(c^1(\mathcal{I}), 0) \\
        & + \log \ell_\text{leaf}(0, c^0(\mathcal{I}))\}
    \end{align}
    and for AND node $a_{\mathcal{I}, d, f}$ with OR node children $o_{\mathcal{I}|_{f=0}, d + 1}$ and $o_{\mathcal{I}|_{f=1}, d + 1}$, let
    \begin{align}
        h(a_{\mathcal{I}, d, f}) &= h(o_{\mathcal{I}|_{f=0}, d + 1}) + h(o_{\mathcal{I}|_{f=1}, d + 1})
    \end{align}
\end{definition}

Intuitively, the Perfect Split Heuristic describes the negative log posterior probability of the best potential subtree rooted at the given OR node $o_{\mathcal{I}, d}$: one that perfectly classifies the data in a single additional split.
The heuristic guides the search away from subproblems that are too deep or for which the labels have already been poorly divided.
We prove that this heuristic is a lower bound (admissible) and consistent in later sections.

\subsection{Analysis of \algname}

We now introduce several key properties of \algnamenospace. 
In particular, we show that (1) the Perfect Split Heuristic is consistent and therefore also admissible, (2) \algname finds the maximum a posteriori tree of the BCART posterior upon completion, and (3) upon early termination, \algname returns the minimum cost solution within the explored explicit graph $\mathcal{G}'$.
Theorems \ref{thm:consistency_of_h} - \ref{thm:anytime} and Corollary \ref{cor:recover_map} are proven in Appendix \ref{app:proofs}.

\begin{theorem}[Consistency of the Perfect Split Heuristic]
\label{thm:consistency_of_h}
    The Perfect Split Heuristic in Definition \ref{def:heuristic} is consistent, i.e., for any OR node $o$ with children $\{t, a_1, \dots, a_F\}$:
    \begin{align}
        h(o) &\leq \min_{c \in \{t, a_1, \dots, a_F\}} \texttt{cost}(o, c) + h(c)
    \end{align}
    and for any AND node $a$ with children $\{o_0, o_1\}$:
    \begin{align}
        h(a) &\leq \sum_{c \in \{o_0, o_1\}} \texttt{cost}(a, c) + h(c)
    \end{align}
\end{theorem}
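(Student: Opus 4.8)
The plan is to verify the two displayed inequalities separately; the AND-node case is immediate, and the OR-node case carries the content. For an AND node $a = a_{\mathcal I, d, f}$ with OR children $o_0 = o_{\mathcal I|_{f=0},\, d+1}$ and $o_1 = o_{\mathcal I|_{f=1},\, d+1}$, both edges $a\to o_k$ have cost $0$ by Definition \ref{def:bcart-andor-graph}, so $\sum_{c\in\{o_0,o_1\}}\big(\texttt{cost}(a,c)+h(c)\big)=h(o_0)+h(o_1)=h(a)$ by Definition \ref{def:heuristic}; equality holds, so the inequality does. (If $f\notin\mathcal V(\mathcal I)$, one OR-child is the pruned empty-subset node, which we read as contributing $h=0$; consistently, the OR-node formula of Definition \ref{def:heuristic} also evaluates to $0$ at the empty subset since $\ell_\text{leaf}(0,0)=1$ and $p_\text{split}\le 1$, and the argument below handles this case verbatim.)

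For an OR node $o = o_{\mathcal I, d}$, write $n_k := c^k(\mathcal I)$; I would verify $h(o)\le\texttt{cost}(o,c)+h(c)$ against each child $c$ and then take the minimum over $c$. The terminal child $t$ is easy: $h(o)\le-\log\ell_\text{leaf}(n_1,n_0)$ (the leaf branch of the outer $\max$ in Definition \ref{def:heuristic}), and $-\log\ell_\text{leaf}(n_1,n_0)\le-\log p_\text{leaf}(d,\mathcal I)-\log\ell_\text{leaf}(n_1,n_0)=\texttt{cost}(o,t)+h(t)$ since $p_\text{leaf}(d,\mathcal I)\le 1$ and $h(t)=0$. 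The real work is the AND-child $a_f$ with $f\in\mathcal V(\mathcal I)$.

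The key lemma I would prove is that for \emph{every} OR node $o_{\mathcal I',d'}$,
\begin{equation}
\label{eq:crux-consistency}
h(o_{\mathcal I',d'}) \;\ge\; -\log\ell_\text{leaf}\!\big(c^1(\mathcal I'),0\big)\;-\;\log\ell_\text{leaf}\!\big(0,c^0(\mathcal I')\big).
\end{equation}
Expanding Definition \ref{def:heuristic}, \eqref{eq:crux-consistency} reduces to the two facts $p_\text{split}(d')\le 1$ (immediate, as $p_\text{split}$ is a probability) and $\ell_\text{leaf}(a,b)\le\ell_\text{leaf}(a,0)\,\ell_\text{leaf}(0,b)$ for all integers $a,b\ge 0$. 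I would also need the complementary super-multiplicativity $\ell_\text{leaf}(a+b,0)\ge\ell_\text{leaf}(a,0)\,\ell_\text{leaf}(b,0)$ and its label-$0$ mirror, together with $\ell_\text{leaf}(a,b)\le 1$. All of these follow cleanly from the Beta--Bernoulli marginal form $\ell_\text{leaf}(a,b)=\mathbb E_{\theta\sim\text{Beta}(\rho^1,\rho^0)}[\theta^{a}(1-\theta)^{b}]$ (Equation \eqref{def:leaf-lik} via the Beta integral): boundedness is $\mathbb E[\theta^a(1-\theta)^b]\le\mathbb E[1]=1$; since $\theta\mapsto\theta^a$ is nondecreasing and $\theta\mapsto(1-\theta)^b$ nonincreasing, the correlation inequality for monotone functions of a single random variable (couple an independent copy of $\theta$ and take expectations of a sign-definite product) gives $\mathbb E[\theta^a(1-\theta)^b]\le\mathbb E[\theta^a]\,\mathbb E[(1-\theta)^b]$, and applied instead to the two nondecreasing functions $\theta^a,\theta^b$ it gives $\mathbb E[\theta^{a+b}]\ge\mathbb E[\theta^a]\,\mathbb E[\theta^b]$. (Alternatively, expand $\ell_\text{leaf}(c,0)=\prod_{j=0}^{c-1}\tfrac{\rho^1+j}{\rho^1+\rho^0+j}$ and compare factor by factor.)

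Granting \eqref{eq:crux-consistency}, the rest is bookkeeping. By Definition \ref{def:heuristic}, $h(a_f)=h(o_{\mathcal I|_{f=0},d+1})+h(o_{\mathcal I|_{f=1},d+1})$; applying \eqref{eq:crux-consistency} to each summand and then super-multiplicativity to merge the class counts ($c^k(\mathcal I|_{f=0})+c^k(\mathcal I|_{f=1})=n_k$ for $k\in\{0,1\}$) gives $h(a_f)\ge-\log\ell_\text{leaf}(n_1,0)-\log\ell_\text{leaf}(0,n_0)$. The edge cost satisfies $\texttt{cost}(o,a_f)=-\log p_\text{inner}(d,\mathcal I)=-\log p_\text{split}(d)+\log|\mathcal V(\mathcal I)|\ge-\log p_\text{split}(d)$. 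Adding, $\texttt{cost}(o,a_f)+h(a_f)\ge-\log p_\text{split}(d)-\log\ell_\text{leaf}(n_1,0)-\log\ell_\text{leaf}(0,n_0)$, which is precisely $-B$, where $B$ is the second argument (and $A$ the first) inside the outer $\max$ defining $h(o_{\mathcal I,d})$; since $h(o)=-\max\{A,B\}\le -B$, we conclude $\texttt{cost}(o,a_f)+h(a_f)\ge h(o)$. With the terminal case this gives $h(o)\le\min_c\big(\texttt{cost}(o,c)+h(c)\big)$, completing the proof. I expect the main obstacle to be the two $\ell_\text{leaf}$ correlation inequalities, together with the observation that --- modulo the $p_\text{split}\le 1$ slack --- they reconstruct exactly the ``perfect-split'' branch of the heuristic; everything downstream of \eqref{eq:crux-consistency} is routine.
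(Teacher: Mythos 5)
Your proposal is correct, and its skeleton matches the paper's: the AND-node inequality is an identity because AND-to-OR edges cost zero, and the OR-node case rests on exactly the two $\ell_\text{leaf}$ facts the paper isolates as Lemmas \ref{lem:perfect-leaf-split-lb} and \ref{lem:perfect-leaf-no-further-split-lb}. You differ in two worthwhile ways. First, your inequality \eqref{eq:crux-consistency} --- a single uniform lower bound $h(o_{\mathcal I',d'}) \ge -\log\ell_\text{leaf}(c^1(\mathcal I'),0)-\log\ell_\text{leaf}(0,c^0(\mathcal I'))$ valid at \emph{every} OR node, regardless of which branch of the $\max$ is active --- is a cleaner organization than the paper's Case 1/Case 2 chains. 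The paper's version bounds $\texttt{cost}(o,a_f)+h(a_f)$ by separately chaining through the ``both children stop'' and ``both children split'' expressions, and does not explicitly address the mixed configurations in which one child of $a_f$ attains its $\max$ on the leaf branch and the other on the split branch; your lemma dominates all four configurations at once and so closes that bookkeeping gap. Second, you prove the two $\ell_\text{leaf}$ inequalities via the representation $\ell_\text{leaf}(a,b)=\mathbb E_{\theta\sim\mathrm{Beta}(\rho^1,\rho^0)}[\theta^a(1-\theta)^b]$ and Chebyshev's correlation inequality for monotone functions of a single random variable, where the paper telescopes Gamma-function ratios into products and compares factor by factor (your parenthetical alternative is essentially their argument). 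The probabilistic route is shorter and explains \emph{why} the inequalities hold; the paper's is more elementary and self-contained. Your handling of the degenerate child (empty subset, or $f\notin\mathcal V(\mathcal I)$ forcing infinite edge cost via $p_\text{inner}=0$) is also more explicit than the paper's. No gaps.
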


\begin{theorem}[Finiteness of \algnamenospace]
\label{thm:finiteness}
    Algorithm \ref{alg:maptree} always terminates.
\end{theorem}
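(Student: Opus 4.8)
The plan is to reduce termination to two claims: (i) the main \texttt{while} loop of Algorithm~\ref{alg:maptree} executes only finitely many times, and (ii) each subroutine it calls (\texttt{findNodeToExpand}, \texttt{updateLowerBounds}, \texttt{updateUpperBounds}, \texttt{getSolution}) itself terminates. Both lean on two structural facts already available: $\mathcal{G}_{\mathcal{X},\mathcal{Y}}$ is finite (noted after Definition~\ref{def:bcart-andor-graph}), and every OR node $o_{\mathcal{I},d}$ has depth $d \in \{0,\dots,F\}$, so root-to-leaf chains in the explicit graph $\mathcal{G}'$ have length at most $F$. (The \texttt{time remaining} guard makes the statement trivially true, so the content is really that the loop halts even without it.)

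For (i), I would argue that each pass through the main loop strictly enlarges the set $\mathcal{E}$ of expanded OR nodes: \texttt{findNodeToExpand} returns a node $o \notin \mathcal{E}$ (its inner loop exits exactly when this holds), and line~8 then inserts $o$ into $\mathcal{E}$; since $\mathcal{G}_{\mathcal{X},\mathcal{Y}}$ has finitely many OR nodes, there can be only finitely many iterations. This shifts the burden to showing that \texttt{findNodeToExpand} terminates. Its descent moves from an OR node $o_{\mathcal{I},d}\in\mathcal{E}$ to one of its AND children and then to an OR child $o_{\mathcal{I}',d+1}$, strictly increasing depth at each step; since depths are capped at $F$, and any OR node with $\mathcal{V}(\mathcal{I})=\emptyset$ (in particular every depth-$F$ node, since all features are then fixed) has $p_\text{inner}=0$, hence no splitting AND children and so is never expanded nor placed in $\mathcal{E}$, the descent meets a node outside $\mathcal{E}$ within at most $F$ steps and stops. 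The \texttt{for all} loop in the body ranges over the $F$ children of $o$, so it is finite as well.

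For (ii), the update routines are the real obstacle, since they run worklist loops whose termination is not syntactically evident. The key observation is that $LB[o]$ is only ever assigned either the fixed quantity $h(o)$ or a value of the form $\min_{c}\big(\texttt{cost}(o,c)+LB[c]\big)$; unfolding this recursion (depth at most $F$ in the DAG $\mathcal{G}_{\mathcal{X},\mathcal{Y}}$) shows $LB[o]$ always equals one of finitely many values, each a bounded nested $\min$/sum of the finitely many $h$- and $\texttt{cost}$-entries appearing in the graph. In \texttt{updateLowerBounds} a node's bound is rewritten only when it strictly increases (the guard on line~8), so each node's $LB$ strictly changes only finitely often; and a node is enqueued into $\mathcal{V}$ only when a child's $LB$ strictly increased (enqueueing on line~10 sits inside that guarded block). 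Hence the total number of enqueue events is finite, so the loop on $\mathcal{V}$ runs finitely many times; the same argument, with $UB$ strictly decreasing, handles \texttt{updateUpperBounds}. Finally \texttt{getSolution} recurses only on the OR nodes of the chosen solution subtree, of which there are finitely many (depth $\le F$), so it terminates. The only step needing genuine care is making the ``finitely many possible $LB$/$UB$ values'' claim precise and checking that boundary-depth OR nodes are treated consistently between the graph construction and the subroutines; the rest is bookkeeping.
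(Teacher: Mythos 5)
Your overall decomposition (the main loop runs finitely often because each iteration enlarges $\mathcal{E}$, plus separate termination arguments for each subroutine) matches the paper's strategy, and your treatment of the worklist loops in \texttt{updateLowerBounds}/\texttt{updateUpperBounds} via monotone bounds ranging over a finite value set is a genuine addition that the paper leaves implicit. However, there is a real gap in how you handle \texttt{findNodeToExpand}. You assert that any OR node with $\mathcal{V}(\mathcal{I})=\emptyset$ ``is never expanded nor placed in $\mathcal{E}$,'' but nothing in Algorithm \ref{alg:maptree} prevents this: \texttt{findNodeToExpand} can return such a node the first time the descent reaches it (it is not yet in $\mathcal{E}$ at that point), and line 8 of the main loop inserts whatever node is returned into $\mathcal{E}$ unconditionally. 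Once such a split-less node sits in $\mathcal{E}$, your depth argument no longer shows the descent exits at an unexpanded node: the descent could arrive at an expanded node with no AND children, and the $\arg\min$ on line 4 of \texttt{findNodeToExpand} would be taken over an empty set. Relatedly, you bound the number of iterations by the number of OR nodes but never show that the guard $LB[r] < UB[r]$ actually becomes false once everything reachable has been expanded; without that, ``finitely many successful expansions'' does not imply the \texttt{while} loop exits cleanly.

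The paper closes both holes with an invariant your proof is missing. Lemma \ref{lem:gap_lb_ub} shows that a positive gap $UB[o]-LB[o]$ certifies the existence of an unexpanded OR descendant (and, contrapositively, that the root gap closes once exploration is exhaustive, so the guard fails). Lemma \ref{lem:expands_node} shows that the descent in \texttt{findNodeToExpand} preserves a positive gap at every node it visits: the AND child minimizing $\texttt{cost}+LB$ inherits the parent's gap, and so does the OR child with the larger gap. An expanded node with a positive gap necessarily still has AND children, so the descent never dead-ends and must terminate at an unexpanded node whenever $LB[r] < UB[r]$. Replacing your ``never placed in $\mathcal{E}$'' claim with this gap-tracking argument would repair the proof; the remainder of your write-up then goes through.
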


\begin{theorem}[Correctness of \algnamenospace]
\label{thm:correctness}
    When Algorithm \ref{alg:maptree} does not terminate early due to the \texttt{time remaining} condition, it always outputs a minimal cost solution on $\mathcal{G}_{\mathcal{X}, \mathcal{Y}}$ upon completion.
\end{theorem}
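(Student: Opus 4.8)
This is a correctness statement for an AO*-style search, so I would follow the classical template: maintain two-sided bounds at every node of the explicit graph that sandwich the true optimum, and show that, absent early termination, the gap at the root has closed by the time the loop exits. For a node $n$ of $\mathcal{G}_{\mathcal{X},\mathcal{Y}}$ let $g^*(n)$ denote the cost of a minimal-cost solution subgraph rooted at $n$; this is well defined since $\mathcal{G}_{\mathcal{X},\mathcal{Y}}$ is finite, and it satisfies $g^*(t)=0$ for a terminal $t$, $g^*(a)=g^*(o_0)+g^*(o_1)$ for an AND node $a$ with children $o_0,o_1$ (the $\text{AND}\to\text{OR}$ edges cost $0$), and $g^*(o)=\min\{\texttt{cost}(o,t),\min_f \texttt{cost}(o,a_f)+g^*(a_f)\}$ for an OR node $o$. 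What must be produced is a solution graph of cost $g^*(r)$. The plan is to show that the returned $\mathcal{S}=\texttt{getSolution}(r,\texttt{cost},UB)$ is a valid solution graph on $\mathcal{G}_{\mathcal{X},\mathcal{Y}}$ with $\texttt{cost}(\mathcal{S})=UB[r]$, and that at loop exit $g^*(r)\ge LB[r]\ge UB[r]$; since $\texttt{cost}(\mathcal{S})\ge g^*(r)$ automatically (a feasible solution is at least as costly as the optimum), these force $\texttt{cost}(\mathcal{S})=g^*(r)$.

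\emph{Step 1: bound invariants.} I would prove by induction on the loop iterations that the algorithm maintains (i) $LB[n]$ non-decreasing in time with $LB[n]\le g^*(n)$ for all $n\in\mathcal{G}'$; (ii) $UB[n]$ non-increasing and equal to $+\infty$ or to the cost of an actual solution subgraph rooted at $n$ inside $\mathcal{G}'$, hence $UB[n]\ge g^*(n)$; and (iii) for every expanded OR node $o\in\mathcal{E}$ the Bellman identities $LB[o]=\min\{\texttt{cost}(o,t),\min_f \texttt{cost}(o,a_f)+LB[a_f]\}$ and $UB[o]=\min\{\texttt{cost}(o,t),\min_f \texttt{cost}(o,a_f)+UB[a_f]\}$ with $LB[a_f]=LB[o_{f=0}]+LB[o_{f=1}]$ and $UB[a_f]=UB[o_{f=0}]+UB[o_{f=1}]$. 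The base of (i) is admissibility, $h(n)\le g^*(n)$: since depth increases along every $\text{AND}\to\text{OR}$ edge and is capped at $F$, the graph is a DAG, so Theorem~\ref{thm:consistency_of_h} (consistency) yields admissibility by a routine reverse-topological induction. The inductive steps are exactly the updates performed by \texttt{updateLowerBounds} (Algorithm~\ref{alg:updateLowerBounds}) and \texttt{updateUpperBounds} (Algorithm~\ref{alg:updateUpperBounds}), which recompute each touched OR node's bound by the displayed right-hand side and then push the change to all parents; the edge costs involved are finite (the $p$'s and $\ell_\text{leaf}$'s that appear are strictly positive, and $p_\text{inner}=0$ only forces the terminal branch), so every quantity is well defined.

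\emph{Step 2: the extracted solution.} The main loop exits only when $LB[r]\ge UB[r]$ (we are told the \texttt{time remaining} clause does not fire), and $LB[r]<\infty$ throughout (it starts at the finite value $h(r)$ and is only ever raised to finite Bellman values), so $UB[r]<\infty$ at exit; since $UB[r]$ can leave $+\infty$ only after $r$ is expanded, $r$ is expanded and by invariant (iii) \texttt{getSolution} at $r$ finds a child attaining $UB[r]$. Descending, I would show \texttt{getSolution} (Algorithm~\ref{alg:getSolution}) recurses only into OR nodes with finite $UB$, which by (ii)/(iii) are expanded, so every recursive call is well defined, and it terminates because each step strictly increases depth, capped at $F$. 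By construction $\mathcal{S}$ contains $r$, contains exactly one child of each OR node it visits, and contains both OR children of each AND node it selects, so it meets the definition of a solution graph; and an induction over the recursion, using $\texttt{cost}(a_f,o_{f=k})=0$ and (iii), shows the subgraph of $\mathcal{S}$ rooted at any visited $o$ has cost $UB[o]$, so $\texttt{cost}(\mathcal{S})=UB[r]$.

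\emph{Step 3: conclusion and the hard part.} Chaining the pieces gives $g^*(r)\le\texttt{cost}(\mathcal{S})=UB[r]\le LB[r]\le g^*(r)$, so all are equal and $\mathcal{S}$ is a minimal-cost solution on $\mathcal{G}_{\mathcal{X},\mathcal{Y}}$ (termination without the \texttt{time remaining} clause being Theorem~\ref{thm:finiteness}, and the correspondence to a MAP tree being Theorem~\ref{thm:andor_propto_tree_ll}). I expect the main obstacle to be making invariants 1(ii)/(iii) airtight given the DAG structure of $\mathcal{G}_{\mathcal{X},\mathcal{Y}}$: an OR node can be reached through several distinct AND parents, so I must argue that each call to \texttt{updateLowerBounds}/\texttt{updateUpperBounds} visits \emph{all} ancestors whose bound is affected, not merely those on the branch \texttt{findNodeToExpand} descended, leaving the explicit graph at a fixed point of the Bellman operator restricted to expanded nodes, so that the bounds on which \texttt{getSolution} acts are never stale. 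A secondary but routine point is checking that the initialization of $LB$ on freshly revealed AND/OR children (lines 13--17 of Algorithm~\ref{alg:maptree}) agrees with $h$ there, so that the base case of invariant (i) is intact.
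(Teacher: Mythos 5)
Your proposal is correct and follows essentially the same route as the paper: your invariants (i)--(iii) are the paper's Lemma~\ref{lem:correct_lower_bound} (lower bounds are admissible, via Corollary~\ref{cor:admissibility_of_h}) and Lemma~\ref{lem:correctness_of_ub} (upper bounds track achievable partial solutions in $\mathcal{G}'$), your Step 2 is Lemma~\ref{lem:correctness_of_get_solution}, and your sandwich $g^*(r)\le\texttt{cost}(\mathcal{S})=UB[r]\le LB[r]\le g^*(r)$ is exactly the chaining the paper's one-line proof leaves implicit. Your flagged concerns (propagation of bound updates to \emph{all} AND parents in the DAG, and initialization of $LB$ on freshly revealed children agreeing with $h$) are real and are handled by the ``add all parents'' step in Algorithms~\ref{alg:updateLowerBounds}--\ref{alg:updateUpperBounds}; the paper addresses them only implicitly inside its inductive lemmas.
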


\begin{corollary}
\label{cor:recover_map}
    Consider the tree induced by the output of Algorithm \ref{alg:maptree} under the natural bijection described in Section \ref{sec:bdts_and_ao_graphs}.
    By Theorems \ref{thm:bijection} and \ref{thm:andor_propto_tree_ll}, this tree is the maximum a posteriori tree $\arg\max_T P(T|\mathcal{X},\mathcal{Y})$.
\end{corollary}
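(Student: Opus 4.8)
The plan is to simply chain together the three results already established, since Corollary~\ref{cor:recover_map} is a direct consequence of them. First I would invoke Theorem~\ref{thm:correctness}: under the (implicit) hypothesis that Algorithm~\ref{alg:maptree} runs to completion, i.e., does not halt on the \texttt{time remaining} condition, its output is a minimal cost solution graph $\mathcal{S}^\star$ on $\mathcal{G}_{\mathcal{X}, \mathcal{Y}}$, so that $\texttt{cost}(\mathcal{S}^\star) \le \texttt{cost}(\mathcal{S})$ for every solution graph $\mathcal{S}$ on $\mathcal{G}_{\mathcal{X}, \mathcal{Y}}$.

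Next I would use the bijection of Theorem~\ref{thm:bijection} to transport this optimality statement into the space of decision trees. Let $T^\star$ denote the unique tree corresponding to $\mathcal{S}^\star$, and let $T$ be an arbitrary decision tree with corresponding solution graph $\mathcal{S}_T$. By Theorem~\ref{thm:andor_propto_tree_ll}, $\texttt{cost}(\mathcal{S}^\star) = -\log P(T^\star, \mathcal{Y} \mid \mathcal{X})$ and $\texttt{cost}(\mathcal{S}_T) = -\log P(T, \mathcal{Y} \mid \mathcal{X})$. Combining these two identities with the cost inequality from the previous step and using that $x \mapsto e^{-x}$ is decreasing yields $P(T^\star, \mathcal{Y} \mid \mathcal{X}) \ge P(T, \mathcal{Y} \mid \mathcal{X})$ for every tree $T$, i.e., $T^\star \in \arg\max_T P(T, \mathcal{Y} \mid \mathcal{X})$.

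Finally I would convert this joint maximizer into a posterior maximizer. Since $P(T \mid \mathcal{Y}, \mathcal{X}) = P(T, \mathcal{Y} \mid \mathcal{X}) / P(\mathcal{Y} \mid \mathcal{X})$, and the normalizing constant $P(\mathcal{Y} \mid \mathcal{X}) = \sum_{T'} P(\mathcal{Y} \mid \mathcal{X}, T') P(T' \mid \mathcal{X})$ is a fixed positive quantity that does not depend on $T$, maximizing the joint over trees is equivalent to maximizing the posterior over trees. Hence $T^\star \in \arg\max_T P(T \mid \mathcal{X}, \mathcal{Y})$, which is exactly the claim; if one wants $T^\star$ to be \emph{the} MAP tree rather than \emph{a} MAP tree, one additionally notes the correspondence is a genuine bijection so ties among optimal trees are handled consistently.

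I do not expect any real obstacle specific to this corollary: all of the mathematical substance is already packaged in Theorems~\ref{thm:bijection}, \ref{thm:andor_propto_tree_ll}, and \ref{thm:correctness} (and, beneath the last of these, in the consistency/admissibility of the Perfect Split Heuristic from Theorem~\ref{thm:consistency_of_h}). The only points warranting a sentence of care are making the ``does not terminate early'' hypothesis explicit, since Theorem~\ref{thm:correctness} is stated only under that assumption, and observing that $P(\mathcal{Y} \mid \mathcal{X}) > 0$ so that dividing by it is legitimate; both are routine.
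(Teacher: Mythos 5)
Your proposal is correct and matches the paper's (essentially one-line) justification: the paper also treats this corollary as an immediate chaining of Theorem~\ref{thm:correctness} (the output is a minimal cost solution), Theorem~\ref{thm:bijection} (the bijection), and Theorem~\ref{thm:andor_propto_tree_ll} (cost equals $-\log P(T,\mathcal{Y}\mid\mathcal{X})$), with the joint-to-posterior conversion left implicit. Your added remarks about the ``does not terminate early'' hypothesis and the positivity of the normalizing constant $P(\mathcal{Y}\mid\mathcal{X})$ are sensible bits of care that the paper omits but do not change the argument.
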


\begin{theorem}[Anytime optimality of \algnamenospace]
\label{thm:anytime}
    Upon early termination, Algorithm \ref{alg:maptree} outputs the minimal cost solution across the explicit subgraph $\mathcal{G}'$ of already explored nodes.
\end{theorem}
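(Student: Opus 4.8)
Fix the explicit graph $\mathcal{G}'$ held by Algorithm~\ref{alg:maptree} at the moment of early termination, and for any node $x$ of $\mathcal{G}_{\mathcal{X},\mathcal{Y}}$ let $\mathrm{OPT}_{\mathcal{G}'}(x)$ be the least value of $\texttt{cost}(\mathcal{S})$ over all solution subgraphs $\mathcal{S}$ rooted at $x$ whose nodes and edges all lie in $\mathcal{G}'$, with the convention $\mathrm{OPT}_{\mathcal{G}'}(x)=\infty$ when there is no such $\mathcal{S}$. I want to show that $\texttt{getSolution}(r,\texttt{cost},UB)$ returns a solution subgraph contained in $\mathcal{G}'$ whose cost is $\mathrm{OPT}_{\mathcal{G}'}(r)$. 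The whole argument will rest on the invariant
\begin{align}
\label{eq:anytime-invariant}
UB[x] \;=\; \mathrm{OPT}_{\mathcal{G}'}(x) \qquad \text{for every node } x\in\mathcal{G}',
\end{align}
asserted to hold right after the $\texttt{updateUpperBounds}$ call of each iteration, the base case $\mathcal{G}'=\{r\}$, $UB[r]=\infty$ being immediate. Because only $UB$ enters~\eqref{eq:anytime-invariant} and $\texttt{getSolution}$, the conclusion will be independent of the heuristic $h$, of the lower bounds $LB$, and of which OR node $\texttt{findNodeToExpand}$ picks --- it holds for every expansion order.

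I would start with three easy structural facts. (i) $\mathcal{G}_{\mathcal{X},\mathcal{Y}}$ is a finite DAG: along every edge the depth index is non-decreasing and strictly increases over any two consecutive edges, and depths are at most $F$ (Definition~\ref{def:bcart-andor-graph}). (ii) Every edge cost is a number in $[0,\infty]$ (it is $0$, or $-\log p_\text{inner}$, or $-\log p_\text{leaf}-\log \ell_\text{leaf}$, and $p_\text{leaf},p_\text{inner}\le 1$ while $\ell_\text{leaf}\in(0,1]$ is a Beta--Bernoulli marginal likelihood), hence $\mathrm{OPT}_{\mathcal{G}'}(x)\in[0,\infty]$. (iii) $\mathcal{G}'$ only grows during the run, so any solution valid for an earlier $\mathcal{G}'$ is valid for a later one, and therefore $\mathrm{OPT}_{\mathcal{G}'}(x)$ is non-increasing over the execution. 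I would also record, by inspecting where $UB$ is written, that an OR node carries a finite $UB$ value only once it has been expanded, and an AND node $a$ only once both of its OR children are expanded (its $UB$ is only ever set to $UB[o_{f=0}]+UB[o_{f=1}]$ since AND--OR edges cost $0$).

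The core is the induction for~\eqref{eq:anytime-invariant}. Assuming it holds after iteration $k$ and that iteration $k{+}1$ expands $o^\dagger$, the expansion adds to $\mathcal{G}'$ the terminal child $t$, the $F$ AND children $a_f$ with their edges, and the grandchildren OR nodes not already present; the only nodes whose $\mathrm{OPT}_{\mathcal{G}'}$ can change are $o^\dagger$ and its ancestors in $\mathcal{G}'$, since the sub-DAG reachable from any other node is untouched. I would then argue that the bound initialisation at the expansion together with $\texttt{updateUpperBounds}$ re-establishes the Bellman optimality equations $UB[o]=\min\{\texttt{cost}(o,t),\,\min_f\left(\texttt{cost}(o,a_f)+UB[a_f]\right)\}$ and $UB[a]=UB[o_{f=0}]+UB[o_{f=1}]$ at every node of $\mathcal{G}'$: the routine is a deepest-first worklist relaxation that re-enqueues the parents of any node whose value strictly drops, and by fact (iii) only drops are ever needed, so on the finite DAG it terminates with the Bellman equations holding at every node it touched and, by the induction hypothesis, at every node it did not. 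Since on a finite DAG the Bellman equations together with the terminal base values $UB[t]=0$ admit the unique solution $\mathrm{OPT}_{\mathcal{G}'}(\cdot)$, invariant~\eqref{eq:anytime-invariant} is restored, completing the induction. \emph{This step is the main obstacle:} one must track carefully how newly created AND nodes --- whose OR children may already be present, and even already expanded, because of the shared subproblems of the DAG --- acquire their $UB$ values, and that the deepest-first propagation reaches exactly the ancestors whose optimum moved.

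Finally I would extract the output. Assuming at least one expansion has occurred, $UB[r]\le\texttt{cost}(r,t_r)<\infty$, so $r$ is expanded. An induction on depth along the recursion of Algorithm~\ref{alg:getSolution} shows: every recursive call lands on a node $x$ with $UB[x]<\infty$ --- hence an expanded OR node possessing its children in $\mathcal{G}'$, or an AND node both of whose OR children are expanded --- so the recursion is well defined, strictly increases depth, and bottoms out at terminal nodes within $F$ levels; and the subgraph returned at $x$ is a legal solution subgraph inside $\mathcal{G}'$ whose cost equals $\texttt{cost}(x,c^\star)+UB[c^\star]$ for the selected child $c^\star$, which by the Bellman equation at $x$ and~\eqref{eq:anytime-invariant} equals $UB[x]=\mathrm{OPT}_{\mathcal{G}'}(x)$. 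Taking $x=r$ shows Algorithm~\ref{alg:maptree} returns a solution subgraph of $\mathcal{G}'$ of cost $\mathrm{OPT}_{\mathcal{G}'}(r)$, i.e., a minimum-cost solution over the explored explicit graph, which is Theorem~\ref{thm:anytime}.
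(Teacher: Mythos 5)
Your proposal is correct and follows essentially the same route as the paper: the paper's Lemma \ref{lem:correctness_of_ub} is exactly your invariant that $UB[o]$ equals the minimal cost of a partial solution rooted at $o$ within $\mathcal{G}'$ (maintained by the deepest-first propagation in \texttt{updateUpperBounds}), and its Lemma \ref{lem:correctness_of_get_solution} is your extraction argument that \texttt{getSolution} greedily realizes $UB[r]$. The only difference is presentational --- you justify the invariant via uniqueness of the Bellman fixed point on the finite DAG, whereas the paper argues by a case split on whether the newly added terminal node enters the optimal partial solution --- and you are somewhat more explicit about the shared-subproblem subtlety for newly created AND nodes, which the paper glosses over.
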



\section{Experiments}
\label{sec:exps}

We evaluate the performance of \algname in multiple settings.
In all experiments in this section, we set $\alpha = 0.95$ and $\beta = 0.5$.
We find that our results are not highly dependent on the choices of $\alpha$ and $B$; see Appendix \ref{app:add_experiments}.

In the first setting, we compare the efficiency of \algname to the Sequential Monte Carlo (SMC) and Markov-Chain Monte Carlo (MCMC) baselines from \citet{smc} and \citet{chipmanBayesianCARTModel1998}, respectively.
In the second setting, we create a synthetic dataset in which the true labels are generated by a randomly generated tree and measure generalization performance with respect to training dataset size.
In the third setting, we measure the generalization accuracy, log likelihood, and tree size of models generated by \algname and baseline algorithms across all 16 datasets from the CP4IM dataset repository \cite{gunsItemsetMiningConstraint2011}.

\subsection{Speed Comparisons against MCMC and SMC}
\label{subsec:speed}

We first compare the performance of \algname with the SMC and MCMC baselines from \citet{smc} and \citet{chipmanBayesianCARTModel1998}, respectively, on all 16 binary classification datasets from the CP4IM dataset repository \cite{gunsItemsetMiningConstraint2011}.
We note that all three methods, given infinite exploration time, should recover the maximum a posteriori tree from the BCART posterior.
However, it has been observed that the mixing times for Markov-Chain-based methods, such as the MCMC and SMC baselines, is exponential in the depth of the data-generating tree \cite{kimMixingRatesBayesian2023}.
Furthermore, the SMC and MCMC methods are unable to determine when they have converged, nor can they provide a certificate of optimality upon convergence.

In our experiments, we modify the hyperparameters of each algorithm and measure the training time and log posterior of the data under the output tree (Figure \ref{fig:speed}).
In 12 of the 16 datasets in Figure \ref{fig:synthetic}, \algname outperforms SMC and MCMC and is able to find trees with higher log posterior faster than the baseline algorithms.
Furthermore, in 5 of the 16 datasets, \algname converges to the provably optimal tree, i.e., the maximum a posteriori tree of the BCART posterior. 

\begin{figure}
    \centering
    \includegraphics[width=\linewidth]{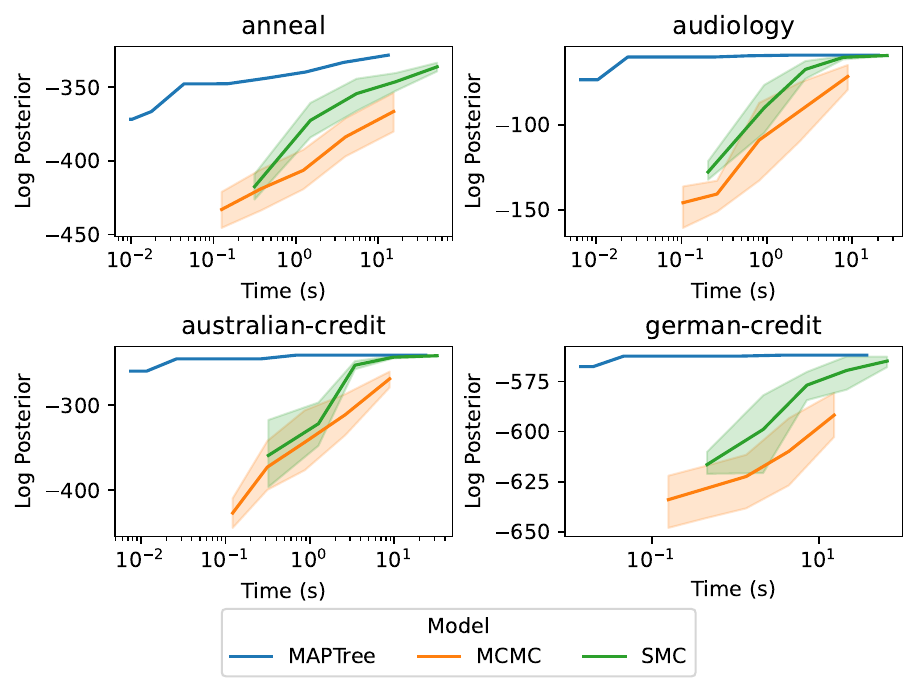}
    \caption{Comparison of \algnamenospace, SMC, and MCMC on 4 datasets (results for an additional 12 datasets are presented in the appendix). Curves are created by modifying the hyperparameters for each algorithm and measuring training time and log posterior of the data under the tree. Higher and further left is better, i.e., better log posteriors in less time. In 12 of the 16 datasets, \algname outperforms SMC and MCMC and is able to find trees with higher log posterior faster than the baseline algorithms. Furthermore, in 5 of the 16 datasets, \algname converges to the provably optimal tree, i.e., the MAP tree. 95\% confidence intervals are derived by bootstrapping the results of 10 random seeds and time is averaged across the 10 seeds.}
    \label{fig:speed}
\end{figure}

\begin{figure}
    \centering
    \includegraphics[width=\linewidth]{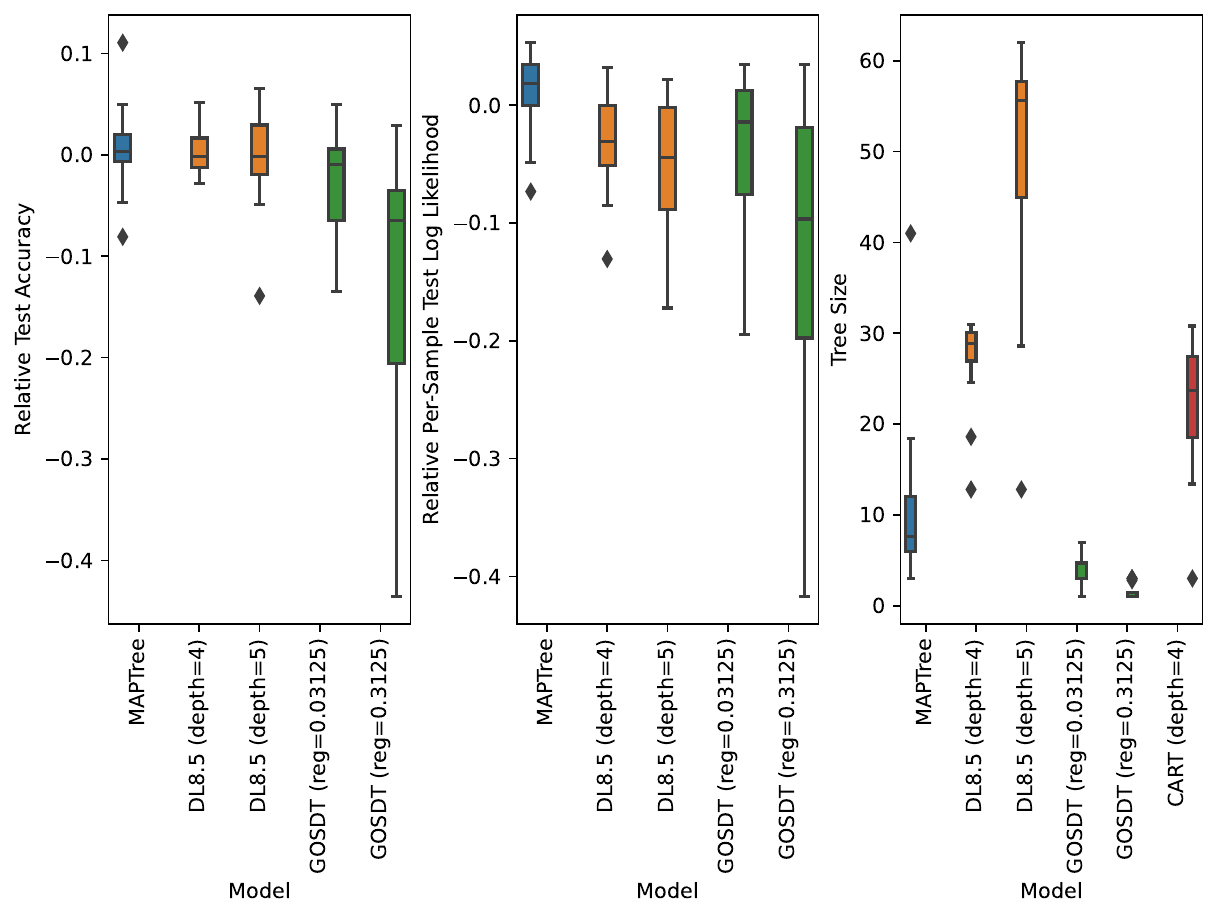}
    \caption{Box-and-whisker plot of stratified 10-fold relative test accuracy, relative per-sample log likelihood, and size of the output tree for \algname and various baseline algorithms for each of the 16 CP4IM datasets. Test accuracy and log likelihood are relative to that of CART (max depth $4$). Higher is better for the left and center plots and lower is better for the right plot. Against all baseline algorithms, \algname either a) performs better in test accuracy or log likelihood, or b) performs comparably in test accuracy and log likelihood but produces smaller trees.}
    \label{fig:ac_ll_size}
\end{figure}

\subsection{Fitting a Synthetic Dataset}
\label{subsec:synthetic}

We measure the generalization performance of \algname and various other baseline algorithms as a function of training dataset size on tree-generated data. 

\textbf{Synthetic Data}: We construct a synthetic dataset where labels are generated by a randomly generated tree. We first construct a random binary tree structure as specified in \citet{Devroye1995TheBB} via recursive random divisions of the available internal nodes to the left or right subtree.
Next, features are selected for each internal node uniformly at random such that no internal node splits on the same feature as its ancestors.
Lastly, labels are assigned to the leaf nodes in alternating fashion so as to avoid compression of the underlying tree structure.
Individual datapoints with 40 features are then sampled with each feature drawn i.i.d. from Ber$(1/2)$, and their labels are determined by following the generated tree to a leaf node.
We repeat this process 20 times, generating 20 datasets for 20 random trees.
We also randomly flip $\epsilon$ of the training data labels, with $\epsilon$ ranging from $0$ to $0.25$ to simulate label noise.

In our experiments, \algname generates trees which outperform both the greedy, top-down approaches and ODT methods in test accuracy for various training dataset sizes and values of label corruption proportion $\epsilon$; the results are presented in Figure \ref{fig:synthetic} in Appendix \ref{app:add_experiments} due to space constraints.
We note that though some baseline algorithms demonstrate comparable performance at a single noise level, no baseline algorithm demonstrates test accuracy comparable to \algname across all noise levels.
We also emphasize that \algname requires no hyperparameter tuning, whereas we experimented with various values of hyperparameters for the baseline algorithms in which performance was highly dependent on hyperparameter values (e.g., DL8.5 and GOSDT); see Appendix \ref{app:add_experiments}.

\subsection{Accuracy, Likelihood, and Size Comparisons on Real World Benchmarks}

We also compare the accuracy, test log likelihood, and sizes of trees generated by \algname and baseline algorithms on 16 real world binary classification datasets from the CP4IM dataset repository \cite{gunsItemsetMiningConstraint2011} (Figure \ref{fig:ac_ll_size}). 
Against all baseline algorithms, \algname either a) performs better in test accuracy or log likelihood, or b) performs comparably in test accuracy and log likelihood but produces smaller trees.

\section{Discussion and Conclusions}
\label{sec:disc}

We presented \algnamenospace, an algorithm which provably finds the maximum a posteriori tree of the BCART posterior for a given dataset.
Our algorithm is inspired by best-first-search algorithms over AND/OR graphs and the observation that the search problem for trees can be framed as a search problem over an appropriately constructed AND/OR graph.

\algname outperforms thematically similar approaches such as SMC- and MCMC-based algorithms, finding higher log-posterior trees faster, and is able to determine when it has converged to the maximum a posteriori tree, unlike prior work.
\algname also outperforms greedy, ODT, and ODST construction methods in test accuracy on the synthetic dataset constructed in Section \ref{sec:exps}.
Furthermore, on many real world benchmark datasets, \algname either a) demonstrates better generalization performance, or b) demonstrates comparable generalization performance but with smaller trees.

A limitation of \algname is that it constructs a potentially large AND/OR graph, which consumes a significant amount of memory.
We leave optimizations that may permit \algname to run on huge datasets to future work.
Nonetheless, with the optimizations presented in Section \ref{sec:exps}, we find that \algname was performant enough to run on the CP4IM benchmark datasets used in evaluation of previous ODT benchmarks.

\section*{Acknowledgements}
\label{sec:acknowledgements}

We would like to thank the anonymous reviewers and Area Chair for their reviews and helpful feedback.

M. T. was funded by a J.P. Morgan AI Fellowship, a Stanford Indisciplinary Graduate Fellowship, a Stanford Data Science Scholarship, and an Oak Ridge Institute for Science and Engineering Fellowship.

\appendix

\bibliography{MAPTree-biblatex}

\clearpage
\section{Proofs of Theorems}
\label{app:proofs}

In this section, we present the proofs of the theorems.

\begin{proof}[Proof of Theorem \ref{thm:bdt_likelihood}]
We note that this result was first presented in the original BCART paper for the more general classification setting \cite{chipmanBayesianCARTModel1998}.
We reproduce the proof below in the binary classification setting. 

By the definition of a BDT $(T, \Theta)$, the tree $T$ partitions the data such that the sample subsets $I(l_1)$, $I(l_2), I(l_L)$ fall into leaves $l_1, l_2, \dots, l_L$ and each leaf contains an independent probability distribution Ber$(\theta_l)$ that governs the probability of a given label occurring in each respective leaf.
Note that the probability of each label $y_i$ occurring is conditionally independent of the other elements of $\mathcal{Y}$ and the dataset $\mathcal{X}$ given its leaf $l$ (which is determined only from the tree structure $T$ and $x_i$) and the corresponding parameter $\theta_l$ (which is determined from the global parametrization $\Theta$ and $l$). Therefore

\begin{align}
P(\mathcal{Y} | \mathcal{X}, T, \Theta) &= \prod_{j \in [N]} P(y_j | x_j, T, \Theta) \\
&= \prod_{l \in T_\text{leaves}} \prod_{i \in \mathcal{I}(l)} P(y_i | \theta_l) \\
&= \prod_{l \in T_\text{leaves}} \prod_{i \in \mathcal{I}(l)} \theta_l^{y_i} (1-\theta_l)^{1-y_i} \\
&= \prod_{l \in T_\text{leaves}} \theta_l^{c_l^1} (1-\theta_l)^{c_l^0}
\end{align}
\end{proof}

\begin{proof}[Proof of Theorem \ref{thm:tree_likelihood}]
    The likelihood of a tree $T$ generating labels $\mathcal{Y}$ given features $\mathcal{X}$ can be obtained by marginalizing over $\Theta$ using the prior $P(\Theta)$ given in Section \ref{sec:preliminaries}:
    \begin{align}
        P(\mathcal{Y} | \mathcal{X}, T) &= \int_{\Theta} P(\mathcal{Y} | \mathcal{X}, T, \Theta) P(\Theta) d\Theta \\
        &= \int_\Theta  \prod_{l \in T_\text{leaves}} \left( \theta_l^{c^1_l}\left(1 - \theta_l\right)^{c^0_l} \right) \times\\
        &\left( \frac{\theta_l^{\rho^1}(1 - \theta_l)^{\rho^0}}{B(\rho^1, \rho^0)} \right) d\Theta \\
        &= \int_\Theta  \prod_{l \in T_\text{leaves}} \frac{1}{B(\rho^1, \rho^0)} \times \\
        &\left( \theta_l^{c^1_l + \rho^1}\left(1 - \theta_s\right)^{c^0_l + \rho^0} \right) d\Theta \\
        &= \prod_{l \in T_\text{leaves}} \frac{1}{B(\rho^1, \rho^0)} \times   \\
        &\int_{\theta_l} \left( \theta_l^{c^1_l + \rho^1}\left(1 - \theta_j\right)^{c^0_l + \rho^0} \right) d\Theta \\
        &= \prod_{l \in T_\text{leaves}} \frac{B(c^1_l + \rho^1, c^0_l + \rho^0)}{B(\rho^1, \rho^0)}
    \end{align}
    where for the second equality we used Theorem \ref{thm:bdt_likelihood} and the choice of prior $P(\Theta)$, and the definition of the Beta function $B(\rho^1, \rho^0)$ throughout.
\end{proof}

\begin{proof}[Proof of Theorem \ref{thm:bijection}]
    Our proof is by construction; we explicitly define the ``natural'' mapping both ways.
    We first show that any binary decision tree corresponds to a solution graph in $\mathcal{G}_{\mathcal{X}, \mathcal{Y}}$ under a natural correspondence, which we explicitly construct.
    Given a binary decision tree $T$ with $m$ nodes (some of which may be internal and some of which are leaf nodes), we construct its solution graph $\mathcal{S} \subset \mathcal{G}_{\mathcal{X}, \mathcal{Y}}$ explicitly.
    
    Let $T_\text{leaves}$ and $T_\text{internal}$ denote the leaf and internal nodes of $T$, respectively.
    For node $n$ with depth $d(n)$ in $T$, denote by $\mathcal{I}(n)$ the datapoint indices for the datapoints which reach node $n$.
    Furthermore, if $n$ is an internal node of $T$, let $f(n)$ denote the feature on which $n$ is split into its children in $T$.
    Then let $\mathcal{S} = \{o_{\mathcal{I}(n), d(n)} : n \in T\} \cup \{t_{\mathcal{I}(n), d(n)} : n \in T_\text{leaves}\} \cup \{a_{\mathcal{I}, d(n), f(n)} : n \in T_\text{internal}\} \cup \{o_{\mathcal{I}(n)|_{f(n) = k}, d(n)+1} : n \in T_\text{internal} \text{ and } k \in \{0, 1\}\}$. 

    We will now show that this choice of $\mathcal{S}$ is a solution graph on $\mathcal{G}_{\mathcal{X}, \mathcal{Y}}$.
    First note that the root node in $T$ has depth $0$ and must exist and contain the whole dataset, so $o_{[n], 0} \in \mathcal{S}$.
    Now consider any AND node $a_{\mathcal{I}, d(n), f(n)} \in \mathcal{S}$.
    By construction, we must have that both $o_{\mathcal{I}(n)|_{f(n) = 0}, d(n)+1} \in \mathcal{S}$ and $o_{\mathcal{I}(n)|_{f(n) = 1}, d(n)+1} \in \mathcal{S}$, so all of the immediate children of $a_{\mathcal{I}, d(n), f(n)} \in \mathcal{S}$ must also be in $\mathcal{S}$.
    Finally, consider any OR node $o \in \mathcal{S}$. We must have one of three cases. Either:
    \begin{enumerate}
        \item $o$ is of the form $o_{\mathcal{I}(n), d(n)}$ for $n \in T_\text{leaves}$, in which case $t_{\mathcal{I}(n), d(n)} \in \mathcal{S}$,
        \item $o$ is of the form $o_{\mathcal{I}(n), d(n)}$ for $n \in T_\text{internal}$, in which case $a_{\mathcal{I}, d(n), f(n)} \in \mathcal{S}$,
        \item $o$ is of the form $o_{\mathcal{I}(n)|_{f(n) = k}, d(n)+1}$ for some node $n \in T_\text{internal}$ and some $k$. In this case, since $n \in T_\text{internal}$, $n$ must have children $n_k$ (for $k = 0, 1$) in $T$ obtained by splitting node $n$ on feature $f(n)$ and so we must have that $o$ is of the form $o_{\mathcal{I}(n_k), d(n_k)}$ for some $n_k \in T$. In this case, we can apply either Case 1 or Case 2 to $n_k$ to show that $o$ must have exactly one child in $\mathcal{S}$.
    \end{enumerate}

    In all cases, each OR node $o \in \mathcal{S}$ must have exactly one child in $\mathcal{S}$.
    Thus, $\mathcal{S}$ is a solution graph on $\mathcal{G}_{\mathcal{X}, \mathcal{Y}}$.
  



    We will now show that every solution graph $\mathcal{S}$ defines a binary decision tree, and define this correspondence explicitly.
    For every OR Node $o_{\mathcal{I},d} \in \mathcal{S}$, we create a corresponding node $n_o \in T$.
    Since the root node $r = o_{[n], 0} \in \mathcal{S}$, we must have that $T$ is nonempty.
    Furthermore, by the definition of solution graph, we must have that for every OR node $o_{\mathcal{I},d} \in \mathcal{S}$,  either $a_{\mathcal{I},d, f} \in \mathcal{S}$ for some value of $f$ or its corresponding terminal node $t_{\mathcal{I},d} \in \mathcal{S}$.
    If $a_{\mathcal{I},d, f} \in \mathcal{S}$ for some value of $f$, then by the definition of solution graph over $\mathcal{G}_{\mathcal{X}, \mathcal{Y}}$, we must have $o^L \coloneqq o_{\mathcal{I}(n)|_{f(n) = 0}, d(n)+1} \in \mathcal{S}$ and $o^R \coloneqq o_{\mathcal{I}(n)|_{f(n) = 1}, d(n)+1} \in \mathcal{S}$. 
    In this case, we connect node $n_o$ to each of $n_{o^L}$ and $n_{o^R}$ in $T$ with a directed edge.
    (If $t_{\mathcal{I},d} \in \mathcal{S}$, then $o_{\mathcal{I},d} \in \mathcal{S}$ corresponds to a leaf node in $T$).

    We now show that this process gives rise to a binary decision tree, i.e., that $T$ is a binary decision tree.
    First, we note that by construction, any $n_o \in T$ that has an outgoing edge must have exactly two outgoing edges, say to $n_{o^L}$ and $n_{o^R}$.
    Furthermore, these edges exist if and only if the corresponding OR nodes in the solution graph $\mathcal{S}$ are connected through directed edges through an AND node $a_f$ for some $f$.
    In this case, the subsets of the data at $n_{o^L}$ and $n_{o^R}$ must correspond to the subset of the data at $n_o$ split by feature $f$.
    This implies that the subset of data that reaches $n_{o^L}$ and $n_{o^R}$ in $T$ must be a strict subset of the data that reaches $n_{o}$. 
    Furthermore, we note that every node in $T$ must be reachable from the root node of $T$, $n_r$ (which corresponds to the start node in $\mathcal{S}$), so $T$ is connected.
    Together, these observations imply that $T$ is a binary decision tree.

    Finally, we note that these two constructions are inverses.
    Any binary decision tree $T$ can used to induce a solution graph $\mathcal{S}$ over $\mathcal{G}_{\mathcal{X}, \mathcal{Y}}$, and $\mathcal{S}$ in turn induces a binary decision tree equivalent to $T$.
    This proves the claim.
\end{proof}

\begin{proof}[Proof of Theorem \ref{thm:andor_propto_tree_ll}] 

    Let $L_O$ and $I_O$ denote the sets of OR nodes corresponding with leaf nodes and internal nodes, respectively, in the tree represented by solution $\mathcal{S}$. Then the cost of a solution graph $\mathcal{S}$ over $\mathcal{G}_{\mathcal{X}, \mathcal{Y}}$ is 
    \begin{align*}
        & -\sum_{l \in L_O} \log p_\text{stop}(\mathcal{X}|_l, \mathcal{Y}|_l, d(l)) \\
        & \hspace{2.5em} -\sum_{m \in I_O} \log p_\text{node}(d(m))
    \end{align*}

    where for any node with depth $d$, we have 
    \begin{equation}
        p_\text{node} \coloneqq
        \begin{cases}
          p_\text{split}(d), & \text{if node is an internal node}   \\
          p_\text{stop}(d, \mathcal{X}_o)& \text{if node is a leaf node}
        \end{cases}
    \end{equation}

    by our choices of $p_\text{split}$ and $p_\text{stop}$ in Section \ref{sec:bdts_and_ao_graphs}.
    We can further simplify this to

    \begin{align*}
        -\sum_{L_O} \log P(\mathcal{Y}|_\text{leaf} | \mathcal{X}|_\text{leaf}, T) &-\sum_{I_O} \log P(T | \mathcal{X}) \\
        &= - \log P(\mathcal{Y}, T | \mathcal{X}) \\
    \end{align*}

by the definition of BDT.
    
\end{proof}

\begin{lemma}  
    \label{lem:perfect-leaf-split-lb}
    For the leaf likelihood given in Equation \ref{def:leaf-lik}, we have that $\ell_\text{leaf}(a, 0)\ell_\text{leaf}(0, b) \geq \ell_\text{leaf}(a,b)$ for integers $a, b \geq 0$.
\end{lemma}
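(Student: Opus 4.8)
The plan is to reduce the inequality $\ell_\text{leaf}(a,0)\,\ell_\text{leaf}(0,b) \geq \ell_\text{leaf}(a,b)$ to a statement purely about Beta functions, and then to an elementary inequality about products of integers (or Gamma functions). Expanding the definition $\ell_\text{leaf}(c^1,c^0) = B(c^1+\rho^1,c^0+\rho^0)/B(\rho^1,\rho^0)$, the claim becomes
\begin{align*}
    B(a+\rho^1, \rho^0)\, B(\rho^1, b+\rho^0) \geq B(a+\rho^1, b+\rho^0)\, B(\rho^1, \rho^0).
\end{align*}
Writing $B(x,y) = \Gamma(x)\Gamma(y)/\Gamma(x+y)$ and cancelling the common factors $\Gamma(a+\rho^1)$, $\Gamma(\rho^1)$, $\Gamma(b+\rho^0)$, $\Gamma(\rho^0)$ that appear on both sides, this is equivalent to
\begin{align*}
    \frac{1}{\Gamma(a+\rho^1+\rho^0)}\cdot\frac{1}{\Gamma(\rho^1+b+\rho^0)} \geq \frac{1}{\Gamma(a+\rho^1+b+\rho^0)}\cdot\frac{1}{\Gamma(\rho^1+\rho^0)},
\end{align*}
i.e., $\Gamma(a+\rho^1+b+\rho^0)\,\Gamma(\rho^1+\rho^0) \geq \Gamma(a+\rho^1+\rho^0)\,\Gamma(b+\rho^0+\rho^1)$. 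Setting $s = \rho^1+\rho^0 > 0$, the goal is exactly $\Gamma(s+a+b)\,\Gamma(s) \geq \Gamma(s+a)\,\Gamma(s+b)$ for integers $a,b\geq 0$.

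I would prove this last inequality by induction on $b$ (with $a\geq 0$ arbitrary). The base case $b=0$ is an equality. For the inductive step, using $\Gamma(z+1) = z\,\Gamma(z)$, the desired inequality $\Gamma(s+a+b+1)\Gamma(s) \geq \Gamma(s+a)\Gamma(s+b+1)$ becomes $(s+a+b)\,\Gamma(s+a+b)\,\Gamma(s) \geq (s+b)\,\Gamma(s+a)\,\Gamma(s+b)$. By the induction hypothesis $\Gamma(s+a+b)\Gamma(s) \geq \Gamma(s+a)\Gamma(s+b)$, and since $s+a+b \geq s+b > 0$ (because $a\geq 0$), multiplying the two gives the result. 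Alternatively, and perhaps more cleanly for a binary-classification paper where $\rho^1,\rho^0$ may be taken to be positive integers, one can observe that $\Gamma(s+a)/\Gamma(s) = \prod_{j=0}^{a-1}(s+j)$ is a ``rising factorial'' and the inequality $\prod_{j=0}^{a+b-1}(s+j) \geq \prod_{j=0}^{a-1}(s+j)\cdot\prod_{j=0}^{b-1}(s+j)$ follows termwise: the left side contains the $a$ factors $(s),\dots,(s+a-1)$ together with the $b$ larger factors $(s+a),\dots,(s+a+b-1)$, each of which dominates the corresponding factor $(s),\dots,(s+b-1)$ on the right.

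The main obstacle is minor but worth flagging: the statement is phrased for general real hyperparameters $\rho^1,\rho^0 \in \mathbb{R}^+$, so I should make sure the argument uses only positivity of $s=\rho^1+\rho^0$ and not integrality — the induction/rising-factorial argument above does exactly this, since $\Gamma(s+a)/\Gamma(s) = \prod_{j=0}^{a-1}(s+j)$ holds for any real $s>0$ and integer $a\geq 0$ by repeated application of the functional equation. One also needs to be slightly careful that no $\Gamma$ argument is zero or negative so that all cancellations are legitimate, but since $a,b\geq 0$ and $\rho^1,\rho^0 > 0$ every argument is strictly positive. I would present the proof in the rising-factorial form for brevity, stating the termwise domination $\prod_{j=0}^{a+b-1}(s+j) \geq \bigl(\prod_{j=0}^{a-1}(s+j)\bigr)\bigl(\prod_{j=0}^{b-1}(s+j)\bigr)$ explicitly and then translating back through the Gamma-function identity to recover the claim about $\ell_\text{leaf}$.
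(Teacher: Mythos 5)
Your proposal is correct and follows essentially the same route as the paper's proof: both reduce the claim via $B(x,y)=\Gamma(x)\Gamma(y)/\Gamma(x+y)$ to the inequality $\Gamma(s+a+b)\,\Gamma(s)\geq\Gamma(s+a)\,\Gamma(s+b)$ with $s=\rho^1+\rho^0$, and then establish it by writing the Gamma ratios as rising factorials and comparing the $b$ factors termwise. Your explicit handling of the real-valued (non-integer) hyperparameters and the positivity of all Gamma arguments matches what the paper implicitly relies on, so there is nothing to add.
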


\begin{proof}[Proof of \ref{lem:perfect-leaf-split-lb}]
    If either $a$ or $b$ is $0$, then the claim is trivially true as $\ell_\text{leaf}(0, 0) = 1$. Therefore, we assume both $a$ and $b$ are greater than $0$.
    Using the facts that $B(x, y) = \frac{\Gamma(x)\Gamma(y)}{\Gamma(x + y)}$, where $\Gamma$ is the gamma function, and $\Gamma(x+1) = x\Gamma(x)$ we have that:
    \begin{align*}
        & \ell_\text{leaf}(a, 0)\ell_\text{leaf}(0, b) \maybegeq \ell_\text{leaf}(a,b) \\
        \iff &\frac{B(a + \rho_0, \rho_1)B(\rho_0, b + \rho_1)}{B(\rho_0, \rho_1)B(\rho_0, \rho_1)} \maybegeq \frac{B(a + \rho_0, b + \rho_1)}{B(\rho_0, \rho_1)}\\
        \iff &\frac{\left(\frac{\Gamma(a+\rho_0)\Gamma(\rho_1)}{\Gamma(a+\rho_0+\rho_1)}\right)\left(\frac{\Gamma(\rho_0)\Gamma(b+\rho_1)}{\Gamma(b+\rho_0+\rho_1)}\right)}{\left(\frac{\Gamma(\rho_0)\Gamma(\rho_1)}{\Gamma(\rho_0+\rho_1)}\right)\left(\frac{\Gamma(\rho_0)\Gamma(\rho_1)}{\Gamma(\rho_0+\rho_1)}\right)} \maybegeq \frac{\frac{\Gamma(a+\rho_0)\Gamma(b+\rho_1)}{\Gamma(a+b+\rho_0+\rho_1)}}{\frac{\Gamma(\rho_0)\Gamma(\rho_1)}{\Gamma(\rho_0+\rho_1)}}\\
        \iff &\Gamma(\rho_0 + \rho_1)\Gamma(a + b + \rho_0 + \rho_1) \maybegeq \\
        & \hspace{0.5em}\Gamma(a + \rho_0 + \rho_1)\Gamma(b + \rho_0 + \rho_1)\\
        \iff & \left(\prod_{i=0}^{a+b-1}(i+\rho_0+\rho_1)\right) \Gamma(\rho_0 + \rho_1)^2 \maybegeq \\
        & \hspace{0.5em}\left(\prod_{i=0}^{a-1}(i+\rho_0+\rho_1)\right) \left(\prod_{i=0}^{b-1}(i+\rho_0+\rho_1)\right) \Gamma(\rho_0 + \rho_1)^2 \\
        \iff & \prod_{i=a}^{a+b-1}(i+\rho_0+\rho_1) \maybegeq \prod_{i=0}^{b-1}(i+\rho_0+\rho_1)
    \end{align*}

    Since $\rho_0, \rho_1 > 0$ and $a > 0$, we must have the the LHS $\geq$ RHS, which proves the claim.
\end{proof}

\begin{lemma}  
    \label{lem:perfect-leaf-no-further-split-lb}
    For the leaf likelihood given in Definition \ref{def:leaf-lik}, we have that $\ell_\text{leaf}(a + b, 0) \geq \ell_\text{leaf}(a,0)\ell_\text{leaf}(b, 0)$ and $\ell_\text{leaf}(0, a+b) \geq \ell_\text{leaf}(0,a)\ell_\text{leaf}(0, b)$ for integers $a, b \geq 0$.
\end{lemma}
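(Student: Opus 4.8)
The plan is to follow the same Gamma-function manipulation used in the proof of Lemma~\ref{lem:perfect-leaf-split-lb}. First I would rewrite each factor of the form $\ell_\text{leaf}(c, 0)$ as a ratio of rising-factorial products: using $B(x,y) = \Gamma(x)\Gamma(y)/\Gamma(x+y)$ together with $\Gamma(x+1) = x\Gamma(x)$,
\[
\ell_\text{leaf}(c, 0) = \frac{B(c + \rho^1, \rho^0)}{B(\rho^1, \rho^0)} = \frac{\prod_{i=0}^{c-1}(\rho^1 + i)}{\prod_{i=0}^{c-1}(\rho^1 + \rho^0 + i)}.
\]
Substituting this with $c = a+b$, $c = a$, and $c = b$ into the claimed inequality $\ell_\text{leaf}(a+b, 0) \geq \ell_\text{leaf}(a, 0)\ell_\text{leaf}(b, 0)$, I would cancel the common factors $\prod_{i=0}^{a-1}(\rho^1 + i)$ and $\prod_{i=0}^{a-1}(\rho^1 + \rho^0 + i)$ that appear on both sides, and then reindex the leftover products over $i = a, \dots, a+b-1$ by $j = i - a$. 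After cross-multiplying (all quantities are positive since $\rho^1, \rho^0 > 0$), the inequality reduces to
\[
\prod_{j=0}^{b-1} \frac{(\rho^1 + a + j)(\rho^1 + \rho^0 + j)}{(\rho^1 + j)(\rho^1 + \rho^0 + a + j)} \geq 1.
\]

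Then I would verify this product inequality term by term. Expanding the numerator and denominator of the $j$-th factor, their difference is
\[
(\rho^1 + a + j)(\rho^1 + \rho^0 + j) - (\rho^1 + j)(\rho^1 + \rho^0 + a + j) = a\rho^0,
\]
which is nonnegative because $a \geq 0$ and $\rho^0 > 0$; since the denominator is positive, each factor is $\geq 1$, and hence so is the product. This proves the first inequality. The second inequality $\ell_\text{leaf}(0, a+b) \geq \ell_\text{leaf}(0, a)\ell_\text{leaf}(0, b)$ follows by the identical argument with the roles of $\rho^1$ and $\rho^0$ interchanged (valid since $B$ is symmetric), the analogous per-term difference being $a\rho^1 \geq 0$. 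The degenerate cases $a = 0$ or $b = 0$ require no special treatment: the relevant products are then empty (equal to $1$), or the per-term differences vanish, and $\ell_\text{leaf}(0,0) = 1$, so the inequalities hold with equality.

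I do not expect a real obstacle here. The only care needed is the bookkeeping of which rising-factorial factors cancel and the reindexing $j = i - a$, plus confirming that the per-term difference is exactly the nonnegative quantity $a\rho^0$ rather than something sign-indefinite. Once that short computation is done the conclusion is immediate --- in fact this is cleaner than Lemma~\ref{lem:perfect-leaf-split-lb}, where the final reduced inequality compared two length-$b$ products that were not individually ordered, whereas here the reduced inequality splits into per-term contributions each of which is manifestly nonnegative.
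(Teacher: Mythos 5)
Your proposal is correct and follows essentially the same route as the paper's proof: both expand the Beta ratios into rising-factorial products via $B(x,y)=\Gamma(x)\Gamma(y)/\Gamma(x+y)$ and $\Gamma(x+1)=x\Gamma(x)$, cancel common factors, and reduce to a term-by-term comparison of a length-$b$ product. Your per-term factor $\bigl(1+\tfrac{a}{\rho^1+j}\bigr)\big/\bigl(1+\tfrac{a}{\rho^1+\rho^0+j}\bigr)$ is exactly the ratio of corresponding terms in the paper's final display $\prod_{i=0}^{b-1}\bigl(1+\tfrac{a}{i+\rho_0}\bigr)\geq\prod_{i=0}^{b-1}\bigl(1+\tfrac{a}{i+\rho_0+\rho_1}\bigr)$; verifying it exceeds $1$ by computing the cross-difference $a\rho^0\geq 0$ is just a cosmetic variant of the paper's observation that enlarging the denominator shrinks the fraction.
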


\begin{proof}[Proof of \ref{lem:perfect-leaf-no-further-split-lb}]
    If either $a$ or $b$ is $0$, then the claim is trivially true as $\ell_\text{leaf}(0, 0) = 1$. Therefore, we assume both $a$ and $b$ are greater than $0$.
    Using the facts that $B(x, y) = \frac{\Gamma(x)\Gamma(y)}{\Gamma(x + y)}$ and $\Gamma(x+1) = x\Gamma(x)$ we have that:
    
    \begin{align*}
        & \ell_\text{leaf}(a + b, 0) \maybegeq \ell_\text{leaf}(a,0)\ell_\text{leaf}(b, 0) \\
        \iff &\frac{B(a + b + \rho_0, \rho_1)}{B(\rho_0, \rho_1)} \maybegeq \frac{B(a + \rho_0, \rho_1)B(b + \rho_0, \rho_1)}{B(\rho_0, \rho_1)B(\rho_0, \rho_1)} \\
        \iff &B(\rho_0, \rho_1) B(a+b+\rho_0, \rho_1) \maybegeq \\
        & \hspace{6em}B(a+\rho_0, \rho_1)B(b+\rho_0, \rho_1)\\
        \iff &\frac{\Gamma(\rho_0)\Gamma(\rho_1)}{\Gamma(\rho_0+\rho_1)} \frac{\Gamma(a+b+\rho_0)\Gamma(\rho_1)}{\Gamma(a+b+\rho_0+\rho_1)} \maybegeq \\
        & \hspace{2em} \frac{\Gamma(a+\rho_0)\Gamma(\rho_1)}{\Gamma(a+\rho_0+\rho_1)}\frac{\Gamma(b+\rho_0)\Gamma(\rho_1)}{\Gamma(b+\rho_0+\rho_1)} \\
        \iff &\Gamma(\rho_0)\Gamma(a+b+\rho_0)\Gamma(a+\rho_0+\rho_1)\Gamma(b+\rho_0+\rho_1) \maybegeq \\
        & \hspace{1em}\Gamma(\rho_0+\rho_1)\Gamma(a+b+\rho_0+\rho_1)\Gamma(a+\rho_0)\Gamma(b+\rho_0) \\ 
        \iff &\frac{\Gamma(\rho_0)\Gamma(a+b+\rho_0)}{\Gamma(a+\rho_0)\Gamma(b+\rho_0)} \maybegeq \\
        & \hspace{3em}\frac{\Gamma(\rho_0+\rho_1)\Gamma(a+b+\rho_0+\rho_1)}{\Gamma(a+\rho_0+\rho_1)\Gamma(b+\rho_0+\rho_1)} \\ 
        \end{align*}
        \begin{align*}
        \iff &\frac{\left(\prod_{i=0}^{a+b-1} (i + \rho_0)\right)}{(\prod_{i=0}^{b-1} (i + \rho_0))(\prod_{i=0}^{a-1} (i + \rho_0))} \maybegeq \\
        &\hspace{2em} \frac{\left(\prod_{i=0}^{a+b-1} (i + \rho_0+\rho_1)\right)}{\left(\prod_{i=0}^{b-1} (i + \rho_0+\rho_1)\right)\left(\prod_{i=0}^{a-1} (i + \rho_0 + \rho_1)\right)} \\
        \iff &\prod_{i=0}^{b-1}\left(\frac{a+i+\rho_0}{i+\rho_0}\right) \maybegeq \prod_{i=0}^{b-1}\left(\frac{a+i+\rho_0 + \rho_1}{i+\rho_0 + \rho_1}\right) \\
        \iff &\prod_{i=0}^{b-1}\left(1 + \frac{a}{i+\rho_0}\right) \maybegeq \prod_{i=0}^{b-1}\left(1+\frac{a}{i+\rho_0 + \rho_1}\right)
    \end{align*}

    Since $\rho_0, \rho_1 > 0$ and $a > 0$, each term in the product on the LHS is positive and greater than or equal to the corresponding term on the RHS, so we have that LHS $\geq$ RHS.
    This proves that $\ell_\text{leaf}(a + b, 0) \geq \ell_\text{leaf}(a,0)\ell_\text{leaf}(b, 0)$ for integers $a, b \geq 0$.
    The proof for $\ell_\text{leaf}(0, a+b) \geq \ell_\text{leaf}(0,a)\ell_\text{leaf}(0, b)$ for integers $a, b \geq 0$ follows from the symmetry of $\ell_\text{leaf}$ in its arguments.
\end{proof}


\begin{proof}[Proof of Theorem \ref{thm:consistency_of_h}]
    First, we show that $h$ is consistent across any AND node $a_{\mathcal{I}, d, f}$ with children $o_{\mathcal{I}|_{f=0}, d+1}, o_{\mathcal{I}|_{f=1}, d+1}$.
    This follows directly from the definition of $h$:
    \begin{align}
        h(a_{\mathcal{I}, d, f}) &= \\
        &\texttt{cost}(a_{\mathcal{I}, d, f}, o_{\mathcal{I}|_{f=0}, d+1})\\
        &+ h(o_{\mathcal{I}|_{f=0}, d+1}) \\
        &+ \texttt{cost}(a_{\mathcal{I}, d, f}, o_{\mathcal{I}|_{f=1}, d+1}) \\
        &+ h(o_{\mathcal{I}|_{f=1}, d+1})
    \end{align}
    Next, we show that $h$ is consistent for any OR node $o_{\mathcal{I}, d}$ with children $t_{\mathcal{I}, d}, a_{\mathcal{I}, d, 1}, \dots, a_{\mathcal{I}, d, F}$. \\
    Case 1: 
    \begin{align}
        \ell_{\text{leaf}}(c^1(\mathcal{I}), c^0(\mathcal{I})) \geq p_{\text{split}}(d) \ell_{\text{leaf}}(c^1(\mathcal{I}), 0) \ell_{\text{leaf}}(0, c^0(\mathcal{I}))
    \end{align}
    In this case, we see that the heuristic is consistent for the terminal node child of $o_{\mathcal{I}, d}$: $t_{\mathcal{I}, d}$:
    \begin{align}
       h(o_{\mathcal{I}, d}) &= -\log \ell_{\text{leaf}}(c^1(\mathcal{I}), c^0(\mathcal{I})) \\
        &= \texttt{cost}(o_{\mathcal{I}, d}, t_{\mathcal{I}, d})
   \end{align}
    We also have the following:
    \begin{align}
        h(o_{\mathcal{I}, d}) \leq \\
        &-\log p_{\text{split}}(d) \\
        &- \log \ell_{\text{leaf}}(c^1(\mathcal{I}), 0)\\
        &- \log \ell_{\text{leaf}}(0, c^0(\mathcal{I}))
    \end{align}
    It remains to show that the heuristic is consistent for all AND node children of $o_{\mathcal{I}, d}$: $a_{\mathcal{I}, d, 1}, \dots, a_{\mathcal{I}, d, F}$.
    
    Case 2:
    \begin{align}
        \ell_{\text{leaf}}(c^1(\mathcal{I}), c^0(\mathcal{I})) < p_{\text{split}}(d) \ell_{\text{leaf}}(c^1(\mathcal{I}), 0) \ell_{\text{leaf}}(0, c^0(\mathcal{I}))
    \end{align} 
    In this case, we see that the heuristic is again consistent for the terminal node child of $o_{\mathcal{I}, d}$, $t_{\mathcal{I}, d}$:
   \begin{align}
       h(o_{\mathcal{I}, d}) &= \\
       &-\log p_{\text{split}}(d) \\
       &- \log \ell_{\text{leaf}}(c^1(\mathcal{I}), 0) \\
       &- \log \ell_{\text{leaf}}(0, c^0(\mathcal{I})) \\
       &\leq -\log \ell_{\text{leaf}}(c^1(\mathcal{I}), c^0(\mathcal{I})) \\
       &= \texttt{cost}(o_{\mathcal{I}, d}, t_{\mathcal{I}, d})
   \end{align}

    For the AND node children, we begin with the following:
    \begin{align}
        h(o_{\mathcal{I}, d}) &\leq \\
        &-\log p_{\text{split}}(d) \\
        &- \log \ell_{\text{leaf}}(c^1(\mathcal{I}), 0) \\
        &- \log \ell_{\text{leaf}}(0, c^0(\mathcal{I}))
    \end{align}
    As in Case 1, it remains to show that the heuristic is consistent for all AND node children of $o_{\mathcal{I}, d}$: $a_{\mathcal{I}, d, 1}, \dots, a_{\mathcal{I}, d, F}$.

    We will now show, for both Cases 1 and 2, that from this inequality, it follows that the heuristic is consistent across all AND node children of $o_{\mathcal{I}, d}$: $a_{\mathcal{I}, d, 1}, \dots, a_{\mathcal{I}, d, F}$.

    Applying Lemmas \ref{lem:perfect-leaf-split-lb} and \ref{lem:perfect-leaf-no-further-split-lb} and the symmetry of $\ell_{\text{leaf}}$, we have:
    \begin{align}
        h(o_{\mathcal{I}, d}) &\leq \\
        &-\log p_{\text{split}}(d) \\
        &- \log \ell_{\text{leaf}}(c^1(\mathcal{I}), 0) \\
        &- \log \ell_{\text{leaf}}(0, c^0(\mathcal{I})) \\
        &\leq \\
        &-\log p_{\text{split}}(d) \\
        &- \log \ell_{\text{leaf}}(c^1(\mathcal{I}|_{f=0}), 0)\ell_{\text{leaf}}(c^1(\mathcal{I}|_{f=1}), 0) \\
        &- \log \ell_{\text{leaf}}(0, c^0(\mathcal{I}|_{f=0}))\ell_{\text{leaf}}(0, c^0(\mathcal{I}|_{f=1})) \\
        &= \\
        &-\log p_{\text{split}}(d) \\
        &- \log \ell_{\text{leaf}}(c^1(\mathcal{I}|_{f=0}), 0) \ell_{\text{leaf}}(0, c^0(\mathcal{I}|_{f=0})) \\
        &- \log\ell_{\text{leaf}}(c^1(\mathcal{I}|_{f=1}), 0)\ell_{\text{leaf}}(0, c^0(\mathcal{I}|_{f=1})) \\
        &\leq \\
        &-\log p_{\text{split}}(d) \\
        &- \log \ell_{\text{leaf}}(c^1(\mathcal{I}|_{f=0}), c^0(\mathcal{I}|_{f=0})) \\
        &- \log \ell_{\text{leaf}}(c^1(\mathcal{I}|_{f=1}), c^0(\mathcal{I}|_{f=1}))
   \end{align}

   Applying Lemma \ref{lem:perfect-leaf-no-further-split-lb}, we have:
   \begin{align}
       h(o_{\mathcal{I}, d}) &\leq \\
       &-\log p_{\text{split}}(d) \\
       &- \log \ell_{\text{leaf}}(c^1(\mathcal{I}), 0) \\
       &- \log \ell_{\text{leaf}}(0, c^0(\mathcal{I})) \\
       &\leq \\
       &-\log p_{\text{split}}(d) \\
       &- \log \ell_{\text{leaf}}(c^1(\mathcal{I}|_{f=0}), 0) \ell_{\text{leaf}}(c^1(\mathcal{I}|_{f=1}), 0) \\
       &- \log \ell_{\text{leaf}}(0, c^0(\mathcal{I}|_{f=0})) \ell_{\text{leaf}}(0, c^0(\mathcal{I}|_{f=1})) \\
       &\leq \\
       &-\log p_{\text{split}}(d) \\
       &- 2\log p_{\text{split}}(d+1) \\
       &- \log \ell_{\text{leaf}}(c^1(\mathcal{I}|_{f=0}), 0) \\
       &- \log \ell_{\text{leaf}}(0, c^0(\mathcal{I}|_{f=0})) \\
       &- \log \ell_{\text{leaf}}(c^1(\mathcal{I}|_{f=1}), 0) \\
       &- \log \ell_{\text{leaf}}(0, c^0(\mathcal{I}|_{f=1}))
   \end{align}

   From the above two inequalities, we have that:
   \begin{align}
       h(o_{\mathcal{I}, d}) &\leq \texttt{cost}(o_{\mathcal{I}, d}, a_{\mathcal{I}, d, f}) + h(a_{\mathcal{I}, d, 1})
   \end{align}
   as required.

\end{proof}

\begin{corollary}[Admissibility of Perfect Split Heuristic]
\label{cor:admissibility_of_h}
The Perfect Split Heuristic $h$ defined in Definition \ref{def:heuristic} is admissible, i.e., given the true value of an OR node $f(o)$, we have that $h(o) \leq f(o)$, and given the true value of an AND node $f(a)$, we have that $h(a) \leq f(a)$.
\end{corollary}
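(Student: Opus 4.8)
The plan is to obtain admissibility as the standard consequence of consistency (Theorem~\ref{thm:consistency_of_h}) together with the observation, recorded just after Definition~\ref{def:bcart-andor-graph}, that $\mathcal{G}_{\mathcal{X},\mathcal{Y}}$ is finite. First I would fix notation for the true value: for a terminal node $t$ put $f(t) = 0$, and adopt the convention $h(t) = 0$ (this is the value of $h$ on the terminal child implicit in the OR-node case of the consistency inequality); for an OR node $o$ with children $\{t, a_1, \dots, a_F\}$ put $f(o) = \min_{c} \texttt{cost}(o, c) + f(c)$; and for an AND node $a$ with children $\{o_0, o_1\}$ put $f(a) = \texttt{cost}(a, o_0) + f(o_0) + \texttt{cost}(a, o_1) + f(o_1)$. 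Because every surviving OR node retains its terminal child, a solution subgraph rooted at any node always exists, so $f$ is finite everywhere, and by Theorems~\ref{thm:bijection} and~\ref{thm:andor_propto_tree_ll} this $f$ coincides with the minimal cost of a solution subgraph rooted at that node.

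Next I would check that this recursion is well founded so that induction is legitimate. Every edge of $\mathcal{G}_{\mathcal{X},\mathcal{Y}}$ either leaves an OR node $o_{\mathcal{I},d}$ for its terminal child or one of its AND children (all at depth $d$), or leaves an AND node $a_{\mathcal{I},d,f}$ for an OR node at depth $d+1$; hence along any directed path the depth coordinate is non-decreasing and strictly increases whenever the path passes through an AND node. Since depths are bounded by $F$, $\mathcal{G}_{\mathcal{X},\mathcal{Y}}$ is a finite DAG containing no arbitrarily long directed paths, so the \emph{height} of a node --- the length of the longest directed path from it to a terminal node --- is a well-defined nonnegative integer, equal to $0$ exactly on terminal nodes. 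I would then prove $h(n) \le f(n)$ for every OR and AND node $n$ by induction on the height of $n$.

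For an OR node $o$, let $c^{*}$ be a child attaining the minimum in $f(o) = \texttt{cost}(o, c^{*}) + f(c^{*})$. Consistency gives $h(o) \le \texttt{cost}(o, c^{*}) + h(c^{*})$; since $c^{*}$ has strictly smaller height than $o$ it is either a terminal node, for which $h(c^{*}) = 0 = f(c^{*})$, or an AND node covered by the inductive hypothesis, so in all cases $h(c^{*}) \le f(c^{*})$ and thus $h(o) \le \texttt{cost}(o, c^{*}) + f(c^{*}) = f(o)$. For an AND node $a$ with children $o_0, o_1$, both of strictly smaller height, consistency gives $h(a) \le \sum_{i} \texttt{cost}(a, o_i) + h(o_i)$, the inductive hypothesis gives $h(o_i) \le f(o_i)$, and therefore $h(a) \le \sum_{i} \texttt{cost}(a, o_i) + f(o_i) = f(a)$. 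The induction bottoms out at terminal nodes, where $h = f = 0$, so the claim holds for all OR and AND nodes, which is the statement of the corollary.

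I do not anticipate a real obstacle: the substance is the classical implication ``consistent $\Rightarrow$ admissible'', and the only point that genuinely needs care is the well-foundedness of the induction. That is why I would explicitly note that the depth coordinate forbids arbitrarily long directed paths in $\mathcal{G}_{\mathcal{X},\mathcal{Y}}$, making ``height'' a legitimate induction measure; everything else is a one-line invocation of Theorem~\ref{thm:consistency_of_h} and the inductive hypothesis.
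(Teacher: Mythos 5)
Your proof is correct and follows exactly the route the paper intends: the paper states admissibility as an unproved corollary of Theorem~\ref{thm:consistency_of_h}, relying on the classical ``consistent $\Rightarrow$ admissible'' implication that you spell out. Your added care about well-foundedness (the depth coordinate bounding directed path lengths in $\mathcal{G}_{\mathcal{X},\mathcal{Y}}$, so induction on height is legitimate) and the convention $h(t)=f(t)=0$ at terminal nodes are exactly the details the paper leaves implicit, and they check out against Definitions~\ref{def:bcart-andor-graph} and~\ref{def:heuristic}.
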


\begin{lemma}
    \label{lem:correctness_of_ub}
    Across all iterations of \algnamenospace, $UB[o]$ represents the minimal cost of any partial solution rooted at OR node $o$ in $\mathcal{G}'$.
\end{lemma}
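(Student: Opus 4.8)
The plan is to prove a slightly stronger invariant by induction on the iterations of the main \texttt{while} loop of Algorithm~\ref{alg:maptree}: at the start of every iteration (hence also at termination) $UB[o] = C^*(o)$ for every OR node $o \in \mathcal{G}'$, where $C^*(o)$ is the minimal cost of a partial solution rooted at $o$ inside the current $\mathcal{G}'$, with the convention $C^*(o) = \infty$ when $o$ has not been expanded (so no such partial solution exists). First I would record the Bellman-style recursion that characterizes $C^*$ directly from the definition of a solution graph: for an expanded OR node $o$ with terminal child $t$ and AND children $a_1,\dots,a_F$, $C^*(o) = \min\{\texttt{cost}(o,t),\ \min_f(\texttt{cost}(o,a_f) + C^*(a_f))\}$ and $C^*(a_f) = \texttt{cost}(a_f,o_{f=0}) + C^*(o_{f=0}) + \texttt{cost}(a_f,o_{f=1}) + C^*(o_{f=1})$. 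I would also observe that $\mathcal{G}_{\mathcal{X},\mathcal{Y}}$ is acyclic: depth is constant along OR$\to$AND edges and strictly increases along AND$\to$OR edges, so every proper OR-ancestor of a node has strictly smaller depth. This DAG structure is what makes \texttt{updateUpperBounds}, which pops nodes of maximal depth first, well-founded.

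For the base case, after the initialization $\mathcal{G}' = \{r\}$, $UB[r]=\infty$, the only node is the unexpanded root and $C^*(r) = \infty = UB[r]$. For the inductive step, assume the invariant at the start of an iteration and let a tip OR node $o$ be expanded, adding $t$, the $a_f$'s and the depth-$(d(o)+1)$ OR nodes to $\mathcal{G}'$. Since these are all descendants of $o$, $C^*(o')$ is unchanged for every OR node $o'$ that is neither $o$ nor an ancestor of $o$; the freshly added nodes have $C^* = \infty$ matching their ($\infty$) $UB$; and $C^*(o)$ drops from $\infty$ to a finite value ($\le \texttt{cost}(o,t)$). So immediately after expansion the invariant can fail only at $o$ and at OR-ancestors of $o$, which is exactly the region \texttt{updateUpperBounds}$(o,\cdot,\cdot)$ revisits. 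Two routine observations keep the relaxation sound: $UB$ never drops below $C^*$ (it starts equal to the old $C^*$, which dominates the new $C^*$, and each relaxation applies the monotone operator of the recursion above, which fixes the new $C^*$), so nodes whose $C^*$ did not change are never wrongly lowered; and because nodes are processed in strictly decreasing depth, whenever an OR node is recomputed its (grand)children at greater depth have already attained their final values.

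The crux — and the step I expect to be the main obstacle — is showing that \texttt{updateUpperBounds} actually reaches \emph{every} OR node whose $C^*$ decreased, even though it only propagates upward through nodes whose value strictly improves. The key is a splicing argument: if $C^*_\text{new}(o') < C^*_\text{old}(o')$, then any new-optimal partial solution at $o'$ must use a new edge and hence contains $o$; walking the path of OR nodes $o' = p_0, p_1,\dots,p_j = o$ in that solution, each $p_i$ must also satisfy $C^*_\text{new}(p_i) < C^*_\text{old}(p_i)$, since otherwise replacing the sub-solution at $p_i$ by an optimal old one at $p_i$ would yield an \emph{old} partial solution at $o'$ of cost below $C^*_\text{old}(o')$ — a contradiction. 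Thus the improved nodes form chains from $o$ upward through which the parent-propagation of \texttt{updateUpperBounds} necessarily passes; combined with the depth-ordering observation, each affected OR node is revisited and reassigned its recursion value evaluated on already-correct inputs, i.e., $C^*_\text{new}$, and since the worklist only ever strictly decreases finitely many finite values the procedure terminates. This restores $UB[o] = C^*(o)$ for all OR nodes of $\mathcal{G}'$, closing the induction. The only additional bookkeeping is the DAG case in which a freshly created depth-$(d(o)+1)$ OR node already lies in $\mathcal{G}'$: it is then reused with its current bounds, its own descendants (and hence its $C^*$) are unaffected, and only $o$ and its ancestors acquire it as a new descendant, so the argument is unchanged.
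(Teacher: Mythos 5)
Your proof is correct and follows the same overall strategy as the paper's: an induction on the iterations of the main loop, with the inductive step analyzing how the expansion of a tip node $o$ and the subsequent call to \texttt{updateUpperBounds} restore the invariant. The difference is in how the inductive step is discharged. The paper runs a nested induction on $|\mathcal{I}|$ and splits on whether a minimal cost partial solution at a node contains the newly added terminal $t^*$; in the case where it does, the paper simply asserts that the node ``will be added to the queue after its child,'' without justifying that the child's $UB$ \emph{strictly} decreased --- which is the actual trigger for parents being pushed onto the worklist. Your splicing argument supplies exactly this missing justification: if $C^*$ strictly decreased at some ancestor $o'$, then every OR node on the chain from $o'$ down to $o$ inside a new-optimal solution must also have strictly decreased (else splice in an old-optimal sub-solution and contradict old optimality), so the strictly-improving worklist propagation necessarily reaches $o'$. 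Combined with your observation that the graph is a DAG ordered by depth, so that \texttt{updateUpperBounds} finalizes descendants before ancestors, this yields a tighter and more complete argument than the paper's. The only points worth making explicit if you wrote this up fully are (i) that within a single solution graph each OR node occurs once (the subsets at a given depth are disjoint), so the ``chain of OR nodes from $o'$ to $o$'' is well defined, and (ii) the convention for nodes where $UB$ is still undefined/infinite, which you already flag.
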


\begin{proof}[Proof of Lemma \ref{lem:correctness_of_ub}]
    We prove this via induction on iteration.
    After the first iteration, there is only terminal node in $\mathcal{G}'$, and only one valid solution $\mathcal{S}$ exists in $\mathcal{G}'$: $\mathcal{S}_0 = \{ o_{[N], 0}, t_{[N], 0} \}$.
    Thus, no nodes other than $r = o_{[N], 0}$ have valid partial solutions.
    At this point, $UB[r] = \texttt{cost}(o_{[N], 0}, t_{[N], 0}) = \texttt{cost}(\mathcal{S})$ and $UB$ is undefined on all other nodes, as required.

    In each future iterations, there is at most one terminal node $t^*$ added to $\mathcal{G}'$.
    We will show that for any OR node $o_{\mathcal{I}, d}$, $UB[o_{\mathcal{I}, d}]$ represents the minimal cost of any partial solution rooted at $o_{\mathcal{I}, d}$.
    We prove this over induction over the size of $\mathcal{I}$ of $o_{\mathcal{I}, d} \in \mathcal{G}'$. \\
    
    When $|\mathcal{I}| = 1$, any split will lead to an empty subtree, meaning if $t_{\mathcal{I}, d} \in \mathcal{G}'$ then $\{o_{\mathcal{I}, d}, t_{\mathcal{I}, d}\}$ is the only partial solution rooted at $o_{\mathcal{I}, d}$ and otherwise no such partial solution exists.
    If $t_{\mathcal{I}, d} \not\in \mathcal{G}'$, this implies that $o_{\mathcal{I}, d} \not\in \mathcal{E}$, meaning $UB[o_{\mathcal{I}, d}]$ is undefined.
    If $t_{\mathcal{I}, d} \in \mathcal{G}'$ and $|\mathcal{I}| = 1$, then $UB[o_{\mathcal{I}, d}]$ is $\texttt{cost}(o_{\mathcal{I}, d}, t_{\mathcal{I}, d})$, as required. \\
    
    When $|\mathcal{I}| > 1$, we have two cases: \\
    
    Case 1: There exists a minimal cost partial solution rooted at $o_{\mathcal{I}, d}$ which does not contain $t^*$. \\
        In this case, the cost of this partial solution is still the minimal cost across any partial solution rooted at $o_{\mathcal{I}, d}$.
        $UB[o_{\mathcal{I}, d}]$ remains unchanged in this case.
        (Otherwise a child's $UB$ must have been updated to a value such that there is now a partial solution rooted at $o_{\mathcal{I}, d}$ which contains that child and its new minimal cost partial solution.
        However, since the only terminal node added to $\mathcal{G}'$ this iteration was $t^*$, this implies that the child's new minimal cost solution must contain $t^*$, which is a contradiction.)
        Since $UB[o_{\mathcal{I}, d}]$ was not changed, our inductive hypothesis over iterations states that $UB[o_{\mathcal{I}, d}]$ still represents the minimal cost of any partial solution rooted at OR node $o$ in $\mathcal{G}'$. \\
    
    Case 2: All minimal cost partial solutions rooted at $o_{\mathcal{I}, d}$ contain $t^*$. \\
        Consider a child $c^*$ that is part of some such minimal cost partial solution.
        In this case, our inductive hypothesis over $|\mathcal{I}|$ gives us that $UB$ is correctly updated in this iteration.
        It follows that $o_{\mathcal{I}, d}$ will be added to the queue in $\texttt{updateUpperBounds}$ after this child because it must have lower depth.
        As a result, $UB[o_{\mathcal{I}, d}]$ will be set to the minimal cost of any partial solution rooted at $o_{\mathcal{I}, d}$, as required. \\
    
    We conclude that, in either case, $UB[o]$ represents the minimal cost of any partial solution rooted at OR node $o$ in $\mathcal{G}'$.
\end{proof}

\begin{lemma}
    \label{lem:correctness_of_get_solution}
    \texttt{getSolution}($o_{\mathcal{I}, d}$) outputs a minimal cost partial solution of $\mathcal{G}'$ rooted at OR node $o_{\mathcal{I}, d}$.
\end{lemma}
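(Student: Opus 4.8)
The plan is to induct on $|\mathcal{I}|$, the number of samples reaching $o_{\mathcal{I},d}$, mirroring the argument for Lemma \ref{lem:correctness_of_ub}. The first step is to record the recursive shape of partial solutions: any partial solution of $\mathcal{G}'$ rooted at an OR node $o_{\mathcal{I},d}$ is either the \emph{leaf} solution $\{o_{\mathcal{I},d}, t_{\mathcal{I},d}\}$ of cost $\texttt{cost}(o_{\mathcal{I},d}, t_{\mathcal{I},d})$, or a \emph{split} solution $\{o_{\mathcal{I},d}, a_{\mathcal{I},d,f}\} \cup \mathcal{S}_0 \cup \mathcal{S}_1$ for some $f$, where each $\mathcal{S}_k$ is a partial solution rooted at $o_{\mathcal{I}|_{f=k}, d+1}$; since the AND-to-OR edges have cost $0$, the cost of the latter is $\texttt{cost}(o_{\mathcal{I},d}, a_{\mathcal{I},d,f}) + \texttt{cost}(\mathcal{S}_0) + \texttt{cost}(\mathcal{S}_1)$. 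Hence the minimum cost of a partial solution rooted at $o_{\mathcal{I},d}$ equals $\min\{\texttt{cost}(o,t),\ \min_f[\texttt{cost}(o,a_f) + m_0^f + m_1^f]\}$, where $m_k^f$ is the minimal partial-solution cost of $o_{\mathcal{I}|_{f=k}, d+1}$. By Lemma \ref{lem:correctness_of_ub} this equals $UB[o_{\mathcal{I},d}]$, and likewise $UB[a_{\mathcal{I},d,f}] = m_0^f + m_1^f$ (read as $\infty$ when an OR child is absent from $\mathcal{G}'$).

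For the base case $|\mathcal{I}| = 1$ (equivalently, whenever $\mathcal{V}(\mathcal{I}) = \emptyset$), every split produces an empty child, so by Step 6 of Definition \ref{def:bcart-andor-graph} all AND nodes under $o_{\mathcal{I},d}$ and their descendants are pruned and the only child of $o_{\mathcal{I},d}$ in $\mathcal{G}'$ is $t_{\mathcal{I},d}$; the unique partial solution is the leaf solution, which is exactly what \texttt{getSolution} returns. For the inductive step $|\mathcal{I}| > 1$, every surviving AND child $a_{\mathcal{I},d,f}$ corresponds to a nontrivial split, so each $o_{\mathcal{I}|_{f=k}, d+1}$ has strictly fewer samples. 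The procedure picks $a_{f^*}$ minimizing $\texttt{cost}(o,a_f) + UB[a_f]$, returns the leaf solution if $\texttt{cost}(o,t) + UB[t] \le \texttt{cost}(o,a_{f^*}) + UB[a_{f^*}]$, and otherwise recurses on $o_{f^*=0}$ and $o_{f^*=1}$. If the split branch is taken with $UB[a_{f^*}]$ finite, both OR children have finite $UB$ and were therefore expanded and lie in $\mathcal{G}'$, so the inductive hypothesis applies: \texttt{getSolution}$(o_{f^*=k})$ returns a valid minimal-cost partial solution rooted at $o_{f^*=k}$ of cost $UB[o_{f^*=k}]$. Gluing these two solutions with $o_{\mathcal{I},d}$ and $a_{f^*}$ yields a connected subgraph containing $o_{\mathcal{I},d}$, exactly one child of every OR node it contains, and both children of every AND node it contains, hence a valid partial solution, of cost $\texttt{cost}(o,a_{f^*}) + UB[o_{f^*=0}] + UB[o_{f^*=1}] = \texttt{cost}(o,a_{f^*}) + UB[a_{f^*}]$. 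By the choice of $a_{f^*}$ and the comparison against the leaf option, this equals the $\min$ expression above, i.e., $UB[o_{\mathcal{I},d}]$, so the output is minimal. Termination of the recursion is immediate from the strict decrease of $|\mathcal{I}|$.

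The main obstacle I anticipate is the bookkeeping linking $UB$ on AND nodes to $UB$ on OR nodes — specifically justifying $UB[a_{\mathcal{I},d,f}] = UB[o_{\mathcal{I}|_{f=0},d+1}] + UB[o_{\mathcal{I}|_{f=1},d+1}]$ — and ruling out a recursion into an OR node present in $\mathcal{G}'$ but not yet expanded (so with $UB = \infty$). Both are resolved by the observation that $UB[o] < \infty$ forces $o \in \mathcal{E}$: expansion always adds the terminal child, which caps $UB[o]$ at $\texttt{cost}(o,t) < \infty$, and $UB$ is $\infty$ until then. Thus whenever \texttt{getSolution} follows the split branch the selected $a_{f^*}$ has finite $UB$, both of its OR children are expanded, and the recursion is well-defined; when no AND child has finite $UB$, the algorithm necessarily selects the leaf solution, which is indeed the unique (hence minimal) partial solution rooted at $o_{\mathcal{I},d}$ in that case.
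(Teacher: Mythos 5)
Your proposal is correct and follows essentially the same route as the paper's proof: induction on $|\mathcal{I}|$, with the base case handled by the impossibility of nontrivial splits and the inductive step reducing to Lemma \ref{lem:correctness_of_ub} to identify $UB$ values with minimal partial-solution costs. Your treatment is more careful about the $UB[a_f]$-to-$UB[o_{f=k}]$ bookkeeping and the unexpanded ($UB=\infty$) case, which the paper leaves implicit, but the underlying argument is the same.
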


\begin{proof}[Proof of Lemma \ref{lem:correctness_of_get_solution}]
    We will show that \texttt{getSolution}($o_{\mathcal{I}, d}$) outputs a minimal cost partial solution of $\mathcal{G}'$ rooted at OR node $o_{\mathcal{I}, d}$ via induction on $|\mathcal{I}|$.
    When $|\mathcal{I}| = 1$, any splits will lead to a empty subtrees, so $\texttt{getSolution}(o_{\mathcal{I}, d})$ must return $\{o_{\mathcal{I}, d}, t_{\mathcal{I}, d}\}$.
    When $|\mathcal{I}| > 1$, \texttt{getSolution}($o_{\mathcal{I}, d}$) will either stop for a minimal cost or recurse on a split that yields minimal $UB$ value.
    Lemma \ref{lem:correctness_of_ub} shows that the $UB$ values of $o_{\mathcal{I}, d}$ and its children are equal to the minimal cost across all partial solutions rooted at each of these respective nodes.
    As a result, if $\texttt{getSolution}$ stops, then $\{o_{\mathcal{I}, d}, t_{\mathcal{I}, d}\}$ is a minimal cost partial solution.
    Otherwise, if $\texttt{getSolution}$ splits on feature $f$, $\{o_{\mathcal{I}, d}, a_{\mathcal{I}, d, f}\} \, \cup \texttt{getSolution}(o_{\mathcal{I}|_{f=0}, d + 1}) \, \cup \texttt{getSolution}(o_{\mathcal{I}|_{f=0}, d + 1})$ is also a minimal cost partial solution by the inductive hypothesis.
\end{proof}

\begin{proof}[Proof of Theorem \ref{thm:anytime}]
    We will show that upon early termination, \algname always returns a minimal cost solution within the explicit subgraph $\mathcal{G}' \subset G$ explored by \algnamenospace.
    From Lemma \ref{lem:correctness_of_get_solution}, we have that a minimal cost solution of $\mathcal{G}'$ is output by \algnamenospace, even upon early termination.
\end{proof}

\begin{lemma}
   \label{lem:correct_lower_bound}
   The lower bounds $LB$ represent correct lower bounds on the true value of a node in every iteration. For any OR node $o$ with true value $f(o)$, we have that $LB[o] \leq f(o)$ and for any AND node $a$ with true value $f(a)$, we have that $LB[a] \leq f(a)$.
\end{lemma}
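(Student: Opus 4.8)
The plan is to prove the statement by induction over the sequence of all writes that \algnamenospace\ ever makes to the array $LB$, maintaining the invariant that immediately after each such write, $LB[n] \le f(n)$ holds for every node $n$ on which $LB$ is currently defined. Since every iteration of Algorithm \ref{alg:maptree} only reads $LB$ and performs such writes, this invariant immediately gives the lemma. There are three kinds of write to account for: (i) the initialization $LB[r] := h(r)$; (ii) the initializations $LB[o_{f=k}] := h(o_{f=k})$ and $LB[a_f] := v^{(lb)}_{f=0}+v^{(lb)}_{f=1}$ performed when an OR node is expanded (lines 13--17 of Algorithm \ref{alg:maptree}); and (iii) the assignments $LB[o] := v^{(lb)}$ inside \texttt{updateLowerBounds}, together with the induced updates of $LB$ at AND-node parents.

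For (i) and the OR-node initializations in (ii), the required inequality is exactly the admissibility of the Perfect Split Heuristic, i.e.\ Corollary \ref{cor:admissibility_of_h}. For the AND-node initializations in (ii), I would use that $f(a_f) = \texttt{cost}(a_f,o_{f=0}) + f(o_{f=0}) + \texttt{cost}(a_f,o_{f=1}) + f(o_{f=1})$ and that $h(o_{f=k}) \le f(o_{f=k})$ by admissibility, so that the initialized value $\sum_{k}\left(\texttt{cost}(a_f,o_{f=k}) + h(o_{f=k})\right) \le f(a_f)$. Since these writes only ever touch freshly added nodes, they cannot break the invariant at any previously present node.

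The substantive case is (iii). The key structural fact I would establish first is that \texttt{updateLowerBounds} dequeues nodes in order of non-increasing depth and re-enqueues the parents of any node whose bound it changes; consequently, when an OR node $o$ is finally processed in a given pass, every child of $o$ whose bound changes in that pass has already been finalized, so by the running invariant those bounds satisfy $LB[c] \le f(c)$. (One also notes that, since only expanded OR nodes ever enter this queue and \texttt{findNodeToExpand} descends through expanded nodes only, every such $o$ has all $F$ of its AND-children present in $\mathcal{G}'$ with defined bounds.) The recomputed value is $v^{(lb)} = \min\{\texttt{cost}(o,t),\ \min_{c}(\texttt{cost}(o,c)+LB[c])\}$, whereas $f(o) = \min\{\texttt{cost}(o,t),\ \min_{c}(\texttt{cost}(o,c)+f(c))\}$; comparing term by term, the terminal term is identical and each split term obeys $\texttt{cost}(o,c)+LB[c] \le \texttt{cost}(o,c)+f(c)$, whence $v^{(lb)} \le f(o)$, and since the assignment happens only when $v^{(lb)} > LB[o]$ the invariant is preserved. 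The bound at an AND parent $a$ follows the same way, using that its stored (or recomputed) value equals $\texttt{cost}(a,o_{0}) + LB[o_{0}] + \texttt{cost}(a,o_{1}) + LB[o_{1}]$ and that its OR children already satisfy the invariant.

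I expect the main obstacle to be exactly this processing-order argument in case (iii): one must make precise that the depth-ordered queue together with parent re-enqueueing guarantees that no node is recomputed from stale descendant bounds within a single \texttt{updateLowerBounds} invocation, and reconcile this with the pseudocode's terse treatment of AND-node parents via the identity $LB[a] = \texttt{cost}(a,o_{0}) + LB[o_{0}] + \texttt{cost}(a,o_{1}) + LB[o_{1}]$ used above. Once that bookkeeping is settled, the remaining term-by-term comparison against $f$ is routine and requires no property of $h$ beyond admissibility.
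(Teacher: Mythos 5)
Your proposal is correct and takes essentially the same route as the paper's proof: an induction over the sequence of $LB$ assignments, with admissibility of the Perfect Split Heuristic (Corollary \ref{cor:admissibility_of_h}) covering the heuristic-initialized values and the term-by-term comparison $\texttt{cost}(o,c) + LB[c] \leq \texttt{cost}(o,c) + f(c) \leq f(o)$ covering the recomputed ones. The processing-order issue you flag as the main obstacle is actually immaterial for this particular lemma—since the invariant guarantees that every value ever stored at a child is itself a valid lower bound, recomputing a parent from a stale child bound still preserves $LB \leq f$—which is precisely why the paper's proof can ignore the internal ordering of \texttt{updateLowerBounds}.
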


\begin{proof}[Proof of Lemma \ref{lem:correct_lower_bound}]
    We will show that for any OR node $o_{\mathcal{I}, d}$, we have that $LB[o_{\mathcal{I}, d}] \leq f(o_{\mathcal{I}, d})$.
    This follows from Corollary $\ref{cor:admissibility_of_h}$.
    Throughout \algnamenospace, $LB[o_{\mathcal{I}, d}]$ is set to either:
    \begin{enumerate}
        \item $h(o_{\mathcal{I}, d})$
        \item $\min_{c \in \{t_{\mathcal{I}, d}, a_{\mathcal{I}, d, 1}, \dots, a_{\mathcal{I}, d, F}\}} \texttt{cost}(o_{\mathcal{I}, d}, c) + LB[c]$
    \end{enumerate}
    In the first case, Corollary \ref{cor:admissibility_of_h} gives us that $h(o_{\mathcal{I}, d}) \leq f(o_{\mathcal{I}, d})$.
    In the second case, we induct on iteration. First, though \algname does not query $LB$ for nodes on which a value has not yet been assigned, we will assume for the purpose of this proof that $LB$ defaults to $0$. Thus, before the first iteration, $LB$ is $0$ across all nodes. Our cost function $\texttt{cost}$ is nonnegative, so $LB[o_{\mathcal{I}, d}] = 0 \leq f(o_{\mathcal{I}, d})$ must hold. For future iterations then, we have the following for any node $o_{\mathcal{I}, d}$ on which $LB$ is defined:
    \begin{align}
        LB[o_{\mathcal{I}, d}] &:= \min_{c \in \{t_{\mathcal{I}, d}, a_{\mathcal{I}, d, 1}, \dots, a_{\mathcal{I}, d, F}\}} \texttt{cost}(o_{\mathcal{I}, d}, c) + LB[c] \\
        &\leq \min_{c \in \{t_{\mathcal{I}, d}, a_{\mathcal{I}, d, 1}, \dots, a_{\mathcal{I}, d, F}\}} \texttt{cost}(o_{\mathcal{I}, d}, c) + f(c) \\
        &\leq f(o_{\mathcal{I}, d})
    \end{align}
    Thus, $LB$ is a true lower bound, as required.
\end{proof}


\begin{lemma}
    \label{lem:gap_lb_ub}
    For any OR node $o$, if $LB[o] < UB[o]$, then $o$ must have some OR node descendant $o'$ such that $o' \not\in \mathcal{E}$.
\end{lemma}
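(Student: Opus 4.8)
The plan is to prove the contrapositive: if $o$ and \emph{every} OR‑node descendant of $o$ in $\mathcal{G}_{\mathcal{X},\mathcal{Y}}$ lie in $\mathcal{E}$, then $LB[o] = UB[o]$, which is incompatible with the hypothesis $LB[o] < UB[o]$. Two facts will be used repeatedly. First, $LB[o] \le UB[o]$ at all times: by Lemma~\ref{lem:correct_lower_bound}, $LB[o] \le f(o)$, where $f(o)$ is the true (optimal subgraph) value of $o$; and by Lemma~\ref{lem:correctness_of_ub}, $UB[o]$ is the cost of a genuine partial solution of $\mathcal{G}'$ rooted at $o$, hence of an actual subtree, so $UB[o] \ge f(o)$. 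Second, $\mathcal{G}_{\mathcal{X},\mathcal{Y}}$ is a finite DAG in which, because trivial feature splits are pruned in Definition~\ref{def:bcart-andor-graph}, the two OR children $o_{f=0},o_{f=1}$ of any surviving AND child $a_f$ of $o$ have $|\mathcal{I}(o_{f=k})| < |\mathcal{I}(o)|$; this licenses an induction on $|\mathcal{I}(o)|$.

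The induction relies on a bookkeeping invariant that the update routines maintain after every iteration of Algorithm~\ref{alg:maptree}: every unexpanded OR node $o'\in\mathcal{G}'$ has $LB[o']=h(o')$, and every expanded OR node $o$ with terminal child $t$, AND children $a_f$, and $a_f$'s OR children $o_{f=0},o_{f=1}$ satisfies
\[
LB[o]=\min\!\Bigl(\texttt{cost}(o,t),\ \min_f\bigl(\texttt{cost}(o,a_f)+LB[a_f]\bigr)\Bigr),\qquad LB[a_f]=\texttt{cost}(a_f,o_{f=0})+LB[o_{f=0}]+\texttt{cost}(a_f,o_{f=1})+LB[o_{f=1}],
\]
with the analogous identities for $UB$ (the $UB$ half being Lemma~\ref{lem:correctness_of_ub}, extended to AND nodes via $\texttt{cost}(a_f,o_{f=k})=0$). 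Establishing this invariant for $LB$ is the crux of the argument: one argues, by induction on iterations, that after each expansion \texttt{updateLowerBounds}, which always dequeues a deepest pending node and re‑enqueues all parents whenever a stored value strictly increases, re‑establishes the displayed equalities globally; consistency of $h$ (Theorem~\ref{thm:consistency_of_h}) ensures each recomputed value only ever increases, so the sweep is monotone and terminates, and the deepest‑first order guarantees a node is recomputed only after the children it depends on.

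Granting the invariant, the induction is immediate. If $o$ has no surviving AND child (in particular when $|\mathcal{I}(o)|=1$), then upon expansion the minimum over AND children is vacuous, so \texttt{updateLowerBounds} and \texttt{updateUpperBounds} both set $LB[o]=UB[o]=\texttt{cost}(o,t)$. Otherwise, fix any surviving AND child $a_f$ with OR children $o_{f=0},o_{f=1}$; these are OR descendants of $o$ with strictly smaller index sets, hence lie in $\mathcal{E}$, so by the inductive hypothesis $LB[o_{f=k}]=UB[o_{f=k}]$ for $k\in\{0,1\}$. The invariant (with $\texttt{cost}(a_f,o_{f=k})=0$) then gives $LB[a_f]=LB[o_{f=0}]+LB[o_{f=1}]=UB[o_{f=0}]+UB[o_{f=1}]=UB[a_f]$; taking the outer minimum over the terminal child and all AND children, $LB[o]=UB[o]$. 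Thus no state with $LB[o]<UB[o]$ can have all of $o$'s OR descendants expanded, which is the claim.

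The main obstacle is exactly the $LB$‑propagation invariant quoted above — i.e., showing that a single \texttt{updateLowerBounds} sweep genuinely re‑synchronizes every affected ancestor (rather than halting before reaching $o$), so that at every iteration $LB[o]$ equals the Bellman‑style minimum over its children. If one prefers to sidestep re‑deriving this here, the cleanest route is to strengthen Lemma~\ref{lem:correct_lower_bound} to assert this invariant directly, mirroring how Lemma~\ref{lem:correctness_of_ub} already does for $UB$, and then cite it; the remainder of the proof above is then just the short induction on $|\mathcal{I}(o)|$.
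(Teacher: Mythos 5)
Your proof is correct and follows essentially the same route as the paper's: prove the contrapositive and induct on $|\mathcal{I}(o)|$, with the singleton base case and the inductive step passing through the AND children's OR children. The only difference is that you explicitly isolate and justify the Bellman-style propagation invariant for $LB$ and $UB$ that the paper's proof uses implicitly, which is a reasonable strengthening of the same argument rather than a different one.
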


\begin{proof}[Proof of Lemma \ref{lem:gap_lb_ub}]
    We prove the contrapositive. 
    Consider any OR node $o_{\mathcal{I}, d}$. We show that if every OR node descendant of $o_{\mathcal{I}, d}$ is in $\mathcal{E}$, then $LB[o_{\mathcal{I}, d}] = UB[o_{\mathcal{I}, d}]$. 
    We show this via induction on $|\mathcal{I}|$.
    When $|\mathcal{I}| = 1$, splitting further incurs infinite cost, so $LB[o_{\mathcal{I}, d}] = UB[o_{\mathcal{I}, d}] = \texttt{cost}(o_{\mathcal{I}, d}, t_{\mathcal{I}, d})$.
    When $|\mathcal{I}| > 1$, if every OR node descendant of $o_{\mathcal{I}, d}$ is in $\mathcal{E}$, then $o_{\mathcal{I}, d} \in \mathcal{E}$ must also hold. Since the OR node descendants of the children of $o_{\mathcal{I}, d}$ must be in $\mathcal{E}$ as well, they must all have matching $UB$ and $LB$ by inductive hypothesis, meaning $LB[o_{\mathcal{I}, d}] = UB[o_{\mathcal{I}, d}]$.
\end{proof}

\begin{lemma}
    \label{lem:expands_node}
    If $LB[r] < UB[r]$, then \texttt{findNodeToExpand} returns an unexpanded OR node $o \not\in \mathcal{E}$.
\end{lemma}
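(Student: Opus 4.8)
The plan is to prove a loop invariant for the \texttt{while} loop of \texttt{findNodeToExpand}: at the start of every iteration, the current node $o$ is an OR node with $LB[o] < UB[o]$. Granting this, the lemma follows at once. The loop exits only when $o \notin \mathcal{E}$, so the returned node is unexpanded; and the loop never assigns to $o$ anything other than an OR node (it starts at the OR root $r$ and each iteration moves to one of the two OR children $o^*_0, o^*_1$ of an AND node), so the returned node is an OR node. It remains to argue termination: by Step 6 of Definition \ref{def:bcart-andor-graph}, every surviving AND node $a_{\mathcal{I},d,f}$ has two \emph{nonempty} OR children whose index sets partition $\mathcal{I}$, so one descent step strictly decreases $|\mathcal{I}|$; since index sets are nonempty, the loop runs for at most $N-1$ iterations. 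The invariant also makes the loop body well defined: if $o$ had no AND child (equivalently $\mathcal{V}(\mathcal{I}(o)) = \emptyset$), then $o$'s only child is its terminal $t$ and $LB[o] = UB[o] = \texttt{cost}(o,t)$, contradicting the invariant; hence the $\arg\min$ on line~4 ranges over a nonempty set.

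For the invariant, the base case is the hypothesis $LB[r] < UB[r]$. For the inductive step, assume the current node $o$ is in $\mathcal{E}$ (otherwise the loop exits) and $LB[o] < UB[o]$. Since $o$ has been expanded, \texttt{updateLowerBounds} and \texttt{updateUpperBounds} maintain the recurrences
\begin{align*}
LB[o] &= \min\Big\{\texttt{cost}(o,t),\ \min_{f}\big(\texttt{cost}(o,a_f) + LB[a_f]\big)\Big\}, \\
UB[o] &= \min\Big\{\texttt{cost}(o,t),\ \min_{f}\big(\texttt{cost}(o,a_f) + UB[a_f]\big)\Big\},
\end{align*}
and by Lemmas \ref{lem:correct_lower_bound} and \ref{lem:correctness_of_ub} each child $c$ satisfies $LB[c] \le UB[c]$ (both sandwich the true value; here $UB[c] = +\infty$ when $c$ has no partial solution in $\mathcal{G}'$). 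Write $A = \texttt{cost}(o,t)$, $B_f = \texttt{cost}(o,a_f) + LB[a_f]$, and $C_f = \texttt{cost}(o,a_f) + UB[a_f] \ge B_f$. If $\min_f B_f \ge A$ then $LB[o] = A \ge UB[o]$, contradicting the gap; hence $\min_f B_f < A$, so $LB[o] = \min_f B_f$, and then $\min_f B_f < UB[o] \le \min_f C_f$. Now let $a^* = a_{f^*}$ with $f^* \in \arg\min_f B_f$ — precisely the node chosen on line~4. Then $C_{f^*} \ge \min_f C_f > \min_f B_f = B_{f^*}$, so $UB[a^*] > LB[a^*]$. Since the edges out of an AND node have cost $0$, $LB[a^*] = LB[o^*_0] + LB[o^*_1]$ and $UB[a^*] = UB[o^*_0] + UB[o^*_1]$, hence $LB[o^*_0] < UB[o^*_0]$ or $LB[o^*_1] < UB[o^*_1]$ (using $LB \le UB$ termwise). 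Therefore $\max\{UB[o^*_0] - LB[o^*_0],\, UB[o^*_1] - LB[o^*_1]\} > 0$, and the subsequent conditional assigns to $o$ the child achieving the larger gap, which consequently satisfies $LB[o] < UB[o]$. This closes the induction.

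The main obstacle I anticipate is the inductive step, and specifically justifying the two bound recurrences above: one must verify that \texttt{updateLowerBounds} / \texttt{updateUpperBounds} restore them after every expansion, together with the bookkeeping that $UB[a] = UB[o^*_0] + UB[o^*_1]$ for an AND node (with the $+\infty$ convention when a child lacks a partial solution) — this is the part the pseudocode leaves implicit. Once those invariants are available, the inequality chain is routine and termination is immediate from the strictly decreasing $|\mathcal{I}|$. I would also verify the degenerate case where both OR children of $a^*$ are still unexpanded: there $UB[o^*_0] = UB[o^*_1] = +\infty$, the tie on line~6 sends $o$ to $o^*_1 \notin \mathcal{E}$, and the loop exits on the next test, consistent with the claimed conclusion.
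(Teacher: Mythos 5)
Your proof is correct and follows the same idea as the paper's one-sentence argument for this lemma --- namely that the min-$LB$ AND-child / max-gap OR-child selection rule propagates a nonzero $LB$--$UB$ gap down the descent until an unexpanded OR node is reached. Your version is considerably more complete than the paper's: you make explicit the bound recurrences maintained by \texttt{updateLowerBounds}/\texttt{updateUpperBounds}, the gap-propagation loop invariant, and the termination of the descent (via strictly decreasing $|\mathcal{I}|$), none of which the paper spells out (it instead cites Lemma \ref{lem:gap_lb_ub}, which your self-contained invariant argument does not need).
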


\begin{proof}
    This follows from Lemma \ref{lem:gap_lb_ub} and that \texttt{findNodeToExpand} selects an AND node with the lowest lower bound and the child of this AND node with the largest gap between $LB$ and $UB$, meaning a nonzero gap is chosen when one exists.
\end{proof}

\begin{proof}[Proof of Theorem \ref{thm:finiteness}]
    Lemma $\ref{lem:gap_lb_ub}$ gives us that $LB[r] = UB[r]$ must hold upon exhaustive exploration of the search space. From Lemma \ref{lem:expands_node}, we have that \algname will always expand a new node every iteration that \algname has not completed. Since $\mathcal{G}$ is finite, as discussed after Definition \ref{def:bcart-andor-graph}, it follows that \algname must eventually complete.
\end{proof}




    

\begin{proof}[Proof of Theorem \ref{thm:correctness}]
    This follows directly from Theorem \ref{thm:finiteness}, Lemma \ref{lem:correctness_of_ub}, and Lemma \ref{lem:correct_lower_bound}.
\end{proof}



\section{Experiment Details and Additional Experiments}
\label{app:add_experiments}

\subsection{Experiment Details}
\label{app:exp_details}

In Section \ref{sec:exps}, we compared the performance of \algname against various state-of-the-art baselines.
In this subsection, we describe those baselines and the experiments in more detail.

\subsubsection{Speed Comparisons against MCMC and SMC}
In this set of experiments in Section \ref{sec:exps}, we compared against the Sequential Monte Carlo (SMC) and Markov-Chain Monte Carlo (MCMC) methods \cite{smc} which sample from the BCART posterior from \citet{chipmanBayesianCARTModel1998}.
We used the posterior distribution hyperparameters for each algorithm specified in \citet{smc}. To gather results for each baseline with varying times, we set the number of islands in SMC to 10, 30, 100, 300, and 1000 and the number of iterations for MCMC to 10, 30, 100, 300, and 1000. For each of these settings we ran 10 iterations with different random seeds and recorded the average time across iterations, the mean log posterior of the tree with the highest log posterior discovered in each run, and a 95\% bootstrapped confidence interval of the average of highest log posteriors discovered across all runs. \algname was run with number of expansions limited to 10, 30, 100, 300, and 1000, 3000, 10000, 30000, and 100000. We also ran one additional run of MAPTree with a 10 minute time limit. Since \algname is a deterministic algorithm, we did a single run for each of these settings, recording the time and log posterior of the returned tree.
Runs which ran out of memory on our computing cluster with 16 GB of RAM in the time limit were discarded.

Note that, given that \algnamenospace, the SMC baseline, and MCMC baseline all explore the same posterior, measuring their relative generalization performance would not be meaningful; it is more meaningful to measure how quickly they can explore the posterior $P(T|\mathcal{X}, \mathcal{Y})$ to discover the maximum a posteriori tree.
Given infinite runtime, all three algorithms (\algnamenospace, SMC, and MCMC) should recover the maximum a posteriori tree from the BCART posterior.
However, recent work has proven that algorithms such as SMC and MCMC experience long mixing times on the BCART posterior \cite{kimMixingRatesBayesian2023}. 
Our experiments are consistent with these observations; we find that \algname is able to find higher likelihood trees faster than the SMC and MCMC baselines in most datasets.
Furthermore, in 5 of the 16 datasets, \algname is able to recover a maximum a posteriori tree and provide a certificate of optimality.

\subsubsection{Fitting a Synthetic Dataset}
\label{sec:synth-app}
In the remaining two sets of experiments, we compare \algnamenospace's generalization performance and model size to baseline state-of-the-art ODT and OSDT algorithms.
ODT algorithms search for trees which minimize misclassification error given a maximum depth.
OSDT algorithms search for trees which minimize the same objective but with an added per-leaf sparsity penalty in lieu of a hard depth constraint.
We use DL8.5 \cite{aglinLearningOptimalDecision2020} as our baseline ODT algorithm and GOSDT \cite{linGeneralizedScalableOptimal2020} as our baseline OSDT algorithm.
Note that these algorithms maximize different objectives than \algname and do not explicitly explore the posterior $P(T|\mathcal{X}, \mathcal{Y})$, so we are primarily interested in the different methods' generalization performance.
We also use CART with constrained depth \cite{breimanClassificationRegressionTrees1984}, as the baseline representative of greedy, top-down approaches.
All of these baselines are sensitive to their choices of hyperparameters, in particular the maximum depth for CART and DL8.5, and sparsity penalty for GOSDT.
We experimented with these hyperparameters to find the best-performing ones for each baseline algorithms, and presented the results for representative settings in Section \ref{sec:exps}.
In particular, we chose maximum depth $4$ for CART, maximum depth $4-5$ for DL8.5, and sparsity penalties $\frac{10}{32}$ and $\frac{1}{32}$ for GOSDT.
These two sparsity penalties were taken from \cite{linGeneralizedScalableOptimal2020}: the former was used in evaluation of GOSDT's speed and the latter was used in evaluation of its accuracy.
Our experiments set a time limit of 1 minute across all algorithms; the best tree discovered by each algorithm within this time limit was recorded.

The synthetic dataset was created via the process described in Section \ref{sec:exps}.


\subsubsection{Accuracy, Likelihood and Size Comparisons on Real World Benchmarks}

In this set of experiments, we compared \algname to the baseline algorithms as described in Appendix \ref{sec:synth-app} on the CP4IM datasets, and the hyperparameters for all algorithms were set to the same values.
Again, our experiments set a time limit of 1 minute across all algorithms; the best tree discovered by each algorithm within this time limit was recorded.

The metrics we measure are the per-sample test log likelihood and test accuracy (relative to the performance of CART), and the total number of nodes in the trained tree.
We performed stratified 10-fold cross validation on each dataset and recorded the average value across folds for each metric.
The average per-sample test log likelihood and test accuracy of CART with maximum depth 4 was subtracted from all baselines on each dataset to get the relative per-sample test log likelihood and test accuracy.
The plots in Figure \ref{fig:ac_ll_size} are box-and-whisker plots of the metric values across all 16 datasets, where each box represents the $25$th to $75$th percentile, whiskers extend out to at most $1.5\times$ the size of the box body, and the remaining points are marked as outliers.

\subsection{Additional Experiments}

In this subsection, we present additional experimental results that were omitted from the main paper due to space constraints.

Figure \ref{fig:synthetic} shows that \algname generates trees which out-perform both the greedy, top-down approaches and ODT methods in test accuracy for various training dataset sizes
and values of label corruption proportion $\epsilon$.

\begin{figure}
    \centering
    \includegraphics[width=\linewidth]{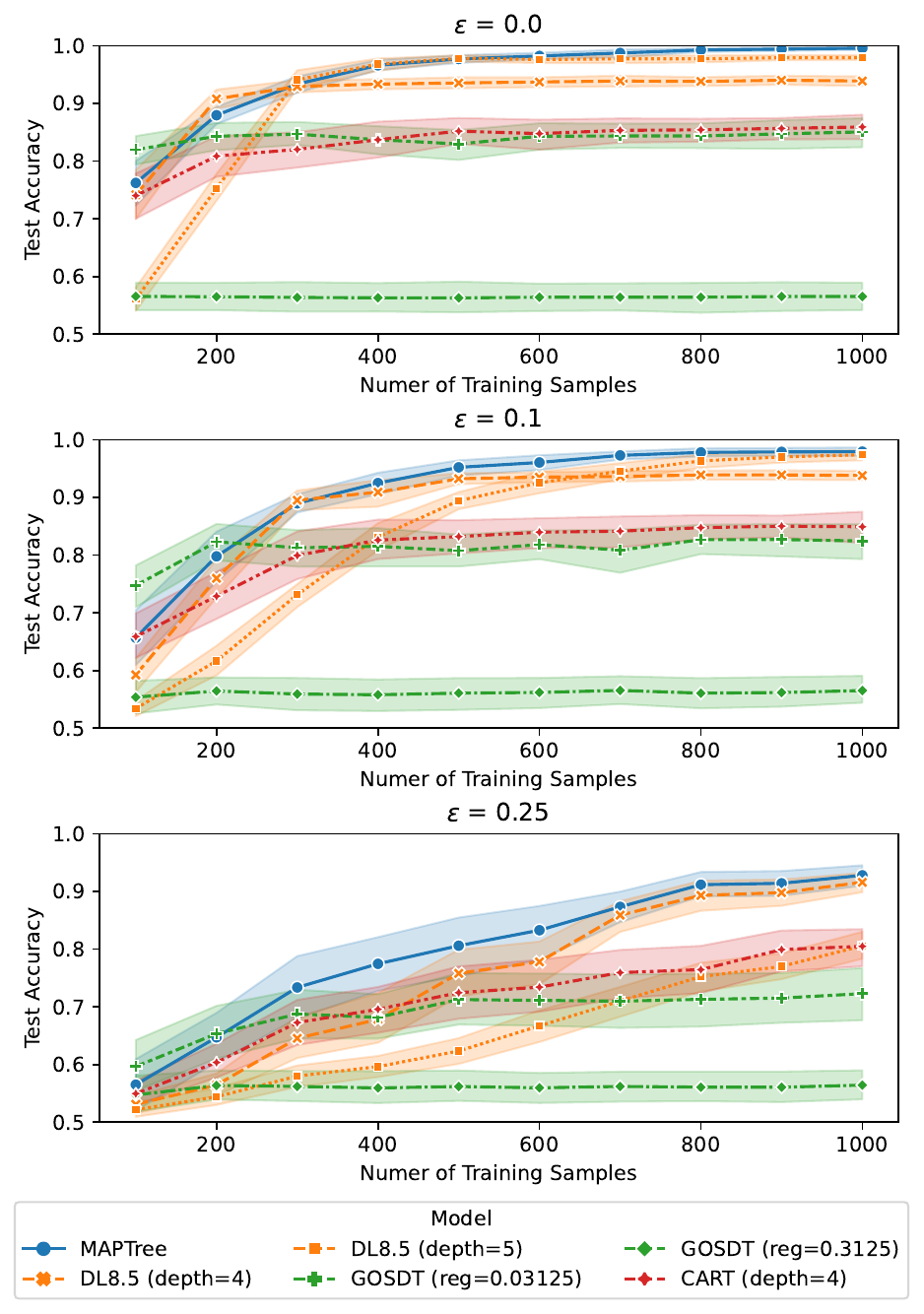}
    \caption{Test accuracy of \algname and various baseline algorithms as a function of training dataset size on the synthetic dataset, for different values of noise, $\epsilon$. \algname generates trees which outperform both the greedy, top-down approaches and ODT methods in test accuracy for various training dataset sizes and values of label corruption proportion $\epsilon$. 95\% confidence intervals are obtained via bootstrapping the results of 20 synthetic datasets generated independently at random.}
    \label{fig:synthetic}
\end{figure}

\subsubsection{Speed Comparisons against MCMC and SMC (Full)}

In this subsection, we include the results of the speed comparisons of \algname with SMC and MCMC on the remaining 12 datasets of the CP4IM dataset that were not presented in Section \ref{sec:exps} due to space constraints.
Figure \ref{fig:speed-full} demonstrates a similar trend on the additional datasets as was demonstrated by Figure \ref{fig:speed}, namely that \algname generally outperforms SMC and MCMC and is able to find trees with higher log posterior faster than the baseline algorithms.

\begin{figure*}
    \centering
    \includegraphics[width=\linewidth]{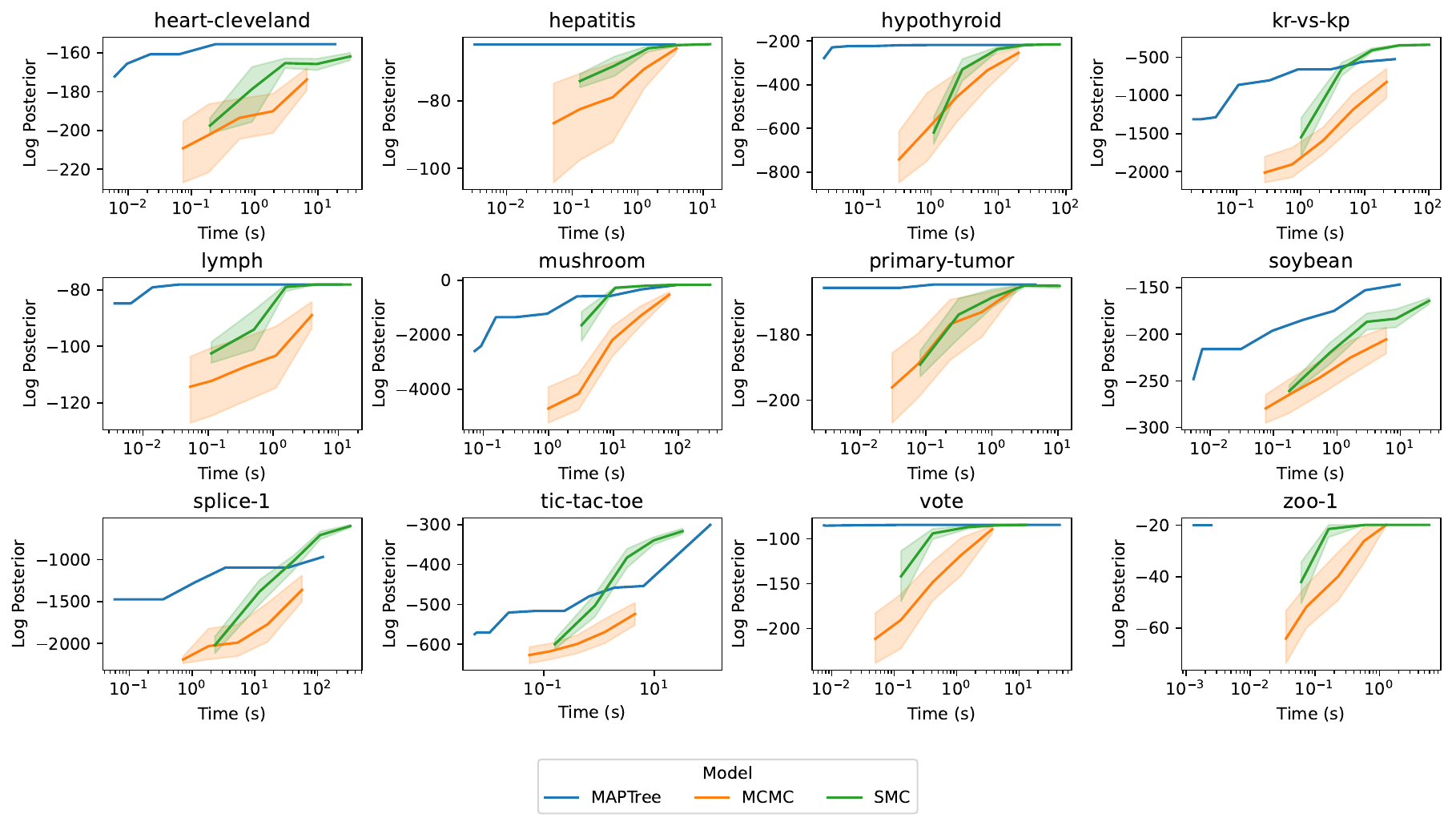}
    \caption{Comparison of \algnamenospace, SMC, and MCMC on 12 datasets. Curves are created by modifying the hyperparameters for each algorithm and measuring training time and log posterior of the data under the tree. Higher and further left is better, i.e., better log posteriors in less time. In 12 of the 16 datasets, \algname outperforms SMC and MCMC and is able to find trees with higher log posterior faster than the baseline algorithms. Furthermore, in 5 of the 16 datasets, \algname converges to the provably optimal tree, i.e., the maximum a posteriori tree. 95\% confidence intervals are obtained by bootstrapping the results of 10 random seeds and time is averaged across the 10 seeds.}
    \label{fig:speed-full}
\end{figure*}

\subsubsection{Fitting a Synthetic Dataset (Full)}

In this subsection, we include the results of the synthetic data experiment against benchmarks with a more exhaustive list of hyperparameters, which we omitted in Section \ref{sec:exps} due to space constraints.
Figure \ref{fig:synth-dl85}, \ref{fig:synth-gosdt}, and \ref{fig:synth-cart} demonstrate a similar trends as in Figure \ref{fig:synthetic}, namely that \algname generally the baseline algorithms with less training data, and is more robust to label noise than the baselines.

\begin{figure}
    \centering
    \includegraphics[width=\linewidth]{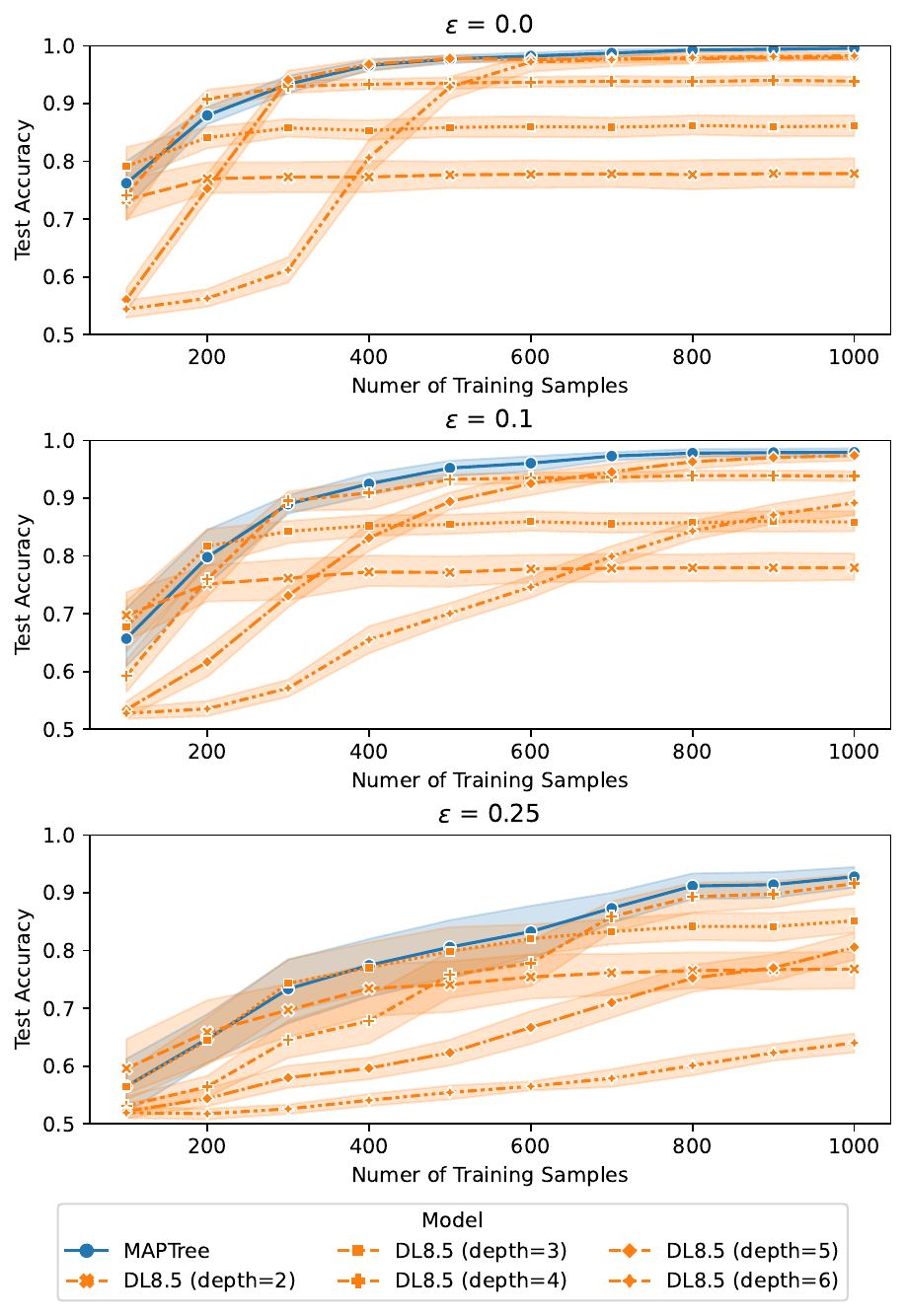}
    \caption{Test accuracy of \algname and DL8.5, for various hyperparameter settings of DL8.5, as a function of training dataset size on the synthetic dataset, for different values of noise, $\epsilon$. \algname generates trees which outperform DL8.5 for various training dataset sizes and values of label corruption proportion $\epsilon$. 95\% confidence intervals are derived by bootstrapping the results across 20 synthetic datasets generated independently at random.}
    \label{fig:synth-dl85}
\end{figure}

\begin{figure}
    \centering
    \includegraphics[width=\linewidth]{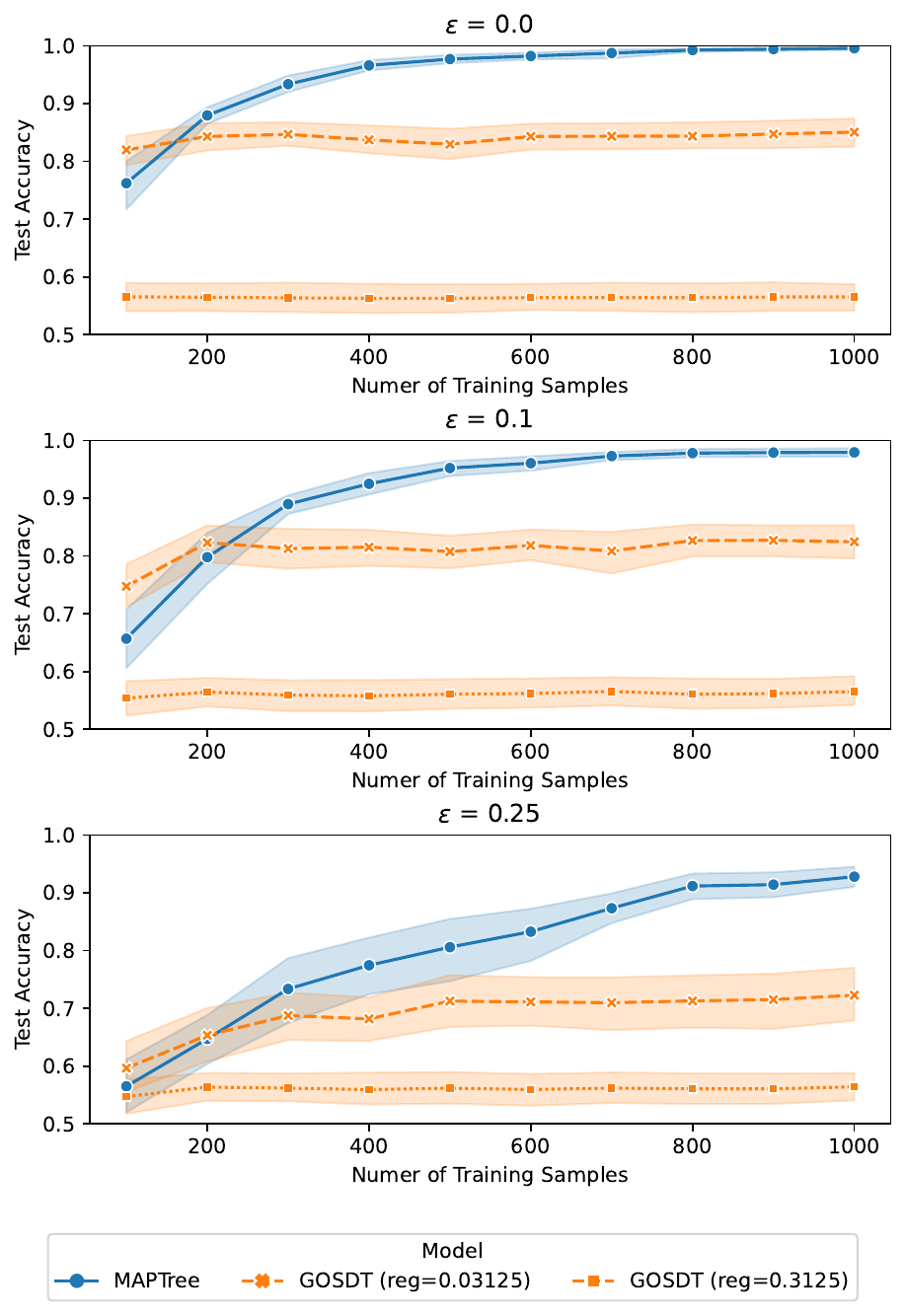}
    \caption{Test accuracy of \algname and GOSDT, for various hyperparameter settings of GOSDT, as a function of training dataset size on the synthetic dataset, for different values of noise, $\epsilon$. \algname generates trees which outperform GOSDT for various training dataset sizes and values of label corruption proportion $\epsilon$. 95\% confidence intervals are derived by bootstrapping the results across 20 synthetic datasets generated independently at random.}
    \label{fig:synth-gosdt}
\end{figure}

\begin{figure}
    \centering
    \includegraphics[width=\linewidth]{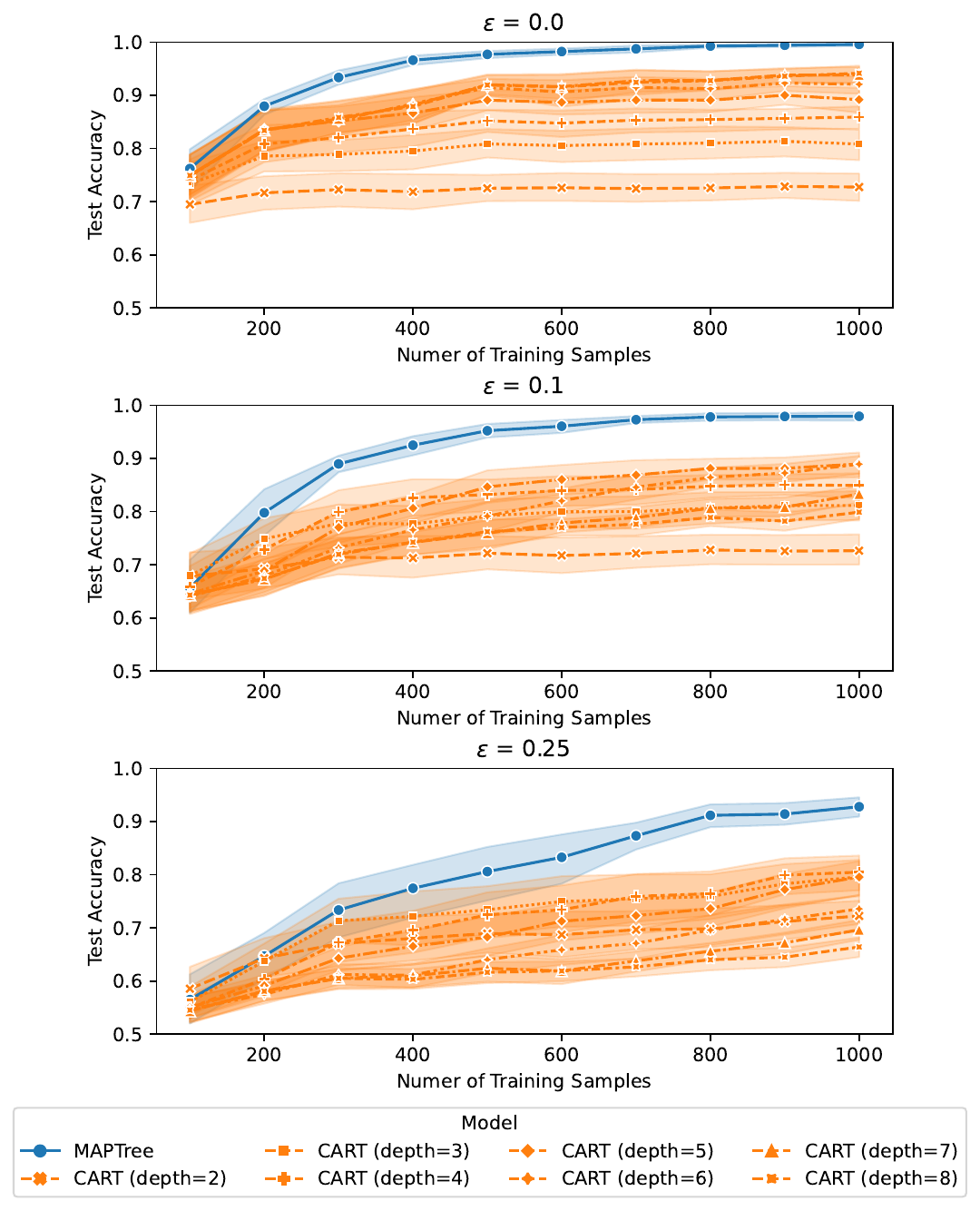}
    \caption{Test accuracy of \algname and CART, for various hyperparameter settings of CART, as a function of training dataset size on the synthetic dataset, for different values of noise, $\epsilon$. \algname generates trees which outperform CART for various training dataset sizes and values of label corruption proportion $\epsilon$. 95\% confidence intervals are derived by bootstrapping the results across 20 synthetic datasets generated independently at random.}
    \label{fig:synth-cart}
\end{figure}

\subsubsection{Accuracy, Likelihood and Size Comparison on Real World Benchmarks (Full)}

In this subsection, we include the results of accuracy, likelihood, and size comparisons against benchmarks with a more exhaustive list hyperparameter settings, which we omitted in Section \ref{sec:exps} due to space constraints.
Figure \ref{fig:ac_ll_size_full} demonstrates a similar trend as in Figure \ref{fig:ac_ll_size}, namely that \algname generally either a) outperforms the baseline algorithms in generalization performance, or b) performs comparably but with smaller trees. Further, we observe that CART and DL8.5 are sensitive to their hyperparameter settings: at higher maximum depths, both algorithms output much larger trees that do not perform any better than their shallower counterparts.

\begin{figure*}
    \centering
    \includegraphics[width=\linewidth]{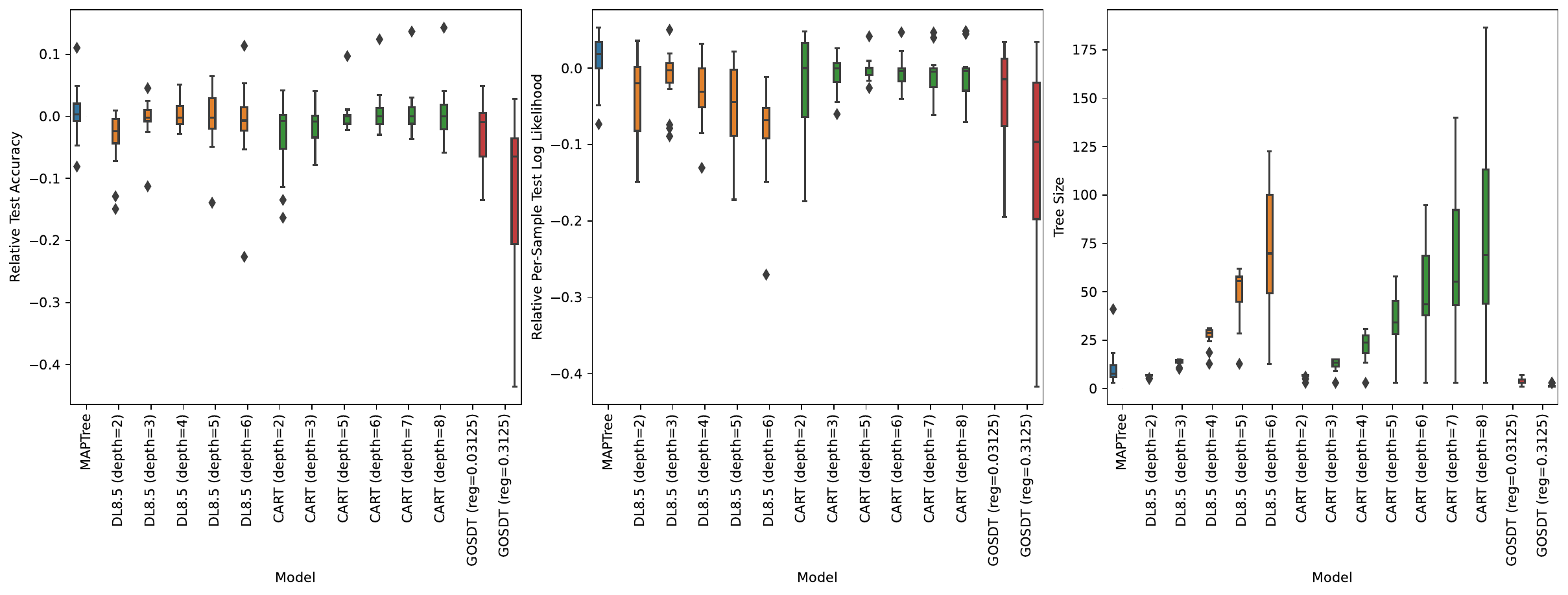}
    \caption{We run \algname and benchmarks with a more exhaustive list of hyperparameter settings on the 16 real world datasets from the CP4IM dataset \cite{gunsItemsetMiningConstraint2011}. Higher is better for the left and center subplots, and lower is better for the right subplot.}
    \label{fig:ac_ll_size_full}
\end{figure*}

\subsubsection{Hyperparameters of \algname}

In this subsection, We demonstrate that \algname is not sensitive to the choice of hyperparameters $\alpha$ and $\beta$.
We run \algname on all 16 benchmark datasets from CP4IM \cite{gunsItemsetMiningConstraint2011} with seven different hyperparameter settings of the prior distribution used in \algnamenospace:

\begin{enumerate}
    \item[0.] $\alpha = 0.999, \beta = 0.1$
    \item[1.] $\alpha = 0.99, \beta = 0.2$
    \item[2.] $\alpha = 0.95, \beta = 0.5$
    \item[3.] $\alpha = 0.9, \beta = 1.0$
    \item[4.] $\alpha = 0.8, \beta = 2.0$
    \item[5.] $\alpha = 0.5, \beta = 4.0$
    \item[6.] $\alpha = 0.2, \beta = 8.0$
\end{enumerate}

These prior specifications were chosen to cover a range of the hyperparameters $\alpha$ and $\beta$ that induce \algname to have different priors over the probability of splitting.
We expect the size of trees generated to be higher for earlier priors and lower for later priors. 
Relative to the first prior specification, the last prior specification assumes over $1000\times$ lower probability of splitting at depth $1$ a priori.

We measure the test accuracy relative to CART, the per-sample test log likelihood relative to CART, and the model size of the trees generated by \algname with the different hyperparameter settings.
Figure \ref{fig:hyperparams} demonstrates our results. 
\algname does not show significant sensitivity to its hyperparameters across any metric.

\begin{figure}
    \centering
    \includegraphics[width=\linewidth]{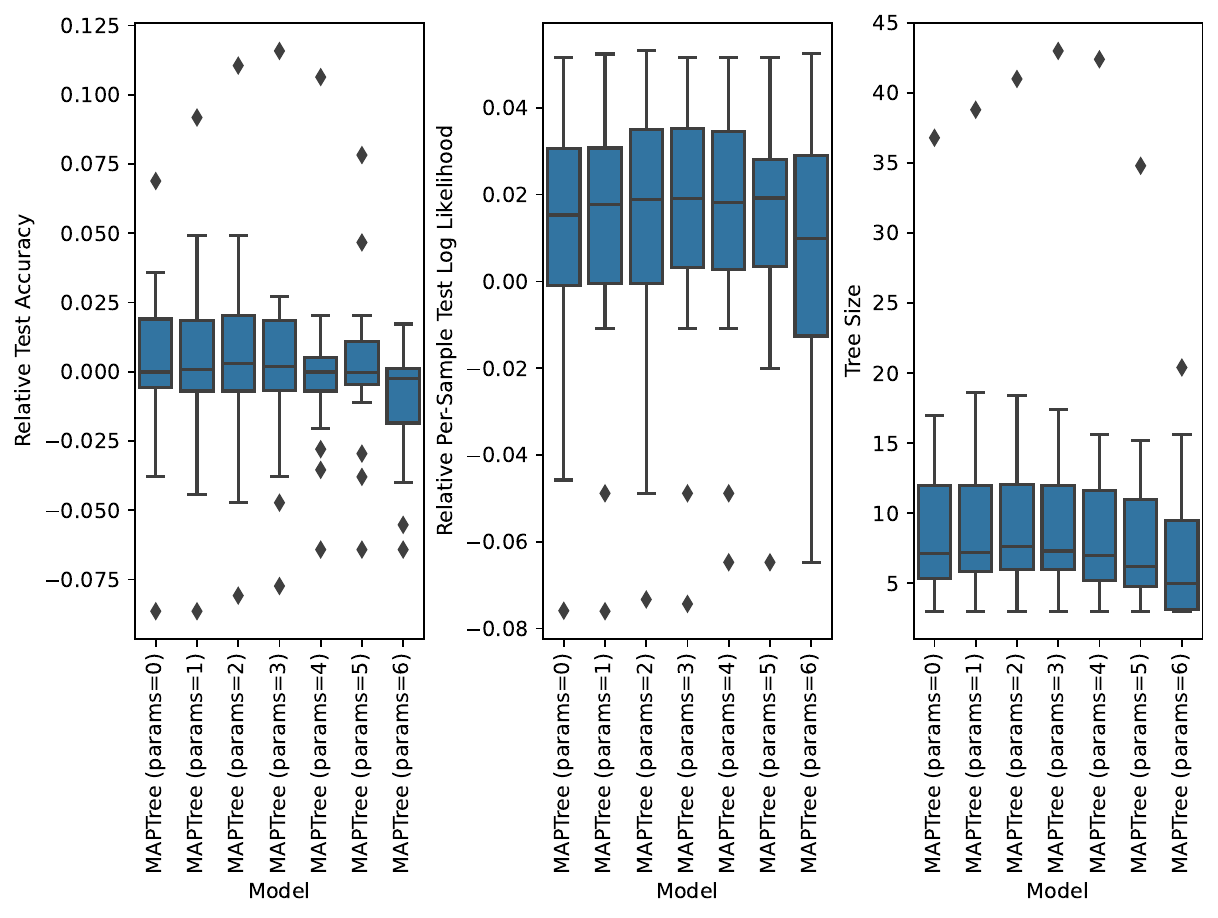}
    \caption{We run \algname on the 16 real world datasets from CP4IM \cite{gunsItemsetMiningConstraint2011} for various hyperparameter settings. We find that the trees generated by \algname are not sensitive to the exact values of hyperparameters.}
    \label{fig:hyperparams}
\end{figure}

\section{Implementation Details}
\label{app:implementation}

\subsection{Reversible Sparse Bitset}
In order to efficiently explore the search space $\mathcal{G}_{\mathcal{X}, \mathcal{Y}}$, described in Definition \ref{def:bcart-andor-graph}, \algname must be able to compactly represent subproblems and move between them efficiently.
Subproblems in $\mathcal{G}_{\mathcal{X}, \mathcal{Y}}$ correspond with a subset $\mathcal{I}$ of samples from the dataset $\mathcal{X}, \mathcal{Y}$ as well as a given depth.
We represent these subproblems using a reversible sparse bitset, a data structure also used in DL8.5 \cite{aglinLearningOptimalDecision2020}.
Reversible sparse bitsets represent $\mathcal{I}$ as a list of indexed bitstring blocks that correspond with nonempty subarrays of the current bitset.
These blocks also record a history of their previous values, allowing us to efficiently ``reverse'' a bitset to its previous value before branching into another region of the search space.
For more details on reversible sparse bitsets, we direct the reader to \citet{ijcai2018p192}.

\subsection{Caching Subproblems}
It is also necessary to identify equivalent subproblems in our search.
To do this, we cache each explored subproblem $o_{\mathcal{I}, d}$.
Previous ODT algorithms cache $(\mathcal{I}, d)$ explicitly or the path of splits used to reach the subproblem: loosely, $\text{path}(o_{\mathcal{I}, d})$ \cite{demirovicMurTreeOptimalDecision2022, aglinLearningOptimalDecision2020}.
The former approach takes up $O(N)$ memory per subproblem whereas the latter uses less memory but does not identify all equivalences between subproblems (i.e. two paths may result in the same subset of datapoints), meaning it is slower.
In \algnamenospace, we provide a probably correct cache which takes $O(1)$ memory per subproblem and always identifies equivalent subproblems.
The cache stores the depth $d$ and constructs a 128-bit hash of $\mathcal{I}$ for each subproblem $o_{\mathcal{I}, d}$.
We describe the 128-bit hashing function in Algorithm \ref{alg:hash}.

\begin{algorithm}
\caption{\texttt{subsetHash}}
\label{alg:hash}
\textbf{Input}: Subset of sample indices $\mathcal{I}$ \\
\textbf{Output}: 128-bit hash value $h$

\begin{algorithmic}[1] 
\STATE Let $\texttt{bitset}$ be a bitset of size $N$
\FORALL{$i \in [N]$}
    \STATE $\texttt{bitset}[i] := i \in \mathcal{I}$
\ENDFOR

\STATE Let $h_1, h_2$ be 64-bit integers.
\FORALL{$b \in [N/64]$}
    \STATE Let $\texttt{block}$ be $\texttt{bitset}[64b:64(b + 1)]$
    \STATE $h_1 := h_1 + \texttt{block} \times (377424577268497867)^b$
    \STATE $h_2 := h_2 + \texttt{block} \times (285989758769553131)^b$
\ENDFOR

\RETURN $(h_1, h_2)$
\end{algorithmic}
\end{algorithm}


\end{document}